\documentclass{article}

\usepackage{microtype}
\usepackage{graphicx}
\usepackage{subcaption}
\usepackage{booktabs} 

\usepackage{hyperref}

\usepackage[accepted]{icml2026}

\usepackage{amsmath}
\usepackage{amssymb}
\usepackage{mathtools}
\usepackage{amsthm}
\usepackage{booktabs} 
\usepackage[dvipsnames]{xcolor}
\usepackage{pifont}     
\usepackage{comment}
\usepackage{tikz}
\usetikzlibrary{arrows.meta, positioning, fit}
\usepackage{tabularx} 
\usetikzlibrary{calc} 
\usepackage{xcolor}

\definecolor{MutedBlue}{RGB}{0,90,190}
\definecolor{MutedRed}{RGB}{200,30,0}

\usepackage[capitalize,noabbrev]{cleveref}

\theoremstyle{plain}
\newtheorem{theorem}{Theorem}
\newtheorem{proposition}[theorem]{Proposition}

\theoremstyle{definition}

\theoremstyle{remark}

\newcommand{\mypar}[1]{\noindent\textbf{#1}}
\definecolor{darkblue}{rgb}{0.1, 0.1, 0.65}

\usepackage[textsize=tiny]{todonotes}

\icmltitlerunning{Mixture of Concept Bottleneck Experts}

\begin{document}

\twocolumn[
  \icmltitle{Mixture of Concept Bottleneck Experts}



  \icmlsetsymbol{equal}{*}

  \begin{icmlauthorlist}
    \icmlauthor{Francesco De Santis}{polito}
    \icmlauthor{Gabriele Ciravegna}{ispic}
    \icmlauthor{Giovanni De Felice}{usi}
    \icmlauthor{Arianna Casanova}{unili}
    \icmlauthor{Francesco Giannini}{unipi}
    \icmlauthor{Michelangelo Diligenti}{unisi}
    \icmlauthor{Johannes Schneider}{unili}
    \icmlauthor{Danilo Giordano}{polito}
    \icmlauthor{Mateo Espinosa Zarlenga}{ox}
    \icmlauthor{Pietro Barbiero}{ibm}
  \end{icmlauthorlist}

  \icmlaffiliation{polito}{Polytechnic of Torino}
  \icmlaffiliation{usi}{USI}
  \icmlaffiliation{ispic}{Intesa Sanpaolo Innovation Center}
  \icmlaffiliation{ibm}{IBM Research}
  \icmlaffiliation{ox}{University of Oxford}
  \icmlaffiliation{unipi}{University of Pisa}
  \icmlaffiliation{unili}{University of Liechtenstein}
  \icmlaffiliation{unisi}{University of Siena}

  \icmlcorrespondingauthor{Francesco De Santis}{francesco.desantis@polito.it}

  \icmlkeywords{Machine Learning, ICML, Concept Bottleneck Models, Interpretability, XAI}

  \vskip 0.3in
]

\newcommand{\xmark}{\textcolor{red}{\ding{56}}} 
\newcommand{\vmark}{\textcolor{ForestGreen}{\ding{52}}} 
\definecolor{Teal}{rgb}{0.0, 0.5, 0.5}
\newcommand{\fds}[1]{\textcolor{Red}{FraDS: #1}}
\newcommand{\gc}[1]{\textcolor{Orange}{Gab: #1}}
\newcommand{\gio}[1]{\textcolor{Purple}{Gio: #1}}
\newcommand{\ari}[1]{\textcolor{Blue}{Ari: #1}}
\newcommand{\pb}[1]{\textcolor{ForestGreen}{Pie: #1}}
\newcommand{\mez}[1]{\textcolor{Magenta}{Mat: #1}}
\newcommand{\md}[1]{\textcolor{Teal}{MD: #1}}
\newcommand{\fg}[1]{\textcolor{Brown}{FraG: #1}}
\newcommand{\js}[1]{\textcolor{Cyan}{Sc: #1}}

\theoremstyle{definition}

\newcommand{\class}{Mixture of Concept Bottleneck Experts}
\newcommand{\classacro}{M-CBEs}
\newcommand{\classacrosingular}{M-CBE}
\newcommand{\linear}{Linear M-CBE}
\newcommand{\linearacro}{Lin-M-CBE}

\newcommand{\symbolic}{Symbolic M-CBE}
\newcommand{\symbolicacro}{Sym-M-CBE}

\newcommand{\prior}{Prior-M-CBE}
\newcommand{\mlp}{MLP-M-CBE}
\newcommand{\kan}{Kan-M-CBE}

\newcommand{\ai}{accuracy-interpretability}

\newcommand{\modelname}{Mechanistic CBMs}


\newcommand{\mc}{\mathcal }



\printAffiliationsAndNotice{}  


\begin{abstract}

Concept Bottleneck Models (CBMs) promote interpretability by grounding predictions in human-understandable concepts. However, existing CBMs typically constrain their task predictor to a single expression whose functional form is set a priori, limiting both predictive accuracy and adaptability to diverse user needs. We propose \class{} (\classacro{}), a framework that generalizes existing CBMs along two dimensions: the number of expressions, referred to as experts, employed by the task predictor to map concepts to the task, and the functional form each expression takes, thus exposing an underexplored region of this design space. We investigate this region by instantiating two novel models: \linear{}, which learns a finite set of linear expressions, and \symbolic{}, which leverages symbolic regression to discover expert functions from data subject to user-specified operator vocabularies. Empirical evaluation demonstrates that varying the number of expressions and their functional form provides a robust framework for navigating the accuracy-interpretability trade-off.

\end{abstract}

\section{Introduction}

In recent years, Deep Learning (DL) models have achieved remarkable performance across a wide range of tasks, yet their growing complexity and opacity~\citep{rudin2019stop,colombini2025mathematical} prevent their adoption in high-stakes domains where transparency is essential~\citep{gdpr2017,duran2021afraid,act2024eu}. 
Concept Bottleneck Models (CBMs)~\citep{koh2020concept} have emerged as a promising approach to align models with human reasoning. CBMs decompose prediction into two stages: a \emph{concept encoder} that maps raw inputs to human-interpretable variables, called \emph{concepts} (e.g., ``\textit{round}'', ``\textit{red}''), and a \emph{task predictor} that maps these concepts into the task variable. 

\begin{figure}[t]
    \centering
    \includegraphics[trim=1cm
    0.5cm 0.6cm 0.5cm, clip, width=1\linewidth]{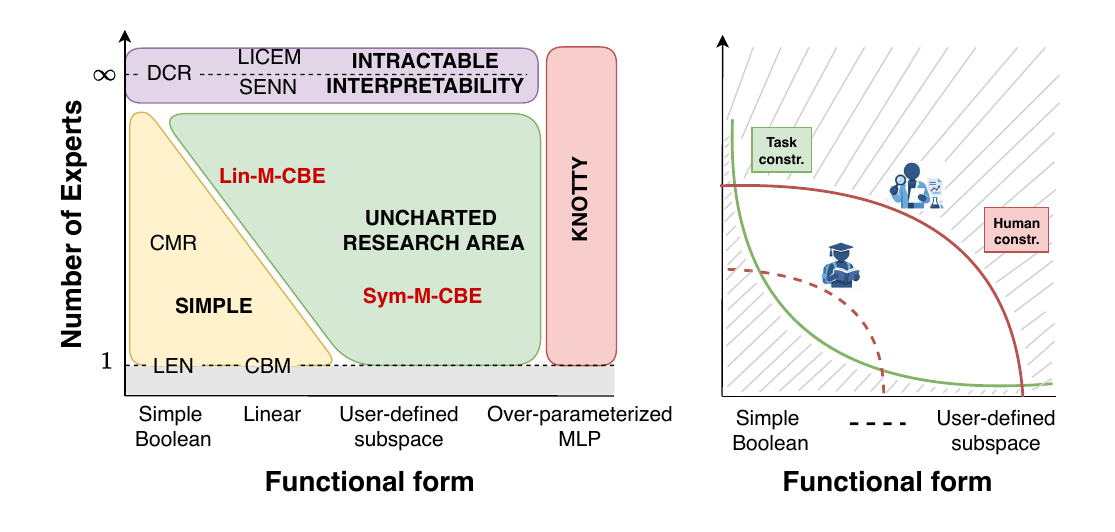}
    \caption{\textbf{Left}: The plane defined by functional form and number of experts. Different CBMs occupy distinct positions in this space, with regions highlighted in different colors. We indicate our newly proposed instantiations in red. \textbf{Right}: The red curve denotes human constraints, and the green curve denotes task constraints. The region bounded by the two curves represents the set of feasible models.}
    \label{fig:viz_abstract}
\end{figure}

While substantial effort has been devoted to defining and learning meaningful concepts ~\citep{oikarinen2023label,dominici2025causal,de2025causally}, little attention has been paid to the design of the task predictor. 
Indeed, most CBMs constrain the task predictor to provide predictions through a unique, global function~\citep{koh2020concept,vandenhirtz2024stochastic,ciravegna2023logic,marconato2022glancenets}, restricting expressiveness and often failing to capture the true concept-to-task generating process~\citep{mahinpei2021promises,zarlenga2022concept}. 
A promising way to improve the expressiveness of task predictors is to use multiple specialized functions, as in Mixture of Experts (MoE) models~\citep{jacobs1991adaptive,shazeer2017outrageously}. 

Traditionally, MoE models have been designed to route inputs to experts so as to maximize predictive performance under computational constraints~\citep{lepikhin2020gshard,artetxe2021efficient}. However, in an interpretability-oriented framework, this objective naturally translates into jointly optimizing predictive performance and interpretability.
However, transposing MoEs to this new setting requires more than just expert routing; it demands explicit control over the \emph{functional form} of the experts, the class of expressions mapping concepts to targets (e.g., Boolean or linear). Since interpretability is inherently user-dependent~\citep{lipton2018mythos,miller2019explanation,gkintoni2025challenging}, fixing the functional form \emph{a priori} constrains interpretability and may unnecessarily sacrifice predictive performance, as we show in Section~\ref{sec:exp_sec}. Consequently, enabling flexible control over both the functional form and the number of experts is central for achieving the optimal \ai{} trade-off.

\mypar{Contributions.}
We introduce a new framework, \class{} (\classacro), which generalizes CBMs by modeling the task predictor as a mixture of specialized functions (experts) with controllable functional forms. We adopt expression trees as our modeling abstraction, for a rigorous yet flexible definition of each function. 
We show that several existing concept-based models are special cases of our framework, while a substantial area remains unexplored (Figure~\ref{fig:viz_abstract}, Left). To address this, we propose the \linear{} model, which generalizes the linear CBM to multiple experts. 
Beyond instantiating specific functions, \classacro{} can be specified with the set of operators the user understands  (e.g., $+, \times, \exp$).
We demonstrate this capability with the \symbolic{} model, which leverages symbolic regression to automatically discover the optimal expert functions using user-defined operators. 

We empirically validate \classacro{} on classification and regression benchmarks, showing: i) navigating the design space allows finding the intersection of human and task constraints (Figure~\ref{fig:viz_abstract}, Right), matching black-box accuracy without compromising interpretability; ii) algebraic forms ensure scalability outperforming rigid Boolean logic in high-dimensional concept spaces (up to + 65\%); iii) multiple experts compensate for incomplete concept bottlenecks or varying task logic; iv) finite discrete expressions ensure global interpretability allowing user inspection; and v) while \classacro{} adapts to any functional form requested, \symbolic{} also supports post-hoc adaptation, allowing users to modify operator vocabularies without retraining. The code to reproduce all experiments is available at \href{https://github.com/francescoTheSantis/Mixture-of-CB-Experts}{\nolinkurl{Mixture-of-CB-Experts}}.

\section{Preliminaries}
\mypar{Concept Bottleneck Models.} 
Let $X$ and $Y$ be random variables representing the input and task, taking values in spaces $\mathcal{X}$ and $\mathcal{Y}$, respectively. We denote realizations by $x \in \mathcal{X}$ and $y \in \mathcal{Y}$.
CBMs~\citep{koh2020concept} introduce intermediate and human-interpretable concepts $C$ mapping into $\mathcal{C}\subseteq{\mathbb{R}^k}$, that mediates the relationship between $X$ and $Y$. The model is trained to approximate the joint distribution:
\begin{equation}
    p(y, c \mid x) = p(y \mid c)\, p(c \mid x),
    \label{eq:cbm_factorization}
\end{equation}
where $c\in\mc{C}$, $p(c \mid x)$ is the \emph{concept encoder} that predicts concepts from raw inputs, and $p(y \mid c)$ is the \emph{task predictor} that maps concepts to the task variable. This allows us to obtain a task predictor whose input is \emph{semantically transparent}, as it operates solely on the interpretable concepts.

To increase the predictive accuracy of the task predictor, some concept-based models (e.g.,~\citet{zarlenga2022concept,ismail2023concept}) leak extra information from $x$, achieving higher accuracy at the
expense of semantic transparency. 

\mypar{Symbolic Regression.} Given a dataset $\{(x^{(i)}, y^{(i)})\}_{i=1}^N$, Symbolic Regression (SR) searches the space of mathematical expressions to find the function $f: \mathcal{X} \rightarrow \mathcal{Y}$ closest to the true data-generating process ~\citep{schmidt2009distilling}.  SR methods  employ heuristic search strategies, e.g. genetic algorithms, to evolve populations of expressions within a multi-objective optimization framework that jointly minimizes prediction error and expression complexity~\citep{langley1979rediscovering, langley1981bacon, koza1994genetic,cranmer2023interpretable}.

\section{\class}
\label{sec:class_of_models}

This section presents \class~(\classacro), a class of models that improves upon existing CBMs in two ways: (i) by allowing flexible, user-aligned functional forms, ensuring that the task predictor can only retrieve functions the user is capable of understanding while tailoring expressiveness to the problem at hand; and (ii) by operating over ensembles of expressions (experts), enabling the task predictor to match accuracy through multiple simpler expressions or to handle inputs requiring distinct reasoning patterns.

\classacro{} represents each task-predictor function as an expression tree and models it as a random variable $T$ (with realization $t$) conditioned on the input $x$. Conditioning $T$ on $X$ allows the model to dynamically select different expressions across inputs, maintaining high accuracy while preserving semantic transparency~\citep{barbiero2023,de2025,de2025causally}. Indeed, the task prediction depends on the expression tree  $t$, which operates solely on the predicted concepts $c$. This formulation induces the probabilistic graphical model (PGM) illustrated in Figure~\ref{fig:concept_exp_tree} (Left). While our framework shares structural similarities with the PGM proposed by~\citet{debot2024interpretable}, we do not restrict $T$ to represent a Boolean expression; instead, we generalize the model to accommodate arbitrary functional forms. Accordingly, the joint distribution factorizes as:
\begin{equation}
    p(x, c, t, y) = p(x)\, p(c \mid x)\, p(t \mid x)\, p(y \mid c, t)
    \label{eq:joint_factorization}
\end{equation}
The first factor, $p(c \mid x)$, represents the \emph{concept encoder}, while $p(t \mid x)$ defines the distribution over expression trees. The final term, $p(y \mid c, t)$, is the \emph{task predictor}, which specifies the target distribution conditioned on both predicted concepts and the selected tree. 

\subsection{Expression Trees in \classacro{}.}
\label{sec:our_exp_tree}

An expression tree~\citep{Mitchell1991} is a directed acyclic graph (DAG) representing a mathematical expression (Figure~\ref{fig:concept_exp_tree}, right). We define a class $\mc{T}$ of expression trees w.r.t.\ a vocabulary of operations $\mc{W}$ via admissible edge configurations $\mc{E}$ on a set of nodes $\mc{N} = \mc{V} \cup \mc{O} \cup \mc{P}$, with:
\\
\noindent (i) $\mc{V}$: set of \emph{\textcolor{ForestGreen}{input nodes}} instantiated with specific \textbf{concepts}.\\
\noindent (ii)
$\mc{O}$: set of \emph{\textcolor{MutedBlue}{operator nodes}} (e.g., $\{\times, \sin, \exp\}$) from $\mathcal{W}$.\\
\noindent (iii )$\mc{P}$: set of \emph{\textcolor{MutedRed}{parameter nodes}} representing real numbers.

Let $O, E, \Theta$, and $V$ denote the random variables for operators, edges, parameters, and placeholder inputs, taking values in $\mc{O}, \mc{E}, \mc{P}$, and $\mc{V}$, respectively. An expression tree is defined by the tuple $t = (o, e, \theta, v)$, where $o, e, \theta$ and $v$ represent specific realizations of these random variables. Notably, we decouple the input variable $V$ from the concepts $C$ predicted by an encoder, allowing the expression tree to selectively learn and operate on a subset of relevant concepts. We denote the space of functions representable by trees in class $\mc{T}$ as $\mc{H}(\mc{T})$. Given concept values $c$ assigned to the input variables, evaluating the tree yields a prediction $y=f_t(c)$, where $f_t$ represents the function defined by the expression tree $t$.

For instance, the set of linear functions can be represented by selecting $\mc{W} = \{+,\times\}$, where $+$ is the binary summation and $\times$ the binary multiplication, and having $O$ and $E$ only allowing a multiplication layer between pairs of input and parameter nodes, and a unique summation on top (cf. \Cref{fig:concept_exp_tree}). Each selection of $\mc{P}$ defines a different linear function. We refer to the set of expression trees whose represented functions are linear as $\mc{T}_{\text{lin}}$.

\begin{figure}[t]
    \centering
    \includegraphics[trim=0.5cm 0.3cm 0.3cm 0.3cm, clip, width=\linewidth]{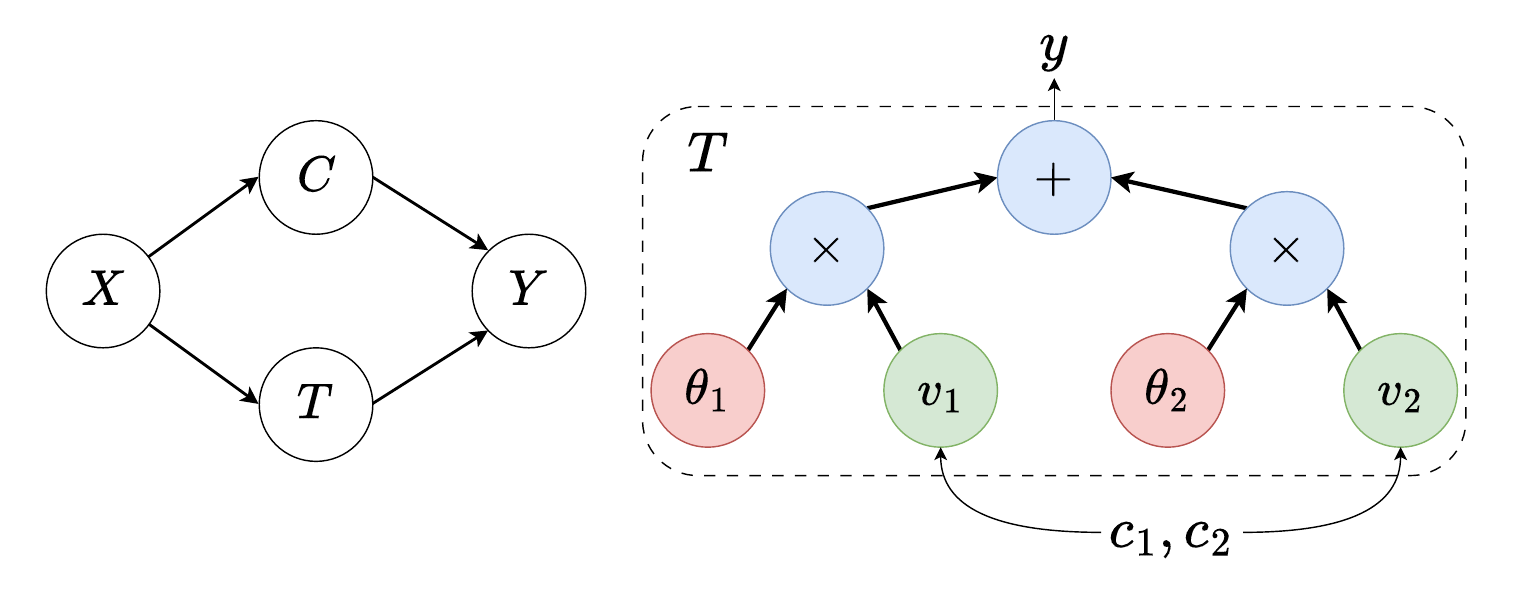}
    \caption{\textbf{Left:} PGM representing the assumed generative process for \classacro{}. \textbf{Right:} Example of an expression tree ($T$) representing a linear expression. \textcolor{ForestGreen}{Green} nodes are placeholder variables $V$, \textcolor{MutedBlue}{blue} nodes are operators $O$ drawn from the vocabulary $\mathcal{W}$, and \textcolor{MutedRed}{red} nodes are learnable parameters $\Theta$. At inference time, each placeholder $v_i$ is instantiated with the corresponding predicted concept $c_i$.}
    \label{fig:concept_exp_tree}
\end{figure}

\begin{figure*}[t]
    \centering
\includegraphics[trim=0.0cm
    0.0cm 0cm 0.0cm, clip, width=1\linewidth]{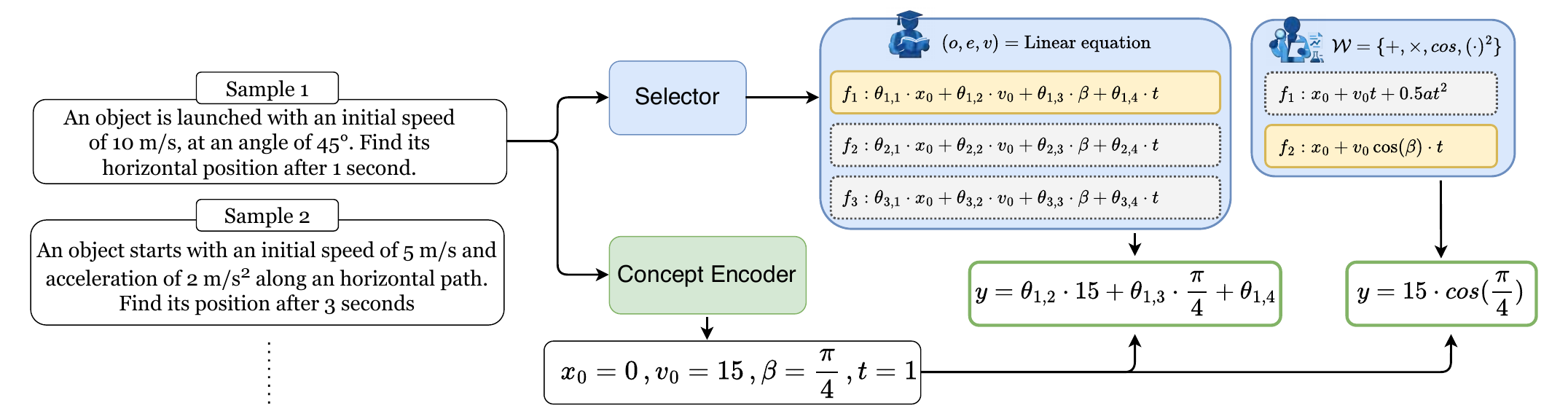}
    \caption{ Overview of \classacro{}. The selector identifies the appropriate expression based on the input. Simultaneously, the Concept Encoder predicts the underlying physical concepts ($x_0, v_0, \alpha, t$) from the raw input. The architecture accommodates different user mental models by allowing functional forms to be represented as either parametric linear equations or symbolic expressions restricted to interpretable operators. The final output $y$ is obtained by executing the selected function with the predicted concepts as arguments.}
    \label{fig:symbolic_pipeline}
\end{figure*}

\subsection{Design Choices in \classacro}
\label{sec:design_plane}

\mypar{Functional form.} Navigation through the functional form dimension is realized by performing different inferences over the generative process. For instance, the user can fix $o$, $e$, and $v$, reducing the problem to a parametric linear function where only the parameters $\theta$ are learned from data. Alternatively, one can fix only part of the structure by setting a sub-expression (specific $o$ and $e$ for a subset of nodes) while learning the remaining operators, edges and subset of concepts $v$ from data, enabling the incorporation of domain knowledge without fully specifying the expression tree. Finally, one can provide only the vocabulary $\mc{W}$ of interpretable operators while learning the entire expression tree $t$ from data, subject to the constraint that all operators belong to $\mc{W}$. Each scenario is naturally accommodated through appropriate conditioning and marginalization within our probabilistic framework.

\mypar{Number of Experts.} Models based on a finite set of prototypes are often considered a strict requirement to ensure global interpretability~\cite{rudin2019stop,rudin2022interpretable}. Indeed, in the proposed framework we consider a finite set of expression trees, allowing the task predictor to be inspected and validated by the user in finite time — along with other important properties discussed in Appendix~\ref{app:global_interpretability}. To realize this, we model $p(t\mid x)$ as a mixture obtained by marginalizing over $M$ discrete indices:
\begin{equation}
    p(t \mid x) = \sum_{m=1}^{M} p(t\mid m)\, p(m\mid x),
    \label{eq:tree_mixture}
\end{equation}

where $p(m\mid x)$, called the \emph{selector},  is a learnable distribution that routes each input $x$ to a specific index $m$~\citep{debot2024interpretable}. Given Eq.~\ref{eq:tree_mixture}, multiple expression trees $t$ might be associated to the same index $m$, ultimately reducing the interpretability of the final prediction~\citep{breiman2001random,meinshausen2010node}. Therefore, to ensure that each index $m$ corresponds to a unique expression tree, $p(t \mid m)$ is modeled as a degenerate distribution that places all its probability mass on the $m$-th expression tree, $\delta_t(t_m)$. Hence, given a realization $m$, there is exactly one expression tree $t_m$ associated with it.
By varying $M$, we adjust the number of experts, balancing  expressiveness (obtained with high values of $M$) and interpretability (obtained by lower values of $M$).
Under the above assumptions \cref{eq:joint_factorization} can be rewritten in conditional form as:
\begin{align}
    p(y,c\mid x) = p(c\mid x) \sum_{m=1}^M p(m\mid x)\, p(y\mid c,t_m) \ .
    \label{eq:memory_factorization}
\end{align}

\mypar{Expressiveness-Interpretability trade-off. }
The flexibility on the functional form allows  \classacrosingular{s} to connect with different variants of the universal approximation theorem \cite{cybenko1989approximation}. 
Moreover, $\mc{T}$ and the number of experts $M$ play a crucial role in calibrating the trade-off between the expressiveness and the complexity of the functions represented by any expert, as we demonstrate in the following.

\begin{proposition}
\label{theo:uni_approximation}
(1) Let $\mc{T}'=\{t\in\mc{T}:o\in\{+,\times,\sigma\}\}$, being $\sigma$ a unary non-polynomial operation. Then for each continuous function $f$ and $\epsilon>0$ there exists an \classacrosingular{} $f^*$ over $\mc{T}'$ with $M=1$ experts, such that $\|f-f^*\|_\infty<\epsilon$.
(2) 
Let $\mc{T}_{\text{lin}}$ denote the set of expression trees whose represented functions are linear. 
For each continuous function $f$ and $\epsilon>0$, there exists a certain $M>0$ and  an \classacrosingular{} $f^*$  over $\mc{T}_{\text{lin}}$ with $M$ experts, such that  $\|f-f^*\|_\infty<\epsilon$. 
\end{proposition}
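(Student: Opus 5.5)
For part (1) the plan is to reduce to the classical universal approximation theorem for single-hidden-layer networks with a non-polynomial activation (Leshno et al.; Cybenko for sigmoidal $\sigma$). I will work on a compact domain $K\subset\mathcal{C}\subseteq\mathbb{R}^k$, which is implicit in the sup-norm statement. By that theorem, for any continuous $f$ and $\epsilon>0$ there are $N$, $a_j,b_j\in\mathbb{R}$ and $w_j\in\mathbb{R}^k$ with $\sup_{c\in K}|f(c)-\sum_{j=1}^N a_j\sigma(w_j^\top c+b_j)|<\epsilon$. The remaining work is to check that this function is $f_{t}$ for some $t\in\mathcal{T}'$. Each inner product $w_j^\top c+b_j$ is a tree built from $\times$ nodes joining parameter nodes and input nodes $v_i=c_i$, with $+$ nodes on top; this is exactly the $\mathcal{T}_{\text{lin}}$ gadget described earlier. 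We apply a unary $\sigma$ node to each, multiply by a parameter node $a_j$, and sum with $+$ nodes. All operators lie in $\{+,\times,\sigma\}$. With $M=1$ the selector is trivial, $p(m\mid x)=1$, so by \cref{eq:memory_factorization} the M-CBE output is $f_{t_1}(c)$. This gives $f^*$.

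For part (2) the experts are affine maps, so the approximant is a piecewise-linear function whose pieces are chosen by the selector. I will interpret the selector as a deterministic, or hard, routing: $p(m\mid x)$ puts mass on the cell of $c$, which is legitimate because $c$ is a function of $x$ through the encoder. The construction has three steps. First, by uniform continuity of $f$ on the compact $K$, pick $\delta$ with $|f(c)-f(c')|<\epsilon$ whenever $\|c-c'\|<\delta$. Second, partition $K$ into finitely many cells $K_1,\dots,K_M$ of diameter $<\delta$, for instance cubes of a grid, and choose $c_m\in K_m$. Third, let expert $t_m\in\mathcal{T}_{\text{lin}}$ be the constant linear function with all weights zero and bias $f(c_m)$. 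Alternatively one can take the linear interpolant on a simplicial triangulation, which gives a continuous approximant and better rates, but it is not needed. Then for $c\in K_m$ we have $|f(c)-f_{t_m}(c)|<\epsilon$, so the sup-norm bound holds with $M$ equal to the number of cells.

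The main obstacle is the status of the selector in part (2). If $p(m\mid x)$ must be a soft mixture, the output is $\sum_m p(m\mid x)f_{t_m}(c)$. This is still fine provided the weights are a partition of unity subordinate to a cover by sets of diameter $<\delta$: each term with nonzero weight is within $\epsilon$ of $f(c)$, and convexity then gives the bound. I would state the result for the expectation and note that the hard-routing case follows as the limit. A secondary point is that both parts need compactness of the concept domain, and I will make that assumption explicit.
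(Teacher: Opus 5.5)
Your proposal is correct and follows the paper's proof. Part (1) is reduced, as in the paper, to the Leshno et al.\ universal approximation theorem for single-hidden-layer networks with a non-polynomial activation, written as an expression tree over $\{+,\times,\sigma\}$; part (2) is reduced to approximating $f$ by a piecewise affine function with one linear expert per piece. The only difference is that you prove this piecewise step yourself, using uniform continuity with constant experts (which relies on the bias term, as in Lin-M-CBE) and also treating soft routing, whereas the paper just cites the classical piecewise-linear (spline) approximation result.
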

See Appendix~\ref{app:uni_approximation} for a proof.

Clearly, to approximate a function with only linear experts, $M$ cannot be fixed a priori, but any function may require a different one. Next, we highlight how the approximation error can be bound given a set of polynomial experts.

\begin{proposition}
\label{theo:bound}
Let $\Omega \subseteq [a, b]^n$ be a convex finite domain, $f:\Omega \rightarrow \mathbb{R}$ a function of class $C^\infty$, and $\Omega$ be partitioned into $M$ non-overlapping subspaces with equal measure, being $I_m$ the indicator function of the $m$-th subspace.
Given the space of expressions trees computing polynomial functions $\mathcal{T}_p$, there exists a selection of polynomials $f_t \in \mathcal{H}(\mathcal{T}_p), t=1, \dots, M$ with maximum degree $k$, such that their composition $f^\star(x) = \displaystyle\sum_{t=1}^{M} f_t(x) \cdot I_t(x)$ approximates $f$ with error:
\[
Error_{\text{max}} = ||f - f^\star||_\infty \le C \cdot \frac{(b-a)^{k+1}}{{M}^{\frac{k+1}{n}}} \cdot ||f^{(k+1)}|| \ ,
\]
where $C$ is a fixed constant, $||f^{(k+1)}||$ is a Sobolev norm measuring how regular the target function is in terms of its $k+1$-th partial derivative.
\end{proposition}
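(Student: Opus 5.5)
The plan is a local Taylor approximation on each cell of the partition, with error controlled by the cell diameter. One caveat first: the statement as written allows an arbitrary partition into $M$ equal-measure pieces, and equal measure alone does not control diameters, since long thin slabs have large diameter. So I will read it, as the bound clearly intends, as a quasi-uniform partition whose cells have diameter comparable to $\mathrm{meas}(\Omega)^{1/n}M^{-1/n}$. The canonical choice is a grid of $M=N^n$ congruent cubes of side $(b-a)/N$, intersected with $\Omega$. Under this reading each cell $\Omega_m$ has
\[
\mathrm{diam}(\Omega_m)\le \sqrt{n}\,\frac{b-a}{M^{1/n}}.
\]
The factor $\sqrt{n}$ and similar shape factors will be absorbed into $C$.

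On each cell $\Omega_m$ I pick a point $x_m$; for the Taylor step it should lie in the cell and the segments from it to the cell's points should stay in $\Omega$, which holds if $\Omega_m$ is convex (a cube intersected with the convex $\Omega$ is convex). I then let $f_m$ be the degree-$k$ Taylor polynomial of $f$ at $x_m$. This is a polynomial in the coordinates, so it is computed by an expression tree built from $+$, $\times$ and constant parameter nodes, and $f_m\in\mathcal{H}(\mathcal{T}_p)$ with degree at most $k$.

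Next I apply Taylor's theorem with Lagrange remainder along the segment from $x_m$ to any $x\in\Omega_m$; this segment lies in $\Omega_m$ by convexity. It gives
\[
|f(x)-f_m(x)|\le \frac{1}{(k+1)!}\sum_{|\alpha|=k+1}\binom{k+1}{\alpha}\,|\partial^\alpha f(\xi)|\,|x-x_m|^{\alpha}\le \frac{n^{(k+1)/2}}{(k+1)!}\,\|D^{k+1}f\|_{\infty}\,\mathrm{diam}(\Omega_m)^{k+1}.
\]
If a genuine Sobolev norm is wanted instead of the sup, I would replace this step with the Bramble–Hilbert lemma on each cell, which gives the same scaling in $h_m^{k+1}$. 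Here I take $\|f^{(k+1)}\|=\max_{|\alpha|=k+1}\sup_\Omega|\partial^\alpha f|$, which is finite because $f\in C^\infty$ on a compact set (I assume $\Omega$ closed, or that the derivatives are bounded).

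Finally, the cells are non-overlapping, so for each $x$ exactly one indicator is nonzero and $f^\star(x)=f_m(x)$ for the cell containing $x$. Taking the supremum over $m$ and inserting the diameter bound gives
\[
\|f-f^\star\|_\infty\le C\,\frac{(b-a)^{k+1}}{M^{(k+1)/n}}\,\|f^{(k+1)}\|,
\]
with $C=n^{k+1}/(k+1)!$ up to the shape factor, which is fixed once $n$ and $k$ are fixed.

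The main obstacle is the geometric step relating "equal measure" to a diameter of order $M^{-1/n}$. I handle it by choosing the grid partition, which is the standard setting for this estimate, and by stating this regularity (quasi-uniformity) assumption explicitly. A minor point is that $M$ must be a perfect $n$-th power for an exact grid. For general $M$ I would use $N=\lfloor M^{1/n}\rfloor$ and absorb the resulting factor $2^{k+1}$ into $C$.
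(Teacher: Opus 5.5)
Your proof is correct and has the same skeleton as the paper's. The paper also fixes a regular grid (``by assuming a regular spacing''), uses a local estimate $\|f-f^\star\|_\infty\le C\,h^{k+1}\|f^{(k+1)}\|$, and substitutes $h=(b-a)/M^{1/n}$. The difference is the local estimate: the paper cites it from multivariate polynomial interpolation and finite-element error bounds, whereas you derive it from the degree-$k$ Taylor polynomial and the Lagrange remainder on each convex cell.

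Your route buys several things:
\begin{itemize}
\item It is self-contained and gives an explicit constant, $C=n^{k+1}/(k+1)!$, times $2^{k+1}$ when $M$ is not a perfect $n$-th power.
\item It handles a general convex $\Omega$, not only the full cube $[a,b]^n$. The cells then lose equal measure, but as you note that is not what matters.
\item It states openly the quasi-uniformity hypothesis that the paper leaves inside ``regular spacing''.
\end{itemize}

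That hypothesis is genuinely necessary. For $n\ge 2$, cut $[0,1]^n$ into $M$ equal-measure slabs orthogonal to $e_1$ and take $f=x_2^{k+1}$. On each slab, any degree-$k$ polynomial restricted to a line parallel to $e_2$ is a univariate polynomial of degree at most $k$. So it misses $f$ by at least $2^{-2k-1}$, the best uniform error for $t^{k+1}$ on $[0,1]$, independently of $M$. The literal statement therefore fails for arbitrary equal-measure partitions.

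The interpolation route buys nothing essential here, since $f^\star$ is allowed to be discontinuous across cells.

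One caution about your aside on Bramble--Hilbert. It keeps the $h^{k+1}$ sup-norm rate only for the $W^{k+1,\infty}$ seminorm, which is exactly the quantity your Taylor bound already uses. For an $L^p$-based Sobolev seminorm with $p<\infty$, scaling limits the sup-norm rate to $h^{k+1-n/p}$. Your sup-norm reading of $\|f^{(k+1)}\|$ is therefore the one under which the claimed bound actually holds.
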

See Appendix~\ref{app:approximation_proof} for a proof.

Interestingly, \Cref{theo:bound} provides an estimation of the approximation errors for the general case of polynomial experts. For instance, assuming the experts are linear $(k=1)$ the approximation error is bound by
$C \cdot \frac{(b-a)^{2}}{{M}^{\frac{2}{n}}} \cdot ||f^{(2)}||$.
This means that the error is low if the target function $f$ is close to linear, e.g., $||f^{(2)}|| \approx 0$, and that it can be arbitrarily lowered by increasing the number of experts $M$.
We also notice that \Cref{theo:bound} assumes experts are selected from equal-measure subspaces, which corresponds to having each expert assigned to similar portion of the data in the empirical distribution. Even if beyond the scope of this paper, this property could be enforced via an appropriate regularization mechanism in the selector.

\section{Instantiations of \classacrosingular}
\label{sec:new_models}
\classacro{} encompass a broad spectrum of CBM architectures, including existing concept-based methods that can be interpreted as particular choices within our unified framework. For a mapping between specific choices in our framework and existing CBMs, we refer to Appendix~\ref{app:existing_cbms}. 

Beyond capturing existing models, \classacro{} enables the creation of CBMs that explore previously unconsidered configurations. 
These instances model $p(t \mid x)$ as a mixture of expression trees (\cref{eq:tree_mixture}), ensuring predictive accuracy while maintaining global interpretability. 
For instance, models employing linear functions with more than one expert have yet to be defined; at the same time, functional forms different from linear or Boolean expressions have yet to be considered. To fill this gap, we first propose a model fixing the \emph{structure} ($o$, $e$, $v$) of the expression tree to a parametric linear form over all concepts, while learning only its parameters. Second, we consider a scenario in which the user aims to discover the structure of an expression while retaining control by restricting the vocabulary to a set of known, admissible operators $\mc{W}$. Under this setting, the functional form is fully determined by the chosen operators $\mc{W}$. Figure~\ref{fig:symbolic_pipeline} illustrates both instantiations.

\mypar{\linear{} (\linearacro).}
Our first instantiation extends the standard CBM~\citep{koh2020concept} to accommodate a mixture of specialized linear expressions while retaining its original functional form. \linearacro{} restricts the expression tree to a linear structure but permits selection from a finite set of $M$ distinct linear expressions. 
The conditional distribution in \cref{{eq:memory_factorization}} can be rewritten as:
\begin{equation}
\small
\label{eq:lin_factorization}
    p(y,c \mid x) =
    p(c\mid x)\,\sum_{m=1}^{M} p(m\mid x)\, p(y\mid c, o_l,e_l,v,\theta_m) \ ,    
\end{equation}
where $o_l$ and $e_l$ are fixed realizations defining a linear structure, and $v$ instantiates the function over all concepts. For example, in a regression task $p(y \mid c, o_l,e_l,v,\theta_m) = \mathcal{N}\big(y; f_{(o_l,e_l,v)}(c, \theta_m), \sigma^2\big)$, where $\sigma^2$ represents the variance, $f_{(o_l,e_l,v)}(c; \theta_m) = w_m^\top c + b_m$ and $\theta_m = (w_m, b_m)$. The model selects among these $M$ expressions based on $p(m \mid x)$, allowing flexibility in how concepts are combined to make predictions while maintaining a linear structure.

\mypar{\symbolic{} (\symbolicacro{}).} Beyond specifying the expression tree, a user can constrain the expression by only defining the vocabulary $\mathcal{W}$ of interpretable operators. The conditional distribution retains the structure of \cref{eq:memory_factorization}.
Learning the expression trees can be approached in two ways. The first uses differentiable symbolic regression methods~\citep{martius2016extrapolation} to learn each $t_m$ end-to-end with the selector and concept encoder. The second approach distills symbolic expressions from placeholder differentiable black-boxes~\citep{alaa2019demystifying,cranmer2020discovering,liu2025kan}. Specifically, the latter approach decouples learning into three stages: (i) jointly training the concept encoder, selector, and placeholder black-box predictors (e.g., MLPs), allowing the data to be partitioned into $M$ mechanism-specific subsets; (ii) applying symbolic regression to each subset to recover the corresponding expression tree $t_m$; and (iii) replacing the placeholders with the extracted symbolic expressions and fine-tuning the expression parameters $\theta_m$ end-to-end with the concept encoder and selector. This decoupled approach enables user adaptability: after initial training, different users can specify their own requirements to obtain expression trees aligned with their domain expertise, without retraining the encoder or selector. For this reason, we adopt this second strategy. To distill symbolic expressions from the placeholder black-box predictors, we use the multi-population evolutionary algorithm provided by PySR~\citep{cranmer2023interpretable}. The operator set $\mathcal{W}$ is entirely determined by the user; to maintain interpretability, users can eliminate operators deemed inappropriate for specific concepts (e.g., removing $\exp$ if an exponential relationship is implausible for the domain at hand). In our experiments, we use $\mathcal{W} = \{+, -, \times\}$ for classification tasks and $\mathcal{W} = \{+, -, \times, \sin, \cos, \exp, \log, \tan, \tanh, x^2, x^3, \sqrt{\cdot},\, (\cdot)^{-1}\}$ for regression tasks. An ablation comparing with KANs~\citep{liu2025kan} is provided in Appendix~\ref{app:sr_ablation}.

\subsection{Training Objective}
Let $\mathcal{D} = \{(x^{(i)}, c^{(i)}, y^{(i)})\}_{i=1}^N$ be a concept-annotated dataset of size $N$. Following the factorization in \cref{eq:lin_factorization}, we train \linearacro{} by maximizing the corresponding log-likelihood:
\begin{eqnarray*}
    \mathcal{L} &=& \sum_{i=1}^N \Big[
        \log p(c^{(i)} \mid x^{(i)}) + \\
        &+& \log \sum_{m=1}^M p(m \mid x^{(i)})\, p(y^{(i)} \mid c^{(i)}, o, e, v, \theta_m) 
    \Big],
\end{eqnarray*}
where the first term corresponds to the concept prediction loss, and the second term corresponds to the task prediction loss. As previously noted, $o=o_l \text{ and }e=e_l$ specifies a linear structure. 
For \symbolicacro{}, the first training stage jointly learns the concept encoder, selector, and a set of placeholder predictors by optimizing the same objective, where the predictors are implemented as MLPs with different operator and edge configurations $(o,e)$. In the second stage, the training data assigned to each expression is used to recover a symbolic expression tree via symbolic regression. In the final stage, the learned symbolic expressions replace the placeholders, and their parameters, $\theta_m$, are fine-tuned end-to-end with the concept encoder and selector. Further training details are provided in Appendix~\ref{app:training}.

\section{Experimental Results}
\label{sec:exp_sec}

\begin{figure*}[ht]
    \centering
    \includegraphics[width=1\textwidth]{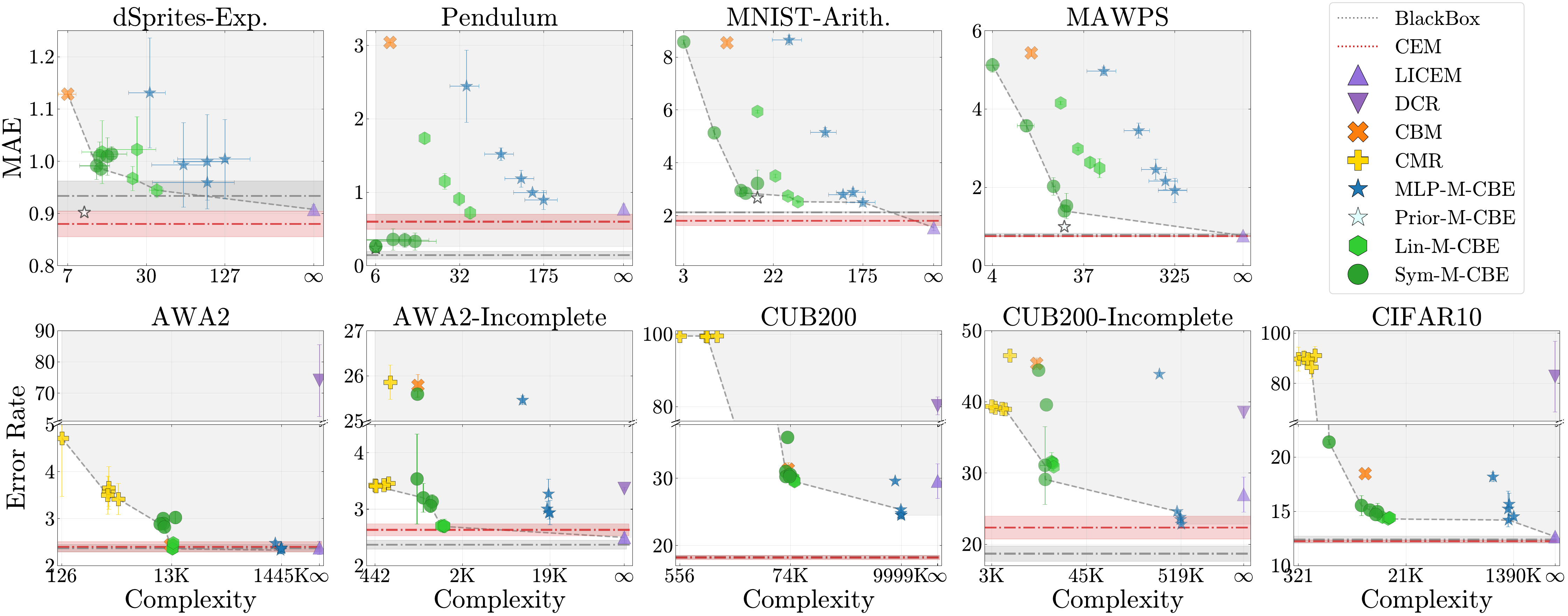}
    \caption{The x-axis denotes model complexity, measured as the total number of nodes across all expression trees used by the task predictor. The y-axis reports MAE for regression tasks (top) and error rate for classification tasks (bottom). For multi-expert models, multiple points are shown corresponding to different numbers of experts (1–5). The dotted curve indicates the Pareto frontier, while the shaded region marks dominated solutions. \prior{} is excluded from the Pareto frontier as it relies on ground-truth expressions. CEM and BlackBox are plotted as horizontal lines, since their label-predictor complexity is undefined (they do not operate on concept predictions). Error bars indicate $95\%$ confidence intervals over five random seeds.
    }
    \label{fig:pareto_front}
\end{figure*}

We empirically validate the benefits of exploring the design plane. Our experiments assess four key properties of interpretable models: (i) \textbf{Accuracy vs.\ Interpretability} (\Cref{sec:exp_acc}): whether changing both functional form and number of experts improves the \ai{} trade-off; 
(ii) \textbf{Intervenability} (\Cref{sec:exp_int}): whether mechanism-aligned predictors respond more effectively to human interventions on concepts; 
and (iii) \textbf{Adaptability} (\Cref{sec:exp_adapt}): whether the proposed framework can accommodate user-specified constraints without retraining the full model.

\mypar{Datasets.}
We evaluate on four synthetic and five real-world datasets, covering classification and regression with both categorical and continuous concepts. \emph{Synthetic}: \textbf{MNIST-Arithm}, a modified version of MNIST~\citep{lecun2010} with digit pairs separated by an arithmetic operator to predict; \textbf{dSprites-Exp}, a variant of dSprites~\citep{dsprites17} where the target is an exponential function of the object’s coordinates; and \textbf{Pendulum}~\citep{yang2020}, a dataset of pendulum images with the task of predicting the pendulum's x-axis position. 
For real-world data, we use \textbf{MAWPS}~\citep{koncel2016mawps}, a dataset containing simple math problems in a textual form, \textbf{AWA2}~\citep{xian2017}, a classification dataset with 50 animal classes and 85 concepts, and \textbf{CUB-200}~\citep{he2019fine}, a bird classification dataset with 200 species and 112 concepts. Following~\citet{zarlenga2025avoiding}, we also evaluate incomplete versions of these datasets to study performance under missing concepts. Finally, to assess settings without concept annotations, we apply the label-free method of~\citet{oikarinen2023label} to \textbf{CIFAR-10}~\citep{krizhevsky2009}. Full dataset details are provided in Appendix~\ref{app:datasets}.

\mypar{Baselines.}
For our evaluation, we compare several concept-based architectures: \textbf{CBM}~\citep{koh2020concept} (equivalent to \linear{} with $M{=}1$); \textbf{CEM}~\citep{zarlenga2022concept} with a non-interpretable predictor; locally interpretable \textbf{DCR}~\citep{barbiero2023} and \textbf{LICEM}~\citep{de2025}; and globally interpretable \textbf{CMR}~\citep{debot2024interpretable}. Logic-based baselines (CMR, DCR) are omitted in regression tasks because they assume discrete concepts and outputs. We also include a standard \textbf{Blackbox} DNN as an accuracy upper-bound, and we evaluate two other instances of our framework: \textbf{\prior{}}, which uses ground-truth expressions when available, and \textbf{\mlp{}}, which uses a mixture of MLPs. CMR and \classacro{} are tested with 1-5 experts. Additional details in Appendix~\ref{app:implementation_details}.

\mypar{Metrics.}
For \emph{accuracy}, we evaluate performance using the Mean Absolute Error (MAE) for regression and error rate ($100{-}\text{Accuracy}\%$) for classification. 
For \emph{interpretability}, guided by the principle that brevity facilitates human comprehension \citep{miller1956magical, narayanan2018humans}, and assuming individual symbols are interpretable by the user, we assess complexity via standard metrics from symbolic regression: node count, maximum depth, and the number of variables and operators \citep{smits2005pareto}. Figure~\ref{fig:pareto_front} represents complexity as the total number of nodes in the expression tree. Further results and additional details are reported in Appendix~\ref{app:complexity}. When the number of expression trees is higher than one, the complexities are summed across all expression trees. For \emph{intervenability}, we follow~\citet{koh2020concept,zarlenga2022concept} and measure task error as a function of the fraction of concepts replaced with ground-truth values. 
To evaluate the \emph{adaptability} of the proposed model, we compare the performance of \symbolicacro{} when employing different operator sets.
We note that, beyond model complexity, complementary proxies of interpretability can be derived from: intervenability, as it reveals how much and how well the task predictor leverages concepts~\citep{koh2020concept}; and from adaptability, since a compact expression tree is not sufficient for interpretability if it contains symbols that are unintelligible to the user.

\subsection{Accuracy vs. Interpretability}
\label{sec:exp_acc}

We evaluate whether changing the number of functions, along with their complexity, improves the \ai{} trade-off. Figure~\ref{fig:pareto_front} summarizes our findings, while detailed results, including those for other complexity metrics and concept accuracy, are provided in Appendix~\ref{app:detailed_results}.

\mypar{Exploring \classacro{} design space allows finding the best trade-offs (Figure~\ref{fig:pareto_front}).}
Our results demonstrate that no single architectural choice is universally superior; rather, only exploring the \classacro{} design space guarantees finding the most accurate and interpretable model for the task at hand. According to the dataset, we find that relaxing functional constraints with \symbolic{} enables the discovery of compact, high-accuracy expressions that dominate the Pareto frontier, particularly in regression tasks (e.g., on Pendulum). Conversely, for high-dimensional classification tasks, instantiating the framework with \linear{} experts provides a robust balance, offering competitive accuracy with low complexity. This highlights that explicit control over the number of experts and functional form is key for finding the best \ai{} trade-off.

\mypar{Scalability challenges for Boolean functional forms (Figure~\ref{fig:pareto_front}, bottom).} 
Instantiations employing Boolean expressions (e.g., recovering CMR, DCR) perform well when concept sets are small (e.g., AWA2-Incomplete and CUB200-Incomplete), offering good accuracy and very low complexity. However, they degrade substantially as the number of concepts increases, reaching near-$100\%$ error on CUB200 and CIFAR10 where concepts exceed 100. We attribute this to the rigid nature of purely conjunctive rules: when predictions rely on formulas like $c_1 \land \ldots \land c_k$, a single mispredicted concept invalidates the entire rule. This failure mode becomes increasingly critical as $K$ grows (see Appendix~\ref{app:concept_size_ablation} for an ablation on concept size). In contrast, instantiations like \linearacro{} and \symbolicacro{} maintain robust performance across all concept set sizes. Moreover, these flexible forms naturally adapt to continuous concepts and targets, as demonstrated in the regression tasks.

\mypar{Black-box task predictors are generally Pareto-dominated (Figure~\ref{fig:pareto_front}).} Employing unconstrained MLPs as task predictors is rarely optimal. \classacro{} instantiations with structured functional forms (e.g., \linearacro{}, \symbolicacro{}) frequently match or exceed MLP accuracy on both classification and regression tasks, while being substantially less complex. Consequently, black-box predictors sporadically appear on the Pareto front, demonstrating that over-parameterized predictors are suboptimal when the functional form aligns well with the underlying task.

\mypar{Multiple experts compensate for incomplete concepts and variable task logic (Figure~\ref{fig:pareto_front}).} Finally, we analyze the dimension of expert cardinality. We find that increasing the number of experts ($M>1$) is critical in two key scenarios: first, when concept sets are incomplete (AWA2-Incomplete, CUB200-Incomplete), selecting among multiple functions recovers predictive performance; second, when the concept-to-task relationship varies across the input space (e.g., MAWPS), multiple experts allow the model to adapt to local semantic contexts. This highlights that exploring expert cardinality is also vital to ensure model performance.

\begin{figure}[ht]
    \centering
    \includegraphics[width=0.9\columnwidth]{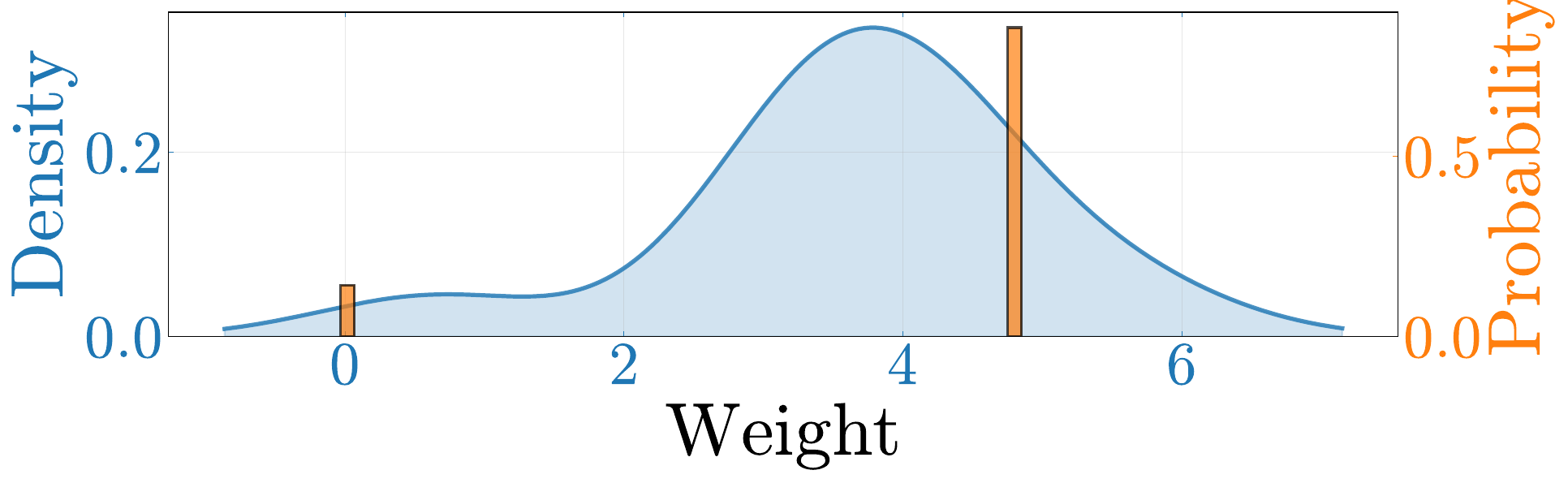}
    \caption{Weight distributions for the concept ``has underparts color brown'' toward class ``Crested Auklet'', comparing LICEM (blue) and \linearacro{} (orange, memory size $= 2$).}
    \label{fig:verifiability}
\end{figure}

\mypar{Finite number of experts allows global interpretability. (Figure~\ref{fig:verifiability})} Models generating input-dependent parameters (LICEM and DCR) produce a distinct expression tree for each sample. This implies an \emph{infinite} model complexity, as each input corresponds to a new expression tree, making global inspection of the model infeasible. In contrast, \classacro{} learn from a \emph{finite} set of expressions, which can be fully inspected and verified in finite time.
As shown in Figure~\ref{fig:verifiability},  LICEM produces a continuous distribution of weights around zero, reflecting its unconstrained and non-discrete parameterization. By contrast, when constrained to two parameter sets, the \linear{} model selects between two discrete weight configurations, resulting in a decision process that a human can directly inspect.

\subsection{Intervenability}
\label{sec:exp_int}

\begin{table}[t]
\centering
\caption{Tree Edit Distance (TED) to the ground truth expression. In three datasets, \symbolic{} recovers the exact data-generating mechanism (TED $\approx 0$).}
\resizebox{\columnwidth}{!}{%
\begin{tabular}{lcccc}
\toprule
Model & dSprites-Exp. & Pendulum & MNIST-Arith. & MAWPS \\
\midrule
Lin-M-CBE & $18.00 \scriptscriptstyle{\pm 0.00}$ & $5.71 \scriptscriptstyle{\pm 0.00}$ & $4.71 \scriptscriptstyle{\pm 3.23}$ & $15.00 \scriptscriptstyle{\pm 0.00}$ \\
MLP-M-CBE & $28.00 \scriptscriptstyle{\pm 0.00}$ & $43.63 \scriptscriptstyle{\pm 0.00}$ & $22.24 \scriptscriptstyle{\pm 1.75}$ & $47.00 \scriptscriptstyle{\pm 0.00}$ \\
Sym-M-CBE & $13.00 \scriptscriptstyle{\pm 0.00}$ & $0.00 \scriptscriptstyle{\pm 0.00}$ & $0.05 \scriptscriptstyle{\pm 0.09}$ & $0.00 \scriptscriptstyle{\pm 0.00}$ \\
\bottomrule
\end{tabular}
}
\label{tab:ted_main}
\end{table}

CBMs allow humans to interact with the model by modifying the concept predictions. We evaluate the model's response to interventions by replacing predicted concepts with ground-truth values and by progressively increasing the fraction of corrected concepts. To simulate realistic conditions, we mildly perturb the input as $\tilde{x} = 0.9\cdot x + 0.1\cdot \epsilon$, where $\epsilon \sim \mathcal{N}(0, I)$. We further assess the responsiveness to interventions under stronger perturbations in Appendix~\ref{app:noisy_interventions}.

\begin{figure*}[ht]
    \centering
    \includegraphics[width=1\textwidth]{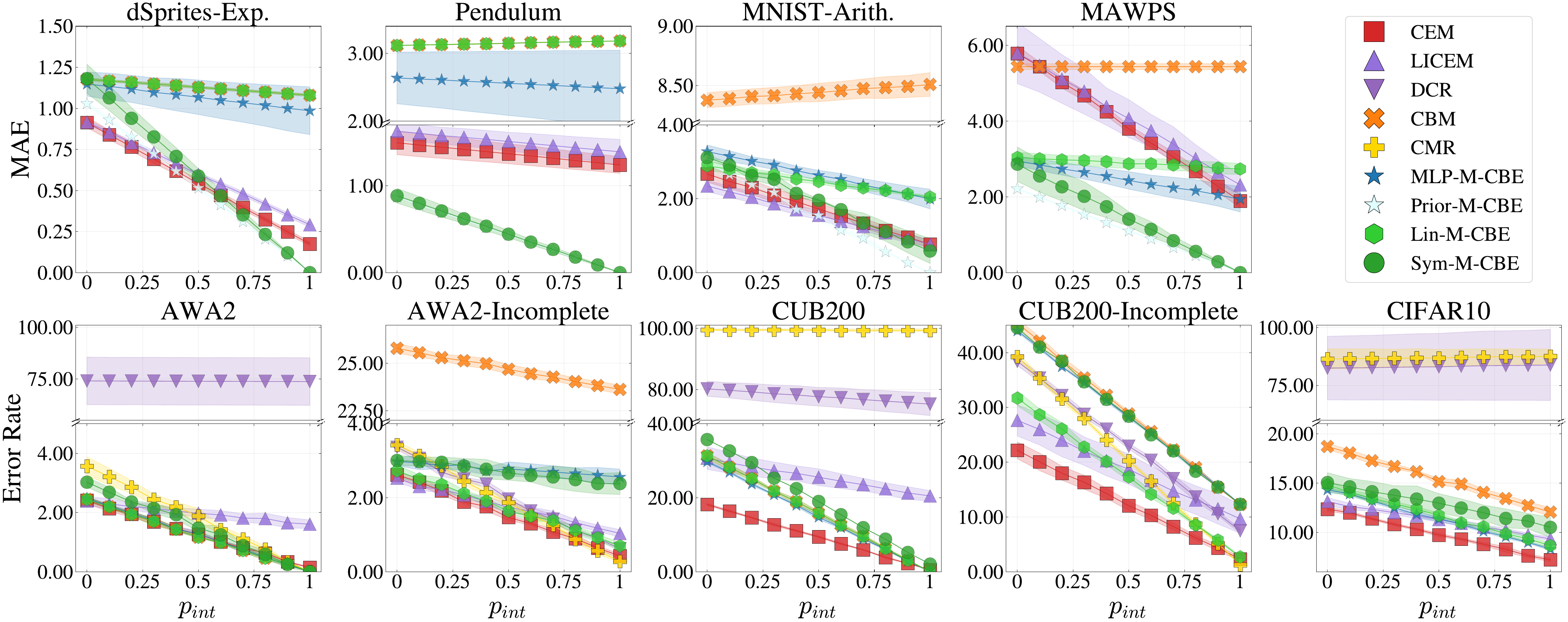}
    \caption{Effect of interventions on model performance. MAE (top) and error rate (bottom) as a function of intervention probability $p_{\text{int}}$. Shaded areas show $95\%$ confidence intervals over 5 seeds. For models with multiple experts, we use the configuration at the \emph{Pareto knee}. The knee point is identified using the maximum distance to chord method: the point with maximum perpendicular distance to the line connecting the best complexity and best accuracy extremes, representing the optimal trade-off.}
    \label{fig:interventions}
\end{figure*}

\mypar{Symbolic predictors align and discover true mechanisms (Figure~\ref{fig:interventions}, top, Table~\ref{tab:ted_main}).}
On regression tasks, \symbolicacro{} displays near-ideal intervenability: the MAE drops sharply to zero as the intervention probability approaches unity ($p_{\text{int}} \to 1$). This intervention response stems from \symbolicacro{}'s ability to recover expressions that closely approximate the true concept-to-task function, (e.g., on Pendulum, $8.00 * \text{sin}(\theta) + 10.00$). Table~\ref{tab:ted_main} confirms that \symbolicacro{} consistently achieves the lowest Tree Edit Distance (TED) to ground-truth expressions. The expressions learned by the methods are reported in Appendix~\ref{app:explanations}.

\mypar{Mixture of linear predictors excel on classification (Figure~\ref{fig:interventions}, bottom).}
On classification tasks, \linearacro{} emerges as the globally interpretable model most responsive to interventions. This is likely because the concept-to-task mapping in these datasets is closer to a mixture of linear functions, making \linearacro{} more aligned with the true underlying mapping and therefore more responsive to corrections.

\subsection{Adaptability}
\label{sec:exp_adapt}

\begin{table}[t]
\centering
\caption{Performance of \symbolicacro{} across small (S), medium (M), and complete (C) operator sets, measured by MAE and model complexity ($\pm95\%$ confidence interval).}
\label{tab:adaptability}
\setlength{\tabcolsep}{3pt}
\resizebox{\columnwidth}{!}{%
\begin{tabular}{lcccc}
\toprule
 & \multicolumn{2}{c}{Pendulum} & \multicolumn{2}{c}{MAWPS} \\
\cmidrule(lr){2-3} \cmidrule(lr){4-5}
Model & MAE & Complexity & MAE & Complexity \\
\midrule
\mlp{} & $0.46{\scriptstyle\pm0.08}$ & $706.0{\scriptstyle<0.01}$ & $1.05{\scriptstyle\pm0.04}$ & $3592.0{\scriptstyle<0.01}$ \\
Sym-M-CBE (S) & $3.80{\scriptstyle\pm0.02}$ & $3.0{\scriptstyle<0.01}$ & $2.73{\scriptstyle\pm0.03}$ & $4.0{\scriptstyle<0.01}$ \\
Sym-M-CBE (M) & $2.12{\scriptstyle\pm1.53}$ & $19.3{\scriptstyle\pm5.8}$ & $1.29{\scriptstyle\pm0.08}$ & $24.0{\scriptstyle<0.01}$ \\
Sym-M-CBE (C) & $0.46{\scriptstyle\pm0.13}$ & $6.0{\scriptstyle<0.01}$ & $1.36{\scriptstyle\pm0.21}$ & $24.0{\scriptstyle<0.01}$ \\
\bottomrule
\end{tabular}
}
\end{table}
While \classacro{} allow users to instantiate any interpretable functional form (e.g., linear, polynomial), \symbolic{} also supports post-hoc customization \emph{without retraining}.
Once the encoder and expert selector are trained, users can modify their operator vocabulary $\mathcal{W}$ to generate tailored symbolic expressions on demand. This enables switching from expert-level trigonometric forms to student-level simple polynomials, using the same learned representations.  

\mypar{\symbolicacro{} allows post-hoc adaptation to different users (Table~\ref{tab:adaptability}).}
We demonstrate this by distilling functions for three simulated user profiles: $\mathcal{W}=\{+,-\}$ (Small); $\mathcal{W}=\{+,-,\times\}$ (Medium); and an extended set including transcendental operators (Complete). The results for \emph{Pendulum} and \emph{MAWPS} confirm that \symbolicacro{} adapts to these diverse constraints, demonstrating greater flexibility than other models. 

\section{Related works}

Concept-based XAI (C-XAI)~\citep{kim2018interpretability,poeta2023concept} emerged to address the limited interpretability of standard attribution methods for laypeople~\citep{rudin2019stop, kim2023help}, by interpreting intermediate model representations via human-understandable concepts. Concept Bottleneck Models (CBMs)~\citep{koh2020concept} extend this approach by explicitly training models to align with human semantics. Still, CBMs face significant issues: reduced accuracy compared to unrestricted models~\citep{debot2024interpretable}, limited global interpretability~\citep{barbiero2023,de2025}, and costly concept annotations~\citep{oikarinen2023label,debole2025if}. Our work addresses the first two trade-offs by combining multiple task predictors with user-defined functional forms to balance accuracy and transparency, while experimentally demonstrating compatibility with label-efficient methods~\citep{oikarinen2023label}.

The works most closely related to ours route samples to mixtures of interpretable predictors~\citep{pradier2021preferential,ismailinterpretable,debot2024interpretable}. However, all of these works fix the functional form a priori. Additionally, \citet{pradier2021preferential} and \citet{ismailinterpretable} apply predictors directly to raw inputs rather than concepts, while \citet{debot2024interpretable} restricts predictors to Boolean expressions over concepts. \classacro{} extends this line of work by exposing the functional form as a flexible, user-controlled design choice, rather than committing to a fixed form at design time.

In the context of XAI, Symbolic Regression (SR) is increasingly used to replace opaque models with explicit mathematical expressions~\citep{dong2025recent}. These efforts fall into two categories: intrinsic approaches that embed symbolic operators directly into architectures~\citep{sahoo2018learning, biggio2021neural}, and post-hoc approaches that approximate black-boxes with symbolic surrogates~\citep{alaa2019demystifying, bendinelli2023controllable}. While these methods effectively balance approximation error and expression complexity~\citep{langley1979rediscovering, langley1981bacon, koza1994genetic}, they operate over tabular data rather than raw inputs such as images.


\section{Conclusion}

We introduced \classacro{}, a unified framework that generalizes CBMs by enabling control over two key dimensions of the task predictor: the number of experts and their functional form. Our framework subsumes existing concept-based methods while exposing a largely unexplored two-dimensional space, enabling two novel instantiations: \linear{} and \symbolic{}. Empirical results demonstrate that navigating this space is essential for optimal accuracy-interpretability trade-offs: algebraic forms outperform Boolean logic in high-dimensional spaces (up to +65\% accuracy), multiple experts compensate for incomplete concepts, and \symbolicacro{} recovers ground-truth expressions with superior intervention responsiveness. \classacro{} establishes a principled approach for developing interpretable models that adapt to both task requirements and diverse user needs.

\mypar{Limitations.} While individual expert functions in \classacro{} are interpretable by design, the selector network $p(m \mid x)$ that routes inputs to experts operates as a black-box, limiting transparency about when and why a particular expert is selected. Additionally, \symbolicacro{} relies on heuristic search methods that are computationally intensive, particularly for large operator vocabularies or high-dimensional concept spaces. Finally, the number of experts $M$ must be specified as a hyperparameter, requiring users to balance expressiveness and interpretability through model selection.

\section*{Acknowledgements}

PB acknowledges support from the SNSF, through the project ``IMAGINE'' (grant ID 224226), from the Hasler Foundation, through the project ``Towards Scalable Multimodal Causal Deep Learning'' (grant ID 2024-05-15-70), 
and from the Research Foundation Flanders, through the project ``Relational Concept-Based Models'' (grant ID G033625N). 
GDF acknowledges support by the Swiss National Science Foundation (SNSF) through the grant 205121\_197242 for the project “PROSELF: Semi-automated Self-Tracking Systems to Improve Personal Productivity”, and by the Hasler Foundation through the project ``Towards Scalable Multimodal Causal Deep Learning'' (grant ID 2024-05-15-70). 
AC and JS acknowledge support from FFF of the University of Liechtenstein grant lbs\_25\_13.
FG has been supported by the European Union – Horizon 2020 Program under the scheme “INFRAIA-01-2018-2019 – Integrating Activities for Advanced Communities”, Grant Agreement n.871042, “SoBigData++: European Integrated Infrastructure for Social Mining and Big Data Analytics” (http://www.sobigdata.eu).
This work has been partially supported by the Italian Project Fondo Italiano per la Scienza FIS00001966 ``MIMOSA'', and by the European Community Horizon~2020 programme under the funding schemes G.A. 101120763 ``TANGO'.
This work has also been supported by the EU Framework Program for Research and Innovation Horizon under the Grant Agreement No 101073307 (MSCA-DN LeMuR).

\section*{Impact Statement}

The societal implications of this work are predominantly positive. By enabling users to align model predictive mechanisms with computations they may interpret, \classacro{} democratizes access to AI across varying expertise levels, from domain specialists utilizing complex mathematical expressions to lay users favoring simplified formulations. Ethically, our framework advances responsible AI deployment by ensuring predictions remain grounded in finite, inspectable sets of interpretable expressions.


\bibliography{main}
\bibliographystyle{icml2026}

\newpage
\appendix
\onecolumn

\section{Global Interpretability}
\label{app:global_interpretability}

In this appendix, we highlight the properties that \classacro{} gain by committing to a finite, fixed set of expressions, and contrast them with models whose task predictor has an interpretable form but generates a distinct expression for each input — which translates to an infinite memory of expressions in our framework — such as \textbf{LICEM}~\citep{de2025} and \textbf{DCR}~\citep{barbiero2023}. This precludes any form of global model inspection. While \classacro{} also provide instance-level explanations by routing each input to one of $M$ fixed expressions, they additionally preserve \emph{global} interpretability through their finite expression set. This distinction leads to four concrete advantages.

\begin{itemize}
    \item \textbf{Reasoning shortcuts}. Since models with an infinite expression memory admit many valid explanations per sample, they are not identifiable. This means they can discover equations that are mathematically valid but semantically meaningless. \classacro{}, by contrast, learn a fixed set of expressions that can be collectively evaluated for semantic consistency.
    \item \textbf{Verifiability}. The decision mechanisms of models with an infinite expression memory cannot be inspected without a specific input sample~\citep{debot2024interpretable}. In contrast, \classacro{}' finite set of expressions allows a human to verify all reasoning paths before deployment.
    \item \textbf{Knowledge Integration}. Models with an infinite expression memory lack a selector over a discrete expression set: since a different expression is produced for each input, there is no fixed formula that users can define or inject a priori. \classacro{} address this directly, as demonstrated by \prior{}, which can incorporate user-defined expressions into the fixed expression set.
    \item \textbf{Robustness}. An infinite expression memory increases the amount of information leaking from the input and bypassing the concept encoder. This results in worse intervention responsiveness under Out-Of-Distribution (OOD) inputs. We empirically validate this property in a dedicated experiment shown in Appendix~\ref{app:noisy_interventions}.
\end{itemize}

\section{Proof of Proposition~\ref{theo:uni_approximation}}
\label{app:uni_approximation}
\begin{proof}
    The claims are straight consequences of existing universal approximation theorems. In particular, $(1)$ follows from the fact that an MLP (that can be represented as an expression tree with $+$ and $\times$ among operation nodes and $\sigma$ activation functions) with a single hidden layer is a universal approximator if and only if its activation function is non-polynomial \citep{leshno1993multilayer}. On the other hand, $(2)$ is a consequence of the classic result in mathematical analysis that any continuous function can be approximated by a piecewise linear function     \cite{schumaker2007spline}.
\end{proof}

\section{Proof of Proposition~\ref{theo:bound}}
\label{app:approximation_proof}
\begin{proof}
The result follows from standard approximation bounds developed for numerical finite methods and interpolation methods with multivariate polynomials \citep{mossner2009error}. The bound is generally defined to be:
\begin{equation}
||f - f^\star||_\infty \le C \cdot h^{k+1} \cdot ||f^{(k+1)}|| \ ,
\label{eq:spline_bound} 
\end{equation}
where $h$ is the grid size over the input dimensions. By assuming a regular spacing so that the $M$ splines cover $M$ subspaces with equal measure over $[a,b]^n$, we can determine $h = \frac{b-a}{\sqrt[n]{M}}$. Replacing the estimation of the value of $h$ into \Cref{eq:spline_bound} completes the proof.

\end{proof}

\section{Out-of-the-box \classacro{}}
\label{app:existing_cbms}

\classacrosingular{} characterize a wide range of CBMs architectures, including existing concept-based methods that correspond to specific inference choices within our generalized framework. In all methods discussed below, the operator set $o$ and expression tree structure $e$ are fixed by the model class; we therefore condition on $(o,e)$ and focus on how the parameters $\theta$ are modeled.

\mypar{CBM.} The original CBM~\citep{koh2020concept} fixes the expression tree to a linear function over concepts and learns a single set of global parameters shared across all samples. In our framework, this corresponds to approximating \(p(\theta \mid x)\) with a delta distribution that places all its mass on a single, input-independent parameter value $\theta$, where \((o,e)\) define a linear expression (i.e., \(o = \{+, \times\}\) with appropriate edges).  We omit $v$ to simplify the notation since all available concepts are used. The conditional distribution induced by the CBM can be expressed as $p(y, c \mid x; o, e) = p(y \mid c; o, e, \theta)\, p(c \mid x)$. For a regression task, the task predictor is modeled as a Gaussian: $p(y \mid c; o, e, \theta) = \mathcal{N}\big(y; f_{(o,e)}(c;\theta), \sigma^2\big)$, where the mean is a linear function of the concepts: $f_{(o,e)}(c;\theta) = \theta^\top c + b$, with $\theta = w$ representing the weights on the concepts, $b$ the bias term, and $\sigma^2$ denoting the variance of the Gaussian noise.

\textbf{CMR. } CMR~\citep{debot2024interpretable} instead constrains the expression tree to Boolean formulas and restricts
parameters to a finite discrete set. Each concept can appear positively $(+1)$, negated $(-1)$, or be
absent $(0)$ from the formula. CMR learns a memory of $M$ parameter configurations
$\{\theta_1, \ldots, \theta_M\}$ and models $p(\theta \mid x) = \sum_{m=1}^{M} p(m\mid x)\, p(\theta\mid m)$.

In the limiting case $M \to \infty$, the distribution is no longer discretized but instead produces sample-specific parameters. Both \textbf{LICEM}~\citep{de2025} and \textbf{DCR}~\citep{barbiero2023} correspond to this limiting case, differing only in the structure enforced by $(o,e)$. LICEM constrains the expression tree's structure to represent a linear equation, while DCR constrains it to represent a Boolean expression.

\section{Symbolic predictor ablation}
\label{app:sr_ablation}

We compare \symbolicacro{} with \kan{}, an alternative instantiation that uses Kolmogorov-Arnold Networks~\citep{liu2025kan}.

\begin{table*}[t]
\centering
\footnotesize
\setlength{\tabcolsep}{4pt}
\caption{Predictive performance (MAE) of all models on regression tasks, reported as mean $\pm$ 95\% confidence interval.}
\label{tab:ablation_task_performance}
\begin{tabular}{lcccc}
\hline
Model & dSprites-Exp. & Pendulum & MNIST-Arith. & MAWPS \\
\hline
BlackBox & $0.93_{\pm 0.03}$ & $0.14_{\pm 0.05}$ & $2.12_{\pm 0.02}$ & $0.79_{\pm 0.04}$ \\
CEM & $0.88_{\pm 0.03}$ & $0.60_{\pm 0.10}$ & $1.80_{\pm 0.18}$ & $0.73_{\pm 0.02}$ \\
LICEM & $0.90_{\pm 0.01}$ & $0.77_{\pm 0.07}$ & $1.55_{\pm 0.11}$ & $0.74_{\pm 0.03}$ \\
MLP-M-CBE & $1.06_{\pm 0.05}$ & $2.30_{\pm 0.46}$ & $2.81_{\pm 0.14}$ & $1.95_{\pm 0.23}$ \\
Prior-M-CBE & $0.90_{\pm 0.01}$ & $0.24_{\pm 0.01}$ & $2.67_{\pm 0.11}$ & $1.00_{\pm 0.01}$ \\
Kan-M-CBE & $0.91_{\pm 0.01}$ & $0.24_{\pm 0.02}$ & $4.47_{\pm 1.26}$ & $0.96_{\pm 0.02}$ \\
Lin-M-CBE & $1.13_{\pm 0.01}$ & $3.04_{\pm 0.00}$ & $2.74_{\pm 0.05}$ & $2.63_{\pm 0.12}$ \\
Sym-M-CBE & $1.02_{\pm 0.01}$ & $0.33_{\pm 0.12}$ & $2.92_{\pm 0.16}$ & $1.33_{\pm 0.18}$ \\
\hline
\end{tabular}
\end{table*}

Table~\ref{tab:ablation_task_performance} shows that both models achieve comparable MAE, with \kan{} obtaining slightly lower errors on all datasets except MNIST-Arith.

\begin{figure*}[ht]
    \centering
    \includegraphics[width=0.9\textwidth]{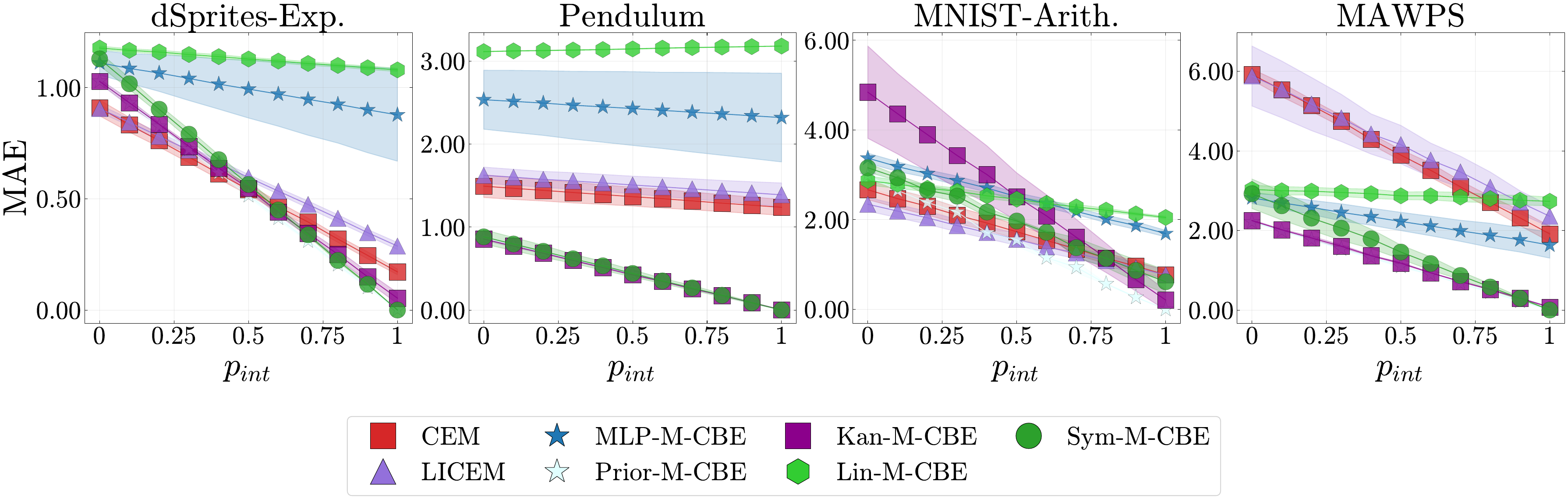}
    \caption{Effect of interventions on model performance. MAE as a function of intervention probability $p_{\text{int}}$. Shaded areas show $95\%$ confidence intervals over 5 seeds.}
    \label{fig:ar_ablation_interventions}
\end{figure*}

Figure~\ref{fig:ar_ablation_interventions} demonstrates that \kan{} exhibits strong responsiveness to interventions, achieving MAE values at $p_{\text{int}}\approx 1.0$ comparable to \prior{}.

\begin{table}[h]
\centering
\footnotesize
\setlength{\tabcolsep}{4pt}
\caption{Complexity (Node Count) of learned expressions, reported as mean $\pm$ 95\% confidence interval.}
\label{tab:sr_ablation_complexity_node_count}
\begin{tabular}{lcccc}
\toprule
Model & dSprites-Exp. & Pendulum & MNIST-Arith. & MAWPS \\
\midrule
MLP-M-CBE & $41.6_{\tiny{\pm 8.6}}$ & $41.6_{\tiny{\pm 8.6}}$ & $184.0_{\tiny{\pm 0.0}}$ & $310.4_{\tiny{\pm 65.9}}$ \\
Prior-M-CBE & $10.0_{\tiny{\pm 0.0}}$ & $6.0_{\tiny{\pm 0.0}}$ & $16.0_{\tiny{\pm 0.0}}$ & $24.0_{\tiny{\pm 0.0}}$ \\
Kan-M-CBE & $113.8_{\tiny{\pm 14.7}}$ & $101.6_{\tiny{\pm 15.2}}$ & $267.0_{\tiny{\pm 35.4}}$ & $891.2_{\tiny{\pm 41.6}}$ \\
Lin-M-CBE & $7.4_{\tiny{\pm 1.2}}$ & $8.0_{\tiny{\pm 0.0}}$ & $32.0_{\tiny{\pm 0.0}}$ & $44.0_{\tiny{\pm 0.0}}$ \\
Sym-M-CBE & $13.2_{\tiny{\pm 2.0}}$ & $10.6_{\tiny{\pm 6.7}}$ & $12.0_{\tiny{\pm 2.0}}$ & $25.0_{\tiny{\pm 2.0}}$ \\
\bottomrule
\end{tabular}
\end{table}

However, despite achieving competitive predictive accuracy and intervention responsiveness, \kan{} produces expressions with substantially higher complexity (Table~\ref{tab:sr_ablation_complexity_node_count}) compared to other methods. This stems from the KAN training procedure: the model first learns network activations (splines), then applies sparsification, and finally adds affine parameters to each spline (e.g., transforming an activation $g(x)$ into $a \cdot g(b\cdot x + c) + d$), resulting in considerably larger expressions. The compactness of expressions extracted from KAN networks depends critically on multiple hyperparameters, including entropy regularization for sparsification, pruning strength, and the symbolic substitution process that replaces activations with mathematical symbols. This increased complexity makes \kan{} more difficult to tune in practice while yielding inferior performance in terms of expression size and, ultimately, alignment with ground-truth mechanisms, as shown in Table~\ref{tab:sr_ablation_ted_metrics}. We emphasize that this does not imply KANs are inherently inferior; rather, we found them more challenging to train and tune compared to the genetic programming-based symbolic regression approach implemented in PySR~\citep{cranmer2023interpretable}.

\begin{table}[h]
\centering
\footnotesize
\setlength{\tabcolsep}{4pt}
\caption{TED between learned and ground truth expressions, reported as mean $\pm$ 95\% confidence interval.}
\label{tab:sr_ablation_ted_metrics}
\begin{tabular}{lcccc}
\toprule
Model & dSprites-Exp. & Pendulum & MNIST-Arith. & MAWPS \\
\midrule
MLP-M-CBE & $45.45_{\tiny{\pm 8.55}}$ & $39.02_{\tiny{\pm 8.66}}$ & $43.92_{\tiny{\pm 0.08}}$ & $80.60_{\tiny{\pm 16.46}}$ \\
Prior-M-CBE & $0.00_{\tiny{\pm 0.00}}$ & $0.00_{\tiny{\pm 0.00}}$ & $0.00_{\tiny{\pm 0.00}}$ & $0.00_{\tiny{\pm 0.00}}$ \\
Lin-M-CBE & $17.40_{\tiny{\pm 1.18}}$ & $5.71_{\tiny{\pm 0.00}}$ & $4.70_{\tiny{\pm 0.01}}$ & $15.00_{\tiny{\pm 0.00}}$ \\
Sym-M-CBE & $9.85_{\tiny{\pm 8.19}}$ & $7.02_{\tiny{\pm 10.90}}$ & $0.05_{\tiny{\pm 0.02}}$ & $0.70_{\tiny{\pm 1.37}}$ \\
\bottomrule
\end{tabular}
\end{table}

\section{Training details}
\label{app:training}

All model variants share a common training framework implemented in PyTorch Lightning. We use \textit{AdamW} as the optimizer with dataset-specific learning rates (ranging from $10^{-4}$ to $10^{-1}$) and a \textit{ReduceLROnPlateau} scheduler that decreases the learning rate by a factor of $\gamma = 0.5$ when the validation loss fails to improve for $25$ consecutive epochs. Early stopping is applied with a patience of $50$. Training proceeds for a maximum of $600$ epochs. To minimize leakage~\citep {marconato2022glancenets}, all the concept-based methodologies are trained in a disjoint manner~\citep{koh2020concept}: the task predictor uses ground-truth concept labels during training. In addition to that, when the concepts are binary, we apply hard thresholding.

The total loss is a weighted combination of concept and task prediction losses:
\begin{equation}
    \mathcal{L}_{\text{total}} = \lambda_c \mathcal{L}_{\text{concept}} + \lambda_y \mathcal{L}_{\text{task}}
\end{equation}
where $\lambda_c = 1.0$ and $\lambda_y = 0.1$ across all models. For classification tasks, we use binary cross-entropy for concept prediction and cross-entropy for task prediction. For regression tasks, mean squared error is employed for both. In all methodologies, sparsity is promoted by tuning the respective hyperparameters in order to maximize the \ai{} trade-off. Nevertheless, for the methodologies we proposed, we followed a multi-stage pipeline.

For all the methodologies using a mixture of experts, we employ a selector implemented as an MLP with output size equal to the number of experts. During training, we employ a \textit{Gumbel-Softmax}~\citep{jang2016categorical} to sample an index according to the distribution produced by the selector $p(m \mid x)$ for the specific sample. We gradually reduce the temperature $\tau$ of the \textit{Gumbel-Softmax} from $\tau=2$ to $\tau=0.05$ following a cosine decay. This schedule ensures that the selection distribution becomes increasingly peaked as training progresses. At test time, a single expression index $m$ is sampled from $p(m \mid x)$, and the prediction is computed using only the selected expression.

\mypar{\linearacro{}. }
Training proceeds in two stages. The first stage trains all components (concept encoder, selector, and linear memory) end-to-end with $\ell_1$ regularization on weight matrices ($\lambda_1 = 10^{-5}$) and $\ell_2$ regularization on bias terms. Upon initial convergence, the second stage applies hard thresholding: weights with absolute values below $\tau = 10^{-6}$ are set to zero and frozen. The remaining non-zero parameters are then fine-tuned for up to 600 additional epochs, promoting sparse, more interpretable linear equations.

\mypar{\symbolicacro{}.}
Training follows a three-stage pipeline. In the first stage, the complete model is trained with black-box neural network predictors (MLPs) using the shared training configuration. Upon convergence, the symbolic regression phase begins: predictions are collected from the trained model on the entire training set. PySR then discovers symbolic equations for each placeholder blackbox independently, fitting equations using the subset of data points selected by the selector for that specific placeholder blackbox. The symbolic regression search uses the following configuration: 40 populations of size 60 evolve for 100 iterations with 380 cycles per iteration. For classification tasks, operators are restricted to $\{+, -, \times\}$ with maximum equation complexity $5 \times k$ (where $k$ is the number of concepts). For regression tasks, the operator set is expanded to include $\{\sin, \cos, \exp, \log, \tan, \tanh, x^2, x^3, \sqrt{x}, x^{-1}\}$ with maximum complexity 40. Discovered equations are substituted into the model as symbolic predictor modules with trainable numeric parameters (exponents remain fixed).

In the third stage, the entire model is fine-tuned with the symbolic predictor for up to 600 epochs. To account for the reduced parameter count, the learning rate is increased by a factor of $5$ relative to the initial training phase. Only the numeric parameters in the symbolic equations and the upstream networks (concept encoder and selector) are trainable; the structural form of the equations remains fixed.

\section{Implementation details}
\label{app:implementation_details}
\subsection{Baselines}
\label{app:baselines}

This section provides implementation details for all methods. We implemented all concept-based baselines using the PyC library~\citep{pycteam2025concept}. To ensure computational efficiency, all models operate on pre-computed embeddings rather than raw inputs. We employ pre-trained backbones: \texttt{facebook/dinov2-base} for high-resolution images, \texttt{ResNet18} for low-resolution images, and \texttt{google/flan-t5-large} for textual data. Unless otherwise specified, all networks use LeakyReLU activations with a default hidden dimensionality of $64$ (adjusted per dataset). Complete hyperparameter configurations are provided in the supplementary materials.

The \textbf{Blackbox} baseline is a single hidden-layer MLP mapping embeddings directly to task predictions. \textbf{CBM}~\citep{koh2020concept} follows the standard architecture with a linear task predictor. \textbf{CEM}~\citep{zarlenga2022concept} uses concept embeddings of dimensionality $16$ with an MLP as task predictor. \textbf{DCR}~\citep{barbiero2023}, \textbf{LICEM}~\citep{de2025}, and \textbf{CMR}~\citep{debot2024interpretable} all use concept embeddings of dimensionality $16$. Both CEM and LICEM are adapted to continuous concepts following~\citet{ismail2024concept}. Specifically, in order to preserve responsiveness to interventions, the concept embedding $\hat{c}$ of each concept is multiplied by the respective concept prediction.

\subsection{Proposed methods}
\label{app:proposed_methods}

We implement three variants of our \classacro{} framework, each employing different symbolic reasoning strategies over learned concept representations.

All variants share a common architecture consisting of: (i) a concept encoder, (ii) a selector network that produces a probability distribution over $M$ expressions (experts), and (iii) a task predictor that executes the selected expression. The models differ primarily in how the expressions are obtained and parameterized.

\mypar{\prior{}.}
Symbolic equations are provided as SymPy expressions. Each memory slot contains a fixed equation $f_m$ whose structure and parameters remain frozen.

\mypar{\mlp}
In \mlp{} each expert is an MLP with $1$ hidden layer with hidden size set as specified in Section~\ref{app:baselines}. We apply an L1 regularization to the parameters of each MLP. This loss term is multiplied by $1e-5$ and added to the loss.

\mypar{\kan.}
In \kan{}, each expert in the mixture is implemented using a KAN. The architecture is specified by a width vector $\mathbf{w}$, which varies based on the task. We use a deeper architecture: $\mathbf{w} = [n_c, n_c+1, n_c+1, 1]$, where $n_c$ is the number of concepts. Each edge $(i,j)$ between layers is parameterized by a univariate B-spline function $\phi_{i,j}: \mathbb{R} \rightarrow \mathbb{R}$ with 5 grid points and cubic basis functions ($k=3$). KANs inherently learn smooth, interpretable functions and support automatic symbolic conversion. During training, we apply KAN-specific regularization to encourage sparsity. We set the hyperparameter related to this sparsity to $\lambda_{\text{sparsity}}=0.001$. After the first training, each KAN expert is pruned and each spline is substitute with the symbol in $\{+,-,\times,\sin, \cos, \exp, \log, \tan, \tanh, x^2, x^3, \sqrt{x}, x^{-1}, x^{-2}\}$ the best fit the spline

\mypar{\linearacro{}.}
Each memory slot stores a weight matrix $W_m$ and bias vector $b_m$. We apply $\ell_1$ regularization on weights with coefficient $\lambda_1 = 10^{-5}$ and hard thresholding with $\tau = 10^{-6}$ after initial training.

\mypar{\symbolicacro{}.}
We use PySR~\citep{cranmer2023interpretable} with 40 populations of size 60, 100 iterations, and 380 cycles per iteration. For classification: operators $\{+, -, \times\}$, maximum complexity $5 \times k$ (where $k$ is the number of concepts). For regression: additional operators $\{\sin, \cos, \exp, \log, \tan, \tanh, x^2, x^3, \sqrt{x}, x^{-1}, x^{-2}\}$, maximum complexity 40. Discovered equations are converted to SymPy expressions with trainable parameters.

\section{Complexity metrics}
\label{app:complexity}

To evaluate the complexity of the symbolic expressions discovered, we employ several metrics that quantify different structural properties of the expression trees. Using the notation from Section~\ref{sec:our_exp_tree}, we additionally denote by $T_n$ the subtree rooted at node $n \in N$, by $N_{T_n}$ its node set, and by $d(n)$ the depth of $n$ (i.e., the path length from the root to $n$). For multi-mechanism predictors with $M$ expression trees $\{T^{(1)}, \ldots, T^{(M)}\}$, we report aggregate metrics summed across all trees. Using this notation, the complexity metrics are defined as follows:

\begin{itemize}
    \item \textbf{Node count}: The total number of elements in the expression tree: $\text{NodeCount}(T) = |N| = |V| + |O| + |\Theta|$. This is the default complexity metric employed in the PySR library~\citep{cranmer2023interpretable}.

    \item \textbf{Tree depth}: The maximum nesting level of operations, reflecting the hierarchical complexity of the expression: $\text{Depth}(T) = \max_{n \in N} d(n)$.

    \item \textbf{Expression complexity}: The sum of subtree sizes across all nodes, which penalizes deeply nested structures more heavily than shallow ones~\citep{keijzer2007crossover,smits2005pareto}: $\text{ExprComplexity}(T) = \sum_{n \in N} |N_{T_n}|$.

    \item \textbf{Total variables}: The number of unique concept variables appearing in the expression: $\text{TotalVars}(T) = |V|$. Note that the same concept $c_k$ may appear multiple times in the tree; this metric counts unique variables, not occurrences.

    \item \textbf{Total operations}: The count of operator nodes in the tree: $\text{TotalOps}(T) = |O|$. Since operators correspond to internal (non-leaf) nodes, this equals the number of non-terminal nodes in $T$.

    \item \textbf{Weighted node count}: A variant of node count that assigns different weights to operators based on their complexity. Let $w: \mathcal{O} \to \mathbb{R}^+$ be a weight function, where basic arithmetic operators $\{+, -, \times, \div\}$ receive unit weight $w(o) = 1$, while transcendental functions (e.g., $\sin$, $\cos$, $\exp$, $\log$) receive weight $w(o) = 2$. Variables and constants receive unit weight. The weighted node count is $\text{WeightedCount}(T) = |V| + |\Theta| + \sum_{o \in O} w(o)$. This metric favors expressions composed of simpler primitives.
\end{itemize}

\section{Concept Size ablation}
\label{app:concept_size_ablation}

In this section, we investigate why Boolean functional forms fail to scale with the number of concepts. To systematically study this phenomenon, we evaluate \linearacro{} and concept-based baselines (CBM, CMR, DCR, LICEM) on CUB200 and CIFAR10 datasets, both of which feature concept bottlenecks with more than 100 concepts. We systematically vary the bottleneck size by randomly subsampling different subsets of concepts from the original set. For each bottleneck size, we train all models on the resulting modified dataset and evaluate their performance. This process is repeated across multiple bottleneck sizes to observe how model accuracy changes as a function of the number of available concepts. For \linearacro{} and CMR, we set the number of experts to $2$. 

As shown in Figure~\ref{fig:concept_size_ablation}, models employing Boolean functional forms (CMR, DCR) exhibit a clear degradation in accuracy as the concept bottleneck size increases. This behavior strengthens our hypothesis that conjunctive Boolean rules become increasingly brittle with larger concept sets: the probability of mispredicting at least one concept grows with dimensionality, causing the entire logical formula to fail. In contrast, models with more flexible functional forms (\linearacro{}, CBM, LICEM) demonstrate improved accuracy as the bottleneck size increases. These architectures benefit from the additional task-relevant information provided by larger concept sets, as their continuous aggregation mechanisms are more robust to individual concept prediction errors.

\begin{figure*}[ht]
    \centering
    \includegraphics[width=0.8\textwidth]{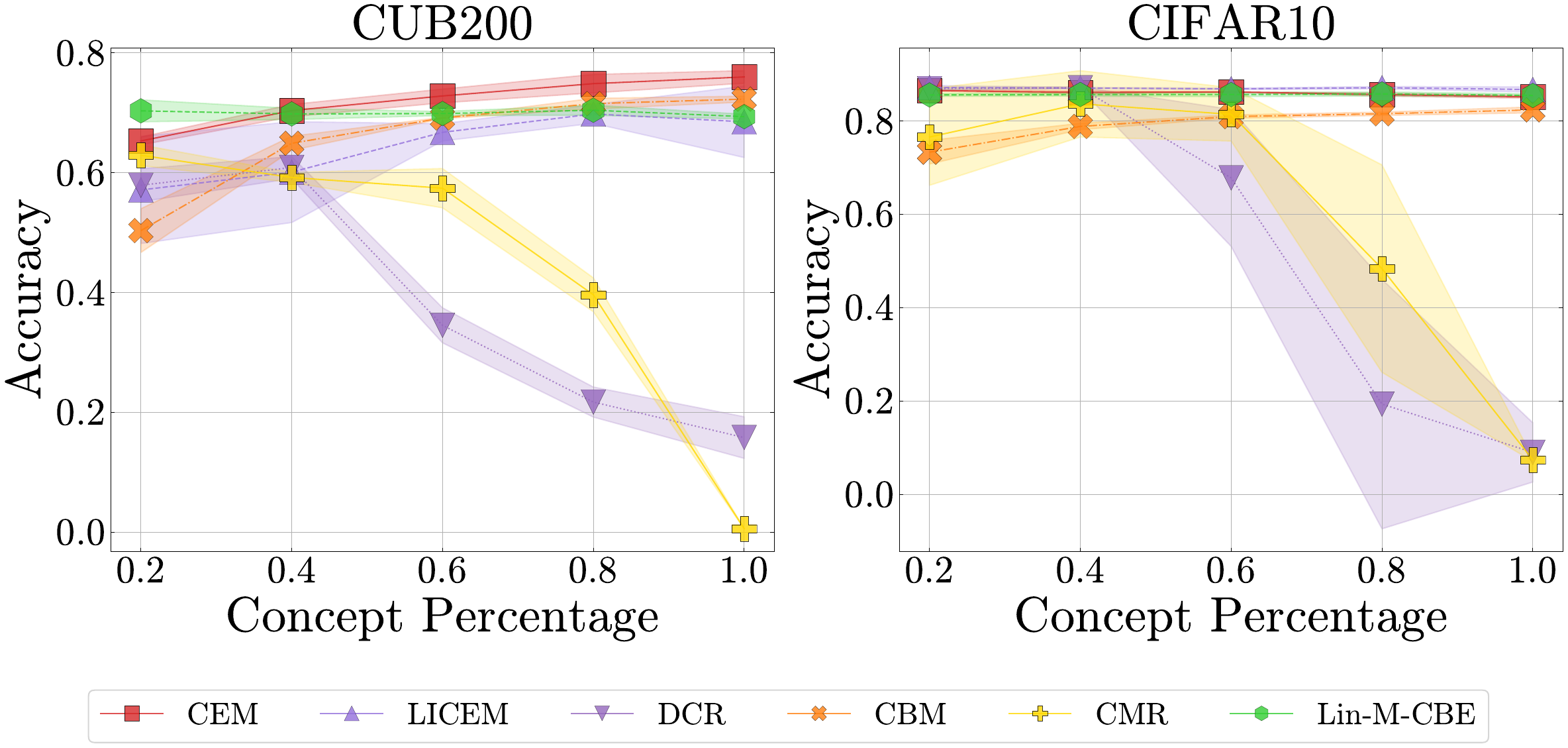}
    \caption{Effect of concept bottleneck size on model accuracy for CUB200 (left) and CIFAR10 (right). The x-axis shows the percentage of selected concepts, and the y-axis reports accuracy. For task predictors with Boolean functional forms (CMR, DCR), accuracy decreases as more concepts are included, while models with flexible functional forms (\linearacro{}, \symbolicacro{}) improve as bottleneck size increases. Error bars indicate $95\%$ confidence intervals over 5 seeds.}
    \label{fig:concept_size_ablation}
\end{figure*}

\section{Interventions under Noisy Inputs}
\label{app:noisy_interventions}

One advantage of intervenable models is that they can be actively corrected when deployed in real-world environments subject to OOD shifts. Nevertheless, recent works~\citep{zarlenga2025avoiding,de2025v} have shown that concept-based architectures that allow residual input information to bypass the concept bottleneck suffer a significant degradation in terms of responsiveness to interventions under OOD inputs — i.e., they cannot recover predictive accuracy on the downstream task even if we replace the concept predictions with the corresponding ground-truth values.

In this appendix, we assess intervention robustness under a noisy input regime on all regression datasets (MNIST-Arithm, dSprites-Exp, Pendulum, MAWPS), as these are the only datasets for which the true concept-to-task mechanisms are known, allowing us to understand whether a task predictor whose functional form is closer to the true concept-to-task mapping is also more responsive to interventions in OOD settings. Since CMR and DCR assume discrete concepts and outputs, they are inapplicable to regression tasks and are therefore excluded; results are shown for all remaining models. Specifically, we corrupt the input $x$ as $\tilde{x} = 0.5\cdot x + 0.5\cdot \epsilon$, where $\epsilon \sim \mathcal{N}(0, I)$. As half of the information in $\tilde{x}$ is pure Gaussian noise, this provides a challenging setting in which the concept encoder must operate on heavily degraded inputs. We then measure task performance as ground-truth interventions are progressively applied, following the same protocol as in Section~\ref{sec:exp_int}.

Figure~\ref{fig:interventions_ood} reveals how both the functional form and the number of experts affect intervention responsiveness. With respect to functional form, aligning the task predictor with the true underlying concept-to-task mechanism enables the model to recover predictive performance through ground-truth interventions, as demonstrated by \prior{} and \symbolicacro{}. A finite set of experts, on the other hand, limits the amount of input leakage that can bypass the concept bottleneck. This leakage is unconstrained in models with continuous parameterization such as LICEM, which exhibits a high MAE ($\approx 100$) in the absence of interventions and struggles to recover predictive performance even when all concepts are corrected ($p_{\text{int}}=1$).

\begin{figure*}[ht]
    \centering
    \includegraphics[width=0.9\textwidth]{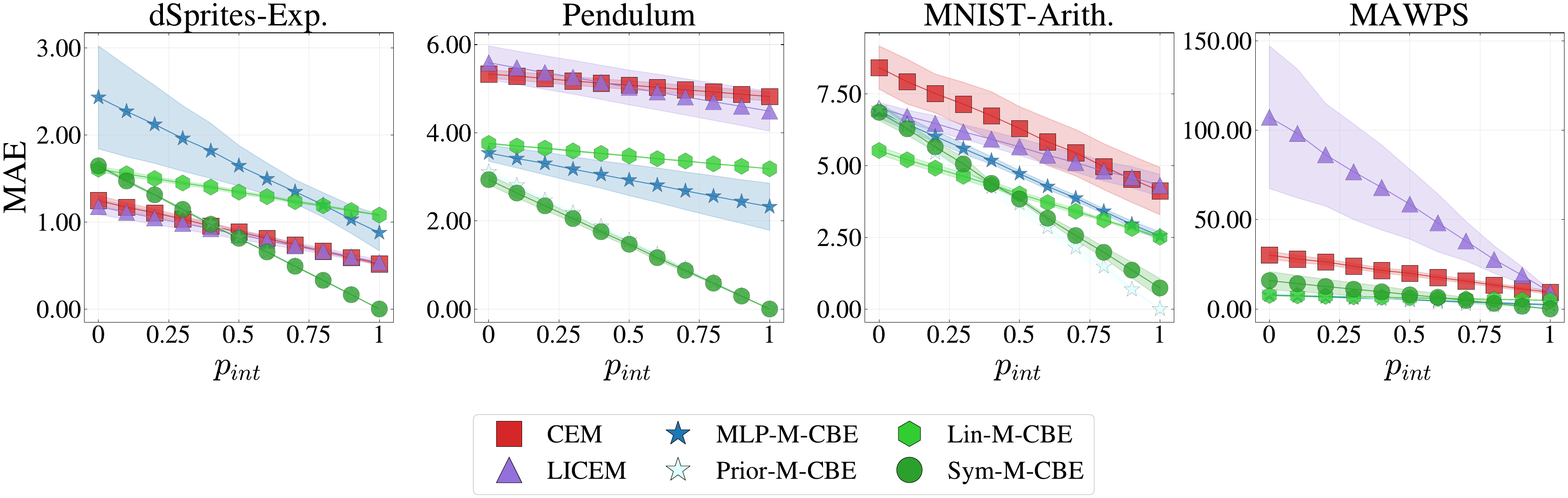}
    \caption{Effect of interventions on model performance under heavily noisy inputs ($\tilde{x} = 0.5\cdot x + 0.5\cdot \epsilon$, $\epsilon \sim \mathcal{N}(0,I)$). MAE as a function of intervention probability $p_{\text{int}}$ across all regression datasets. Shaded areas show $95\%$ confidence intervals over 5 seeds. For models with multiple experts, the number of experts is set to match the true number of concept-to-task mechanisms in each dataset.}
    \label{fig:interventions_ood}
\end{figure*}

\section{Datasets details}
\label{app:datasets}

\subsection{Synthetic Datasets}

\mypar{MNIST-Arithm} The MNIST dataset \citep{lecun2010} is a widely used benchmark consisting of grayscale images of handwritten digits. It contains 60,000 training examples and 10,000 test examples, sampled from the same distribution, with each image annotated by its corresponding digit label. MNIST-Arithm is derived from MNIST through the following procedure. First, we specify the total number of images to generate (in our experiments, 100000).  Each generated image is created by randomly sampling two images from MNIST, extracting their digit labels, and combining them into a new image separated by an arithmetic operation (addition, subtraction, multiplication, or division). Correspondingly, each image is annotated with: (i) two concept variables representing the digits contained in the image, and (ii) a task variable corresponding to the result of the arithmetic operation. Then, the resulting dataset is split into training (70\%), validation (10\%), and test (20\%) sets. Finally, each image is preprocessed using a pre-trained \texttt{facebook/dinov2-base} model with default Hugging Face weights.

\mypar{dSprites-Exp} The dSprites dataset~\citep{dsprites17} is a widely used dataset containing 737280 images of 2D white shapes on a black background generated from 6 ground truth independent latent factors: color, shape, scale, rotation, and x and y positions of a sprite. dSprites-Exp dataset is derived from dSprites through the following procedure. First, we specify the total number of images to generate (in our experiments, 100,000). Each generated image is sampled from the original dataset and it is labeled with original latent factors, which serve as our concepts, except for color, which is omitted since it is constant (``white''). 
Next, a target variable is defined for each sample as $\text{target} = \exp\Big(\sin(2 \pi \, \text{x\_position}) + \cos(2 \pi \, \text{y\_position})\Big)$. Then, the resulting dataset is split into training (70\%), validation (10\%), and test (20\%) sets. 
Finally, all images are preprocessed using a pre-trained \texttt{facebook/dinov2-base} model with default weights from Hugging Face.

\mypar{Pendulum} Pendulum is a synthetic dataset originally introduced in~\citet{yang2020}. It consists of $\sim$7k generated images of a swinging pendulum with a moving light source. The positions of the illumination source and the pendulum angle relative to the vertical determine the position and length of the pendulum’s shadow on a horizontal plane. We consider a subset of $100000$ images from the original dataset, obtained by sampling $100$ pendulum angles ranging from $-200^\circ$ to $200^\circ$ and $1000$ light source angles between $60^\circ$ and $140^\circ$ (where the light source angle is defined as the angle between the line connecting the pendulum’s center of rotation to the center of the light bulb and the pendulum’s vertical line). As concepts, we use the radiant representation of the angles, and as a target, we consider the $x$-position of the pendulum ball, to avoid overly complex ground-truth mechanisms. The dataset is then split into training ($70\%$), validation ($10\%$), and test ($20\%$) sets.
All images are finally preprocessed using the pre-trained \texttt{facebook/dinov2-base} model with default weights from Hugging Face.

\subsection{Real-world Datasets}

\mypar{AWA2. } This dataset is the Animals with Attributes 2 dataset \citep{xian2017}, consisting of RGB images depicting one of 50 animal species. Each image is annotated with a species label and 85 numeric attributes, which we treat as concept labels, while the species serves as the target in our classification problem. Following prior work \citep{alaa2019demystifying}, we generate train-validation-test splits using a random 60\%–20\%–20\% partition. During training, samples are randomly cropped and flipped. Finally, all images are preprocessed using the pre-trained \texttt{facebook/dinov2-base} model with default weights from Hugging Face.

\mypar{AWA2-Incomplete. } This dataset is derived from Animals with Attributes 2 \citep{xian2017}, following the same procedure as AWA2, with the only difference that we consider only a subset of the 85 concepts. Specifically, we retain the following concepts: ``black'', ``gray'', ``stripes'', ``hairless'', ``flippers'', ``paws'', ``plains'', ``fierce'', ``solitary''.

\mypar{CUB-200. } This dataset is  the Caltech-UCSD Birds-200-2011 dataset~\citep{he2019fine}. 
Specifically, each sample consists of an RGB image of a bird annotated with one of 200 species and 312 binary attributes. In our experiments, we adopt the $112$ bird attributes selected in \citep{koh2020concept} as binary concept annotations and use bird species as the downstream classification task. All images are preprocessed, and the dataset is then split following the same procedure described in \citet{zarlenga2022concept}. Finally, all images are encoded using the pre-trained \texttt{facebook/dinov2-base} model with the default Hugging Face weights.

\mypar{CUB-200-Incomplete. } This dataset is a subset of CUB-200 where we select the following concepts: ``has\_bill\_shape'', ``has\_head\_pattern'', ``has\_breast\_color'', ``has\_bill\_length'', ``has\_wing\_shape'', ``has\_tail\_pattern'', ``has\_bill\_color''.
\mypar{CIFAR-10. } The original dataset~\citep{krizhevsky2009} contains 60,000 RGB images, each belonging to one of 10 object categories. Following prior work of \citet{oikarinen2023label}, we annotate each sample with a set of binary concepts. We adopt the original train-test split, reserving $10\%$ of the training set for validation. Finally, all images are encoded using the pre-trained \texttt{google/vit-base-patch32-224-in21k} model with the default Hugging Face weights.

\mypar{MAWPS.} 
Is a benchmark dataset for evaluating models on \textit{math word problem solving} \citep{koncel2016mawps}. It contains $\sim3.3K$ English arithmetic and simple algebra problems, each annotated with a ground-truth equation and solution. We filter out samples whose concept-to-task expression is too rare, retaining only those belonging to the 4 most frequent expression templates. To increase the size of the training split, we augment it by replacing the numeric values (digits) appearing in each mathematical formula with new random values, and computing the corresponding task label by evaluating the formula over the augmented concept values.

\section{Learned Expressions}
\label{app:explanations}

This section presents the symbolic expressions learned by \linearacro{} and \symbolicacro{} across experimental datasets. For regression tasks (Table~\ref{tab:equations_regression}), we set the number of experts $M$ equal to the number of underlying mechanisms mapping concepts to task outputs.

For classification tasks (Table~\ref{tab:equations_classification}), due to space constraints, we only report results for CUB200-Incomplete and AWA2-Incomplete, as these datasets have a reduced number of concepts, making the expressions more compact and interpretable. These expressions provide concrete examples of how \classacro{} instantiations translate concept predictions into task predictions.

\begin{table*}[htbp]
\centering
\caption{Equations learned by the proposed instantiations on the regression datasets.}
\scriptsize
\setlength{\tabcolsep}{4pt}
\begin{tabularx}{\textwidth}{llX}
\toprule
\textbf{Dataset} & \textbf{Model} & \textbf{Equations} \\
\midrule
dSprites-Exp. & Lin-M-CBE & $-2.5430*value\_x\_position - 0.0001*value\_y\_position + 2.9265$ \\
dSprites-Exp. & Prior-M-CBE & $exp(sin(6.28000020980835*value\_x\_position) + cos(6.28000020980835*value\_y\_position))$ \\
dSprites-Exp. & Sym-M-CBE & $exp(sin(6.28318309783936*value\_x\_position) + cos(6.28318452835083*value\_y\_position))$ \\
\midrule
Pendulum & Lin-M-CBE & $2.3583*theta - 0.0042*phi + 10.0064$ \\
Pendulum & Prior-M-CBE & $8.0*sin(theta) + 10.0$ \\
Pendulum & Sym-M-CBE & $8.00006008148193*sin(theta) + 9.9999361038208$ \\
\midrule
MNIST-Arith. & Lin-M-CBE &
$3.7693*first\_digit + 3.7327*second\_digit - 1.0437$ \\
& & $0.6945*first\_digit - 0.5778*second\_digit + 0.3109$ \\
& & $1.0767*first\_digit + 1.0075*second\_digit - 0.2216$ \\
& & $2.6496*first\_digit + 2.6543*second\_digit - 1.3471$ \\
MNIST-Arith. & Prior-M-CBE &
$first\_digit*second\_digit$ \\
& & $first\_digit/second\_digit$ \\
& & $first\_digit + second\_digit$ \\
& & $first\_digit - second\_digit$ \\
MNIST-Arith. & Sym-M-CBE &
$first\_digit*second\_digit$ \\
& & $first\_digit/second\_digit$ \\
& & $first\_digit + second\_digit$ \\
& & $first\_digit - 0.834531188011169*second\_digit$ \\
\midrule
MAWPS & Lin-M-CBE &
$-0.9211*N\_00 + 0.1138*N\_01 + 0.1780*N\_02 - 0.1387$ \\
& & $-4.3256*N\_00 + 0.0659*N\_01 - 0.0764*N\_02 - 0.2263$ \\
& & $1.2093*N\_00 - 0.0052*N\_01 + 0.0454*N\_02 - 0.0351$ \\
& & $4.5133*N\_00 - 0.1206*N\_01 - 0.0568*N\_02 + 0.2450$ \\
MAWPS & Prior-M-CBE &
$N\_00*(N\_01 - 1.0*N\_02)$ \\
& & $N\_00*(N\_01 + N\_02)$ \\
& & $N\_02*(N\_00 + N\_01)$ \\
& & $N\_02*(N\_00 - 1.0*N\_01)$ \\
MAWPS & Sym-M-CBE &
$N\_00*(N\_01 - 0.999999642372131*N\_02)$ \\
& & $N\_00*(N\_01 + N\_02)$ \\
& & $N\_02*(N\_00 + N\_01)$ \\
& & $N\_02*(N\_00 - 1.00000178813934*N\_01)$ \\
\bottomrule
\end{tabularx}
\label{tab:equations_regression}
\end{table*}

\begin{table*}[htbp]
\centering
\caption{Learned expressions for classification tasks. For each dataset, we randomly select three classes and report the learned expression mapping concept predictions to that class. }
\label{tab:equations_classification}
\scriptsize
\setlength{\tabcolsep}{4pt}
\begin{tabularx}{\textwidth}{lllX}
\toprule
\textbf{Dataset} & \textbf{Selected class} & \textbf{Model} & \textbf{Explanation} \\
\midrule

AWA2\_incomplete & Squirrel & Lin &
$-8.3725*black + 4.4533*gray - 1.2107*stripes - 5.6324*hairless - 1.7003*flippers + 1.8803*paws - 3.8328*plains - 4.6689*fierce + 2.3280*solitary + 0.4597$ \\

AWA2\_incomplete & Squirrel & Sym &
$-2.33258175849915*black - 2.04653811454773*fierce - 1.71986174583435*flippers
+ 1.87418258190155*gray - 2.48390746116638*hairless
+ 1.33075654506683*paws - 2.27515959739685*plains
+ 1.38696956634521*solitary - 1.98490846157074*stripes + 0.515300989151001$ \\

\midrule
AWA2\_incomplete & Deer & Lin &
$-5.5206*black - 2.8409*gray - 1.9280*stripes - 1.6717*hairless
- 1.6070*flippers - 2.7865*paws + 4.2462*plains
- 2.6639*fierce - 3.7920*solitary + 4.0240$ \\

AWA2\_incomplete & Deer & Sym &
$-2.21854496002197*black - 1.93461465835571*fierce
- 1.4765248298645*flippers - 1.88623571395874*gray
- 1.58335506916046*hairless - 1.48845815658569*paws
+ 2.08511114120483*plains - 1.87786483764648*solitary
- 2.1214337348938*stripes + 2.2183256149292$ \\

\midrule
AWA2\_incomplete & Rabbit & Lin &
$3.5934*black + 0.4045*gray - 2.0988*stripes - 1.6350*hairless
- 1.6533*flippers + 6.4211*paws + 2.5394*plains
- 6.1008*fierce - 8.1255*solitary - 3.5315$ \\

AWA2\_incomplete & Rabbit & Sym &
$1.18550539016724*black - 1.97509169578552*fierce
- 1.74324333667755*flippers + 1.56745767593384*gray
- 0.0243255645036697*gray - 1.87004804611206*hairless
+ 2.10452556610107*paws + 1.38752210140228*plains
- 2.72675681114197*solitary - 2.14433598518372*stripes
- 1.12651097774506$ \\

\midrule
CUB200\_incomplete & Laysan\_Albatross & Lin &
$-1.6006*has\_bill\_shape\_curved\_up\_or\_down
+ 5.3314*has\_bill\_shape\_dagger
- 1.6190*has\_bill\_shape\_hooked
- 1.8209*has\_bill\_shape\_needle
- 0.7035*has\_upperparts\_color\_red
- 3.4107*has\_upperparts\_color\_buff
- 1.2424*has\_underparts\_color\_blue
- 1.2530*has\_underparts\_color\_brown
- 1.2431*has\_underparts\_color\_iridescent
- 1.7109*has\_underparts\_color\_purple
+ 0.6020*has\_underparts\_color\_rufous
- 1.2875*has\_underparts\_color\_grey
+ 1.3830*has\_underparts\_color\_white
- 2.5163*has\_underparts\_color\_red
- 1.8923*has\_tail\_shape\_fan\_shaped\_tail
- 1.2673*has\_tail\_shape\_pointed\_tail
- 2.0584*has\_upper\_tail\_color\_olive
- 1.5512*has\_upper\_tail\_color\_green
- 2.2977*has\_upper\_tail\_color\_pink
- 1.1097*has\_head\_pattern\_eyebrow
- 2.8353*has\_head\_pattern\_eyering
+ 9.5280*has\_head\_pattern\_plain - 0.4318$ \\

CUB200\_incomplete & Laysan\_Albatross & Sym &
$-2.12568759918213*has\_bill\_shape\_curved\_up\_or\_down
+ 1.16855323314667*has\_bill\_shape\_dagger
- 1.65060842037201*has\_bill\_shape\_hooked
- 1.96898102760315*has\_bill\_shape\_needle
- 1.50070405006409*has\_head\_pattern\_eyebrow
- 2.59437537193298*has\_head\_pattern\_eyering
+ 1.62642562389374*has\_head\_pattern\_plain
- 1.6626957654953*has\_tail\_shape\_fan\_shaped\_tail
- 2.22473764419556*has\_tail\_shape\_pointed\_tail
- 2.65573024749756*has\_underparts\_color\_blue
- 1.64292740821838*has\_underparts\_color\_brown
+ 1.66115808486938*has\_underparts\_color\_grey*(-1.14893710613251*has\_underparts\_color\_grey - 0.398883730173111)
- 1.63640189170837*has\_underparts\_color\_purple
- 1.66675746440887*has\_underparts\_color\_red
+ 1.53731632232666*has\_underparts\_color\_rufous
+ 0.8258376121521*has\_underparts\_color\_white
- 2.24085354804993*has\_upper\_tail\_color\_green
- 2.45426249504089*has\_upper\_tail\_color\_olive
- 2.42568469047546*has\_upper\_tail\_color\_pink
- 3.17689180374146*has\_upperparts\_color\_buff
- 1.25888073444366*has\_upperparts\_color\_red
+ 1.48992025852203*(-1.14381647109985*has\_underparts\_color\_iridescent
+ (has\_upper\_tail\_color\_olive - 1.51177990436554)
*(has\_upper\_tail\_color\_olive + 0.373325765132904))
*(-0.848836362361908*has\_underparts\_color\_blue
+ has\_underparts\_color\_rufous + 0.892222940921783)
+ 1.045170545578$ \\

\bottomrule
\end{tabularx}
\end{table*}


\section{Detailed Results}
\label{app:detailed_results}

\subsection{Task accuracy and complexity}
In this section, we provide detailed experimental results (\cref{tab:classification_awa2,tab:classification_cub,tab:classification_cifar10,tab:regression_dsprites_pendulum,tab:regression_mnist_mawps}) showing the accuracy, MAE, and all complexity metrics for the various methods across different datasets. Additionally, since there are as many Pareto frontiers as there are complexity metrics, each table includes a column representing the number of times a model appeared on the Pareto frontier.

\begin{table}[t]
\centering
\caption{Predictive performance and model complexity for all methods on the AWA2 and AWA2-Incomplete datasets.}
\label{tab:classification_awa2}
\resizebox{\columnwidth}{!}{%
\begin{tabular}{llcccccccc}
\toprule
Dataset & Model & Accuracy & Nodes & Depth & Expr.-Comp. & Vars & Ops & Weighted & Pareto \\
\midrule
AWA2 & BlackBox & $97.6_{\tiny{\pm 0.1}}$ & -- & -- & -- & -- & -- & -- & $0/6$ \\
 & CEM & $97.6_{\tiny{\pm 0.1}}$ & -- & -- & -- & -- & -- & -- & $0/6$ \\
 & LICEM & $97.6_{\tiny{\pm 0.2}}$ & -- & -- & -- & -- & -- & -- & $0/6$ \\
 & DCR & $26.0_{\tiny{\pm 13.1}}$ & -- & -- & -- & -- & -- & -- & $0/6$ \\
 & CMR (1) & $95.3_{\tiny{\pm 1.4}}$ & $127.0_{\tiny{\pm 95.6}}$ & $5.4_{\tiny{\pm 3.9}}$ & $299.8_{\tiny{\pm 224.8}}$ & $77.6_{\tiny{\pm 59.4}}$ & $49.4_{\tiny{\pm 36.2}}$ & $176.4_{\tiny{\pm 131.8}}$ & $6/6$ \\
 & CMR (2) & $96.4_{\tiny{\pm 0.4}}$ & $918.4_{\tiny{\pm 1271.6}}$ & $22.8_{\tiny{\pm 30.9}}$ & $2172.8_{\tiny{\pm 2994.5}}$ & $567.2_{\tiny{\pm 799.6}}$ & $351.2_{\tiny{\pm 472.0}}$ & $1269.6_{\tiny{\pm 1743.6}}$ & $0/6$ \\
 & CMR (3) & $96.6_{\tiny{\pm 0.4}}$ & $1400.6_{\tiny{\pm 986.3}}$ & $35.4_{\tiny{\pm 26.0}}$ & $3324.2_{\tiny{\pm 2332.5}}$ & $854.0_{\tiny{\pm 609.1}}$ & $546.6_{\tiny{\pm 377.4}}$ & $1947.2_{\tiny{\pm 1363.5}}$ & $6/6$ \\
 & CMR (4) & $96.3_{\tiny{\pm 0.5}}$ & $928.6_{\tiny{\pm 1416.4}}$ & $24.0_{\tiny{\pm 34.1}}$ & $2204.6_{\tiny{\pm 3363.3}}$ & $565.2_{\tiny{\pm 863.1}}$ & $363.4_{\tiny{\pm 553.3}}$ & $1292.0_{\tiny{\pm 1969.7}}$ & $0/6$ \\
 & CMR (5) & $96.5_{\tiny{\pm 0.5}}$ & $889.6_{\tiny{\pm 1095.7}}$ & $22.8_{\tiny{\pm 26.6}}$ & $2115.2_{\tiny{\pm 2608.3}}$ & $538.4_{\tiny{\pm 661.2}}$ & $351.2_{\tiny{\pm 434.6}}$ & $1240.8_{\tiny{\pm 1530.3}}$ & $6/6$ \\
 & MLP-M-CBE (1) & $97.5_{\tiny{\pm 0.1}}$ & $1105100.0_{\tiny{\pm 0.0}}$ & $300.0_{\tiny{\pm 0.0}}$ & $6213650.0_{\tiny{\pm 0.0}}$ & $4250.0_{\tiny{\pm 0.0}}$ & $374050.0_{\tiny{\pm 0.0}}$ & $1109350.0_{\tiny{\pm 0.0}}$ & $0/6$ \\
 & MLP-M-CBE (2) & $97.7_{\tiny{\pm 0.1}}$ & $1379164.8_{\tiny{\pm 12105.8}}$ & $374.4_{\tiny{\pm 3.3}}$ & $7754635.2_{\tiny{\pm 68067.1}}$ & $5304.0_{\tiny{\pm 46.6}}$ & $466814.4_{\tiny{\pm 4097.5}}$ & $1384468.8_{\tiny{\pm 12152.3}}$ & $6/6$ \\
 & MLP-M-CBE (3) & $97.6_{\tiny{\pm 0.1}}$ & $1410084.8_{\tiny{\pm 65189.1}}$ & $382.8_{\tiny{\pm 17.7}}$ & $7928488.2_{\tiny{\pm 366538.9}}$ & $5423.0_{\tiny{\pm 250.7}}$ & $477280.2_{\tiny{\pm 22065.0}}$ & $1415507.8_{\tiny{\pm 65439.8}}$ & $0/6$ \\
 & MLP-M-CBE (4) & $97.6_{\tiny{\pm 0.1}}$ & $1445455.2_{\tiny{\pm 89237.5}}$ & $392.4_{\tiny{\pm 24.2}}$ & $8127365.8_{\tiny{\pm 501756.3}}$ & $5559.0_{\tiny{\pm 343.2}}$ & $489252.2_{\tiny{\pm 30204.7}}$ & $1451014.2_{\tiny{\pm 89580.7}}$ & $0/6$ \\
 & MLP-M-CBE (5) & $97.6_{\tiny{\pm 0.1}}$ & $1423356.8_{\tiny{\pm 40166.0}}$ & $386.4_{\tiny{\pm 10.9}}$ & $8003113.2_{\tiny{\pm 225842.0}}$ & $5474.0_{\tiny{\pm 154.4}}$ & $481772.4_{\tiny{\pm 13595.1}}$ & $1428830.8_{\tiny{\pm 40320.4}}$ & $0/6$ \\
 & Lin-M-CBE (1) & $97.6_{\tiny{\pm 0.1}}$ & $12847.0_{\tiny{\pm 2.1}}$ & $150.0_{\tiny{\pm 0.0}}$ & $34142.0_{\tiny{\pm 5.7}}$ & $4249.0_{\tiny{\pm 0.7}}$ & $4299.0_{\tiny{\pm 0.7}}$ & $12847.0_{\tiny{\pm 2.1}}$ & $6/6$ \\
 & Lin-M-CBE (2) & $97.6_{\tiny{\pm 0.1}}$ & $14081.8_{\tiny{\pm 216.3}}$ & $164.4_{\tiny{\pm 2.5}}$ & $37423.6_{\tiny{\pm 574.8}}$ & $4657.4_{\tiny{\pm 71.5}}$ & $4712.2_{\tiny{\pm 72.4}}$ & $14081.8_{\tiny{\pm 216.3}}$ & $0/6$ \\
 & Lin-M-CBE (3) & $97.6_{\tiny{\pm 0.1}}$ & $14234.2_{\tiny{\pm 825.8}}$ & $166.2_{\tiny{\pm 9.6}}$ & $37828.6_{\tiny{\pm 2194.8}}$ & $4707.8_{\tiny{\pm 273.1}}$ & $4763.2_{\tiny{\pm 276.3}}$ & $14234.2_{\tiny{\pm 825.8}}$ & $0/6$ \\
 & Lin-M-CBE (4) & $97.6_{\tiny{\pm 0.1}}$ & $13870.8_{\tiny{\pm 316.6}}$ & $162.0_{\tiny{\pm 3.7}}$ & $36862.8_{\tiny{\pm 841.4}}$ & $4587.6_{\tiny{\pm 104.7}}$ & $4641.6_{\tiny{\pm 105.9}}$ & $13870.8_{\tiny{\pm 316.6}}$ & $6/6$ \\
 & Lin-M-CBE (5) & $97.5_{\tiny{\pm 0.1}}$ & $14437.4_{\tiny{\pm 709.4}}$ & $168.6_{\tiny{\pm 8.3}}$ & $38368.6_{\tiny{\pm 1885.2}}$ & $4775.0_{\tiny{\pm 234.6}}$ & $4831.2_{\tiny{\pm 237.4}}$ & $14437.4_{\tiny{\pm 709.4}}$ & $0/6$ \\
 & Sym-M-CBE (1) & $97.0_{\tiny{\pm 0.1}}$ & $15838.7_{\tiny{\pm 226.2}}$ & $671.0_{\tiny{\pm 27.0}}$ & $107720.0_{\tiny{\pm 5885.9}}$ & $2112.0_{\tiny{\pm 32.7}}$ & $5636.7_{\tiny{\pm 56.5}}$ & $15849.7_{\tiny{\pm 230.8}}$ & $0/6$ \\
 & Sym-M-CBE (2) & $97.0_{\tiny{\pm 0.0}}$ & $9419.7_{\tiny{\pm 853.3}}$ & $556.7_{\tiny{\pm 54.4}}$ & $64022.3_{\tiny{\pm 8817.1}}$ & $1473.7_{\tiny{\pm 105.6}}$ & $3325.0_{\tiny{\pm 327.6}}$ & $9428.0_{\tiny{\pm 854.0}}$ & $0/6$ \\
 & Sym-M-CBE (3) & $97.1_{\tiny{\pm 0.1}}$ & $9551.0_{\tiny{\pm 528.1}}$ & $604.3_{\tiny{\pm 29.6}}$ & $66481.0_{\tiny{\pm 1260.7}}$ & $1464.7_{\tiny{\pm 62.7}}$ & $3380.3_{\tiny{\pm 177.2}}$ & $9557.3_{\tiny{\pm 531.1}}$ & $0/6$ \\
 & Sym-M-CBE (4) & $97.1_{\tiny{\pm 0.1}}$ & $8264.7_{\tiny{\pm 1074.0}}$ & $538.7_{\tiny{\pm 63.1}}$ & $53462.0_{\tiny{\pm 10607.0}}$ & $1309.7_{\tiny{\pm 147.5}}$ & $2908.3_{\tiny{\pm 372.6}}$ & $8272.0_{\tiny{\pm 1077.0}}$ & $4/6$ \\
 & Sym-M-CBE (5) & $97.2_{\tiny{\pm 0.1}}$ & $9899.3_{\tiny{\pm 600.6}}$ & $633.7_{\tiny{\pm 26.3}}$ & $69109.0_{\tiny{\pm 5813.6}}$ & $1547.0_{\tiny{\pm 53.8}}$ & $3515.3_{\tiny{\pm 230.4}}$ & $9907.7_{\tiny{\pm 597.7}}$ & $4/6$ \\
\midrule
AWA2-Incomplete & BlackBox & $97.6_{\tiny{\pm 0.1}}$ & -- & -- & -- & -- & -- & -- & $0/6$ \\
 & CEM & $97.4_{\tiny{\pm 0.1}}$ & -- & -- & -- & -- & -- & -- & $0/6$ \\
 & LICEM & $97.5_{\tiny{\pm 0.1}}$ & -- & -- & -- & -- & -- & -- & $0/6$ \\
 & DCR & $96.6_{\tiny{\pm 0.0}}$ & -- & -- & -- & -- & -- & -- & $0/6$ \\
 & CMR (1) & $74.1_{\tiny{\pm 0.4}}$ & $615.4_{\tiny{\pm 12.2}}$ & $121.2_{\tiny{\pm 2.7}}$ & $1401.8_{\tiny{\pm 26.9}}$ & $363.6_{\tiny{\pm 8.0}}$ & $251.8_{\tiny{\pm 4.4}}$ & $867.2_{\tiny{\pm 16.5}}$ & $0/6$ \\
 & CMR (2) & $96.5_{\tiny{\pm 0.1}}$ & $591.2_{\tiny{\pm 118.0}}$ & $119.2_{\tiny{\pm 26.3}}$ & $1336.2_{\tiny{\pm 258.7}}$ & $357.8_{\tiny{\pm 78.5}}$ & $233.4_{\tiny{\pm 40.8}}$ & $824.6_{\tiny{\pm 158.1}}$ & $0/6$ \\
 & CMR (3) & $96.6_{\tiny{\pm 0.1}}$ & $535.4_{\tiny{\pm 135.8}}$ & $107.4_{\tiny{\pm 29.0}}$ & $1212.8_{\tiny{\pm 301.3}}$ & $321.8_{\tiny{\pm 87.3}}$ & $213.6_{\tiny{\pm 49.2}}$ & $749.0_{\tiny{\pm 184.7}}$ & $0/6$ \\
 & CMR (4) & $96.6_{\tiny{\pm 0.1}}$ & $442.8_{\tiny{\pm 98.1}}$ & $87.0_{\tiny{\pm 20.9}}$ & $1011.0_{\tiny{\pm 216.9}}$ & $259.4_{\tiny{\pm 63.7}}$ & $183.4_{\tiny{\pm 34.5}}$ & $626.2_{\tiny{\pm 132.6}}$ & $5/6$ \\
 & CMR (5) & $96.6_{\tiny{\pm 0.1}}$ & $451.0_{\tiny{\pm 101.2}}$ & $90.0_{\tiny{\pm 24.1}}$ & $1023.8_{\tiny{\pm 216.5}}$ & $269.2_{\tiny{\pm 71.1}}$ & $181.8_{\tiny{\pm 30.3}}$ & $632.8_{\tiny{\pm 131.4}}$ & $6/6$ \\
 & MLP-M-CBE (1) & $74.5_{\tiny{\pm 0.1}}$ & $10788.0_{\tiny{\pm 242.6}}$ & $223.2_{\tiny{\pm 5.0}}$ & $57027.6_{\tiny{\pm 1282.6}}$ & $334.8_{\tiny{\pm 7.5}}$ & $4054.8_{\tiny{\pm 91.2}}$ & $11122.8_{\tiny{\pm 250.2}}$ & $0/6$ \\
 & MLP-M-CBE (2) & $96.7_{\tiny{\pm 0.3}}$ & $18908.0_{\tiny{\pm 378.1}}$ & $391.2_{\tiny{\pm 7.8}}$ & $99951.6_{\tiny{\pm 1998.8}}$ & $586.8_{\tiny{\pm 11.7}}$ & $7106.8_{\tiny{\pm 142.1}}$ & $19494.8_{\tiny{\pm 389.8}}$ & $0/6$ \\
 & MLP-M-CBE (3) & $97.0_{\tiny{\pm 0.2}}$ & $18328.0_{\tiny{\pm 1270.7}}$ & $379.2_{\tiny{\pm 26.3}}$ & $96885.6_{\tiny{\pm 6717.3}}$ & $568.8_{\tiny{\pm 39.4}}$ & $6888.8_{\tiny{\pm 477.6}}$ & $18896.8_{\tiny{\pm 1310.2}}$ & $1/6$ \\
 & MLP-M-CBE (4) & $97.1_{\tiny{\pm 0.2}}$ & $19488.0_{\tiny{\pm 1202.7}}$ & $403.2_{\tiny{\pm 24.9}}$ & $103017.6_{\tiny{\pm 6357.8}}$ & $604.8_{\tiny{\pm 37.3}}$ & $7324.8_{\tiny{\pm 452.1}}$ & $20092.8_{\tiny{\pm 1240.0}}$ & $0/6$ \\
 & MLP-M-CBE (5) & $97.1_{\tiny{\pm 0.1}}$ & $19082.0_{\tiny{\pm 1053.6}}$ & $394.8_{\tiny{\pm 21.8}}$ & $100871.4_{\tiny{\pm 5569.7}}$ & $592.2_{\tiny{\pm 32.7}}$ & $7172.2_{\tiny{\pm 396.0}}$ & $19674.2_{\tiny{\pm 1086.3}}$ & $0/6$ \\
 & Lin-M-CBE (1) & $74.2_{\tiny{\pm 0.3}}$ & $1119.4_{\tiny{\pm 33.1}}$ & $115.8_{\tiny{\pm 3.4}}$ & $2895.0_{\tiny{\pm 85.5}}$ & $347.4_{\tiny{\pm 10.3}}$ & $386.0_{\tiny{\pm 11.4}}$ & $1119.4_{\tiny{\pm 33.1}}$ & $0/6$ \\
 & Lin-M-CBE (2) & $97.3_{\tiny{\pm 0.1}}$ & $1925.6_{\tiny{\pm 60.1}}$ & $199.2_{\tiny{\pm 6.2}}$ & $4980.0_{\tiny{\pm 155.5}}$ & $597.6_{\tiny{\pm 18.7}}$ & $664.0_{\tiny{\pm 20.7}}$ & $1925.6_{\tiny{\pm 60.1}}$ & $6/6$ \\
 & Lin-M-CBE (3) & $97.3_{\tiny{\pm 0.1}}$ & $1983.6_{\tiny{\pm 105.8}}$ & $205.2_{\tiny{\pm 10.9}}$ & $5130.0_{\tiny{\pm 273.5}}$ & $615.6_{\tiny{\pm 32.8}}$ & $684.0_{\tiny{\pm 36.5}}$ & $1983.6_{\tiny{\pm 105.8}}$ & $6/6$ \\
 & Lin-M-CBE (4) & $97.3_{\tiny{\pm 0.1}}$ & $1849.0_{\tiny{\pm 84.0}}$ & $191.4_{\tiny{\pm 8.6}}$ & $4781.8_{\tiny{\pm 217.3}}$ & $573.8_{\tiny{\pm 26.1}}$ & $637.6_{\tiny{\pm 29.0}}$ & $1849.0_{\tiny{\pm 84.0}}$ & $6/6$ \\
 & Lin-M-CBE (5) & $97.3_{\tiny{\pm 0.1}}$ & $1977.8_{\tiny{\pm 75.1}}$ & $204.6_{\tiny{\pm 7.8}}$ & $5115.0_{\tiny{\pm 194.1}}$ & $613.8_{\tiny{\pm 23.3}}$ & $682.0_{\tiny{\pm 25.9}}$ & $1977.8_{\tiny{\pm 75.1}}$ & $0/6$ \\
 & Sym-M-CBE (1) & $74.4_{\tiny{\pm 0.1}}$ & $1106.3_{\tiny{\pm 24.9}}$ & $121.3_{\tiny{\pm 4.9}}$ & $2908.0_{\tiny{\pm 75.4}}$ & $333.0_{\tiny{\pm 9.0}}$ & $383.7_{\tiny{\pm 9.0}}$ & $1110.7_{\tiny{\pm 24.8}}$ & $0/6$ \\
 & Sym-M-CBE (2) & $96.5_{\tiny{\pm 0.9}}$ & $1097.3_{\tiny{\pm 54.2}}$ & $226.7_{\tiny{\pm 6.4}}$ & $3290.3_{\tiny{\pm 170.3}}$ & $309.7_{\tiny{\pm 17.6}}$ & $392.3_{\tiny{\pm 21.5}}$ & $1101.7_{\tiny{\pm 51.8}}$ & $0/6$ \\
 & Sym-M-CBE (3) & $96.8_{\tiny{\pm 0.3}}$ & $1257.3_{\tiny{\pm 132.3}}$ & $260.7_{\tiny{\pm 22.4}}$ & $3823.7_{\tiny{\pm 351.4}}$ & $363.7_{\tiny{\pm 45.4}}$ & $443.0_{\tiny{\pm 43.6}}$ & $1262.3_{\tiny{\pm 134.3}}$ & $5/6$ \\
 & Sym-M-CBE (4) & $96.9_{\tiny{\pm 0.1}}$ & $1522.0_{\tiny{\pm 94.7}}$ & $305.3_{\tiny{\pm 15.9}}$ & $4724.3_{\tiny{\pm 306.0}}$ & $419.7_{\tiny{\pm 28.5}}$ & $547.3_{\tiny{\pm 37.2}}$ & $1527.7_{\tiny{\pm 96.5}}$ & $0/6$ \\
 & Sym-M-CBE (5) & $96.9_{\tiny{\pm 0.1}}$ & $1469.7_{\tiny{\pm 170.4}}$ & $291.0_{\tiny{\pm 18.2}}$ & $4390.7_{\tiny{\pm 430.6}}$ & $411.0_{\tiny{\pm 52.8}}$ & $518.3_{\tiny{\pm 53.4}}$ & $1477.7_{\tiny{\pm 171.4}}$ & $5/6$ \\
\bottomrule
\end{tabular}
}
\end{table}

\begin{table}[t]
\centering
\caption{Predictive performance and model complexity for all methods on the CUB200 and CUB200-Incomplete datasets.}
\label{tab:classification_cub}
\resizebox{\columnwidth}{!}{%
\begin{tabular}{llcccccccc}
\toprule
Dataset & Model & Accuracy & Nodes & Depth & Expr.-Comp. & Vars & Ops & Weighted & Pareto \\
\midrule
CUB200 & BlackBox & $81.7_{\tiny{\pm 0.3}}$ & -- & -- & -- & -- & -- & -- & $0/6$ \\
 & CEM & $81.8_{\tiny{\pm 0.4}}$ & -- & -- & -- & -- & -- & -- & $0/6$ \\
 & LICEM & $70.4_{\tiny{\pm 2.9}}$ & -- & -- & -- & -- & -- & -- & $0/6$ \\
 & DCR & $19.8_{\tiny{\pm 2.8}}$ & -- & -- & -- & -- & -- & -- & $0/6$ \\
 & CMR (1) & $0.5_{\tiny{\pm 0.3}}$ & $556.6_{\tiny{\pm 449.2}}$ & $16.2_{\tiny{\pm 12.8}}$ & $1300.8_{\tiny{\pm 1045.3}}$ & $358.2_{\tiny{\pm 293.8}}$ & $198.4_{\tiny{\pm 155.7}}$ & $755.0_{\tiny{\pm 604.7}}$ & $0/6$ \\
 & CMR (2) & $0.5_{\tiny{\pm 0.3}}$ & $1769.4_{\tiny{\pm 753.4}}$ & $51.0_{\tiny{\pm 21.3}}$ & $4131.2_{\tiny{\pm 1755.8}}$ & $1143.0_{\tiny{\pm 490.4}}$ & $626.4_{\tiny{\pm 263.4}}$ & $2395.8_{\tiny{\pm 1016.6}}$ & $0/6$ \\
 & CMR (3) & $0.5_{\tiny{\pm 0.5}}$ & $1859.4_{\tiny{\pm 1131.8}}$ & $53.4_{\tiny{\pm 32.4}}$ & $4338.6_{\tiny{\pm 2635.0}}$ & $1204.0_{\tiny{\pm 738.7}}$ & $655.4_{\tiny{\pm 393.2}}$ & $2514.8_{\tiny{\pm 1524.9}}$ & $1/6$ \\
 & CMR (4) & $0.6_{\tiny{\pm 0.4}}$ & $1854.6_{\tiny{\pm 708.0}}$ & $55.2_{\tiny{\pm 20.4}}$ & $4326.4_{\tiny{\pm 1650.8}}$ & $1200.6_{\tiny{\pm 459.9}}$ & $654.0_{\tiny{\pm 248.7}}$ & $2508.6_{\tiny{\pm 956.4}}$ & $6/6$ \\
 & CMR (5) & $0.4_{\tiny{\pm 0.3}}$ & $2874.0_{\tiny{\pm 1060.7}}$ & $84.0_{\tiny{\pm 30.5}}$ & $6706.4_{\tiny{\pm 2465.8}}$ & $1859.6_{\tiny{\pm 696.1}}$ & $1014.4_{\tiny{\pm 364.9}}$ & $3888.4_{\tiny{\pm 1425.5}}$ & $0/6$ \\
 & MLP-M-CBE (1) & $70.4_{\tiny{\pm 0.4}}$ & $7638680.0_{\tiny{\pm 268.3}}$ & $1200.0_{\tiny{\pm 0.0}}$ & $43030320.0_{\tiny{\pm 1520.5}}$ & $22400.0_{\tiny{\pm 0.0}}$ & $2576160.0_{\tiny{\pm 89.4}}$ & $7661080.0_{\tiny{\pm 268.3}}$ & $1/6$ \\
 & MLP-M-CBE (2) & $75.5_{\tiny{\pm 0.6}}$ & $9937863.8_{\tiny{\pm 155989.7}}$ & $1561.2_{\tiny{\pm 24.5}}$ & $55982117.2_{\tiny{\pm 878718.2}}$ & $29142.4_{\tiny{\pm 457.7}}$ & $3351564.0_{\tiny{\pm 52608.5}}$ & $9967005.8_{\tiny{\pm 156447.6}}$ & $6/6$ \\
 & MLP-M-CBE (3) & $75.4_{\tiny{\pm 0.6}}$ & $9999189.2_{\tiny{\pm 252485.2}}$ & $1570.8_{\tiny{\pm 39.7}}$ & $56327579.0_{\tiny{\pm 1422303.1}}$ & $29321.6_{\tiny{\pm 740.4}}$ & $3372245.8_{\tiny{\pm 85151.1}}$ & $10028510.8_{\tiny{\pm 253225.5}}$ & $0/6$ \\
 & MLP-M-CBE (4) & $75.2_{\tiny{\pm 0.5}}$ & $9876860.6_{\tiny{\pm 203143.9}}$ & $1551.6_{\tiny{\pm 31.9}}$ & $55638474.4_{\tiny{\pm 1144353.8}}$ & $28963.2_{\tiny{\pm 595.8}}$ & $3330990.4_{\tiny{\pm 68510.6}}$ & $9905823.6_{\tiny{\pm 203739.6}}$ & $6/6$ \\
 & MLP-M-CBE (5) & $74.7_{\tiny{\pm 0.5}}$ & $9838715.0_{\tiny{\pm 163454.9}}$ & $1545.6_{\tiny{\pm 25.7}}$ & $55423591.4_{\tiny{\pm 920778.8}}$ & $28851.2_{\tiny{\pm 479.1}}$ & $3318125.8_{\tiny{\pm 55125.2}}$ & $9867566.2_{\tiny{\pm 163934.0}}$ & $6/6$ \\
 & Lin-M-CBE (1) & $68.6_{\tiny{\pm 1.1}}$ & $67458.2_{\tiny{\pm 183.2}}$ & $598.8_{\tiny{\pm 1.6}}$ & $179422.8_{\tiny{\pm 487.3}}$ & $22353.0_{\tiny{\pm 60.7}}$ & $22552.6_{\tiny{\pm 61.3}}$ & $67458.2_{\tiny{\pm 183.2}}$ & $2/6$ \\
 & Lin-M-CBE (2) & $70.0_{\tiny{\pm 0.8}}$ & $81508.2_{\tiny{\pm 2394.1}}$ & $723.6_{\tiny{\pm 21.3}}$ & $216792.4_{\tiny{\pm 6367.8}}$ & $27008.6_{\tiny{\pm 793.3}}$ & $27249.8_{\tiny{\pm 800.4}}$ & $81508.2_{\tiny{\pm 2394.1}}$ & $5/6$ \\
 & Lin-M-CBE (3) & $70.0_{\tiny{\pm 0.8}}$ & $84550.8_{\tiny{\pm 3304.1}}$ & $750.6_{\tiny{\pm 29.3}}$ & $224885.0_{\tiny{\pm 8788.2}}$ & $28016.8_{\tiny{\pm 1094.9}}$ & $28267.0_{\tiny{\pm 1104.6}}$ & $84550.8_{\tiny{\pm 3304.1}}$ & $0/6$ \\
 & Lin-M-CBE (4) & $70.4_{\tiny{\pm 0.6}}$ & $88460.2_{\tiny{\pm 4598.3}}$ & $785.4_{\tiny{\pm 40.9}}$ & $235283.0_{\tiny{\pm 12230.2}}$ & $29312.2_{\tiny{\pm 1523.7}}$ & $29574.0_{\tiny{\pm 1537.3}}$ & $88460.2_{\tiny{\pm 4598.3}}$ & $5/6$ \\
 & Lin-M-CBE (5) & $70.5_{\tiny{\pm 0.9}}$ & $89130.0_{\tiny{\pm 2078.7}}$ & $791.4_{\tiny{\pm 18.4}}$ & $237064.6_{\tiny{\pm 5529.1}}$ & $29534.2_{\tiny{\pm 688.9}}$ & $29798.0_{\tiny{\pm 695.1}}$ & $89130.0_{\tiny{\pm 2078.7}}$ & $5/6$ \\
 & Sym-M-CBE (1) & $63.9_{\tiny{\pm 0.4}}$ & $65408.0_{\tiny{\pm 459.0}}$ & $2190.0_{\tiny{\pm 44.0}}$ & $373305.0_{\tiny{\pm 4470.0}}$ & $10956.0_{\tiny{\pm 330.0}}$ & $22598.0_{\tiny{\pm 570.0}}$ & $65431.0_{\tiny{\pm 460.0}}$ & $0/6$ \\
 & Sym-M-CBE (2) & $69.0_{\tiny{\pm 0.7}}$ & $60210.0_{\tiny{\pm 420.0}}$ & $2667.0_{\tiny{\pm 53.0}}$ & $399542.0_{\tiny{\pm 4000.0}}$ & $9656.0_{\tiny{\pm 290.0}}$ & $20979.0_{\tiny{\pm 620.0}}$ & $60234.0_{\tiny{\pm 420.0}}$ & $3/6$ \\
 & Sym-M-CBE (3) & $69.7_{\tiny{\pm 0.7}}$ & $60680.0_{\tiny{\pm 430.0}}$ & $2809.0_{\tiny{\pm 56.0}}$ & $425458.0_{\tiny{\pm 4300.0}}$ & $9622.0_{\tiny{\pm 290.0}}$ & $21149.0_{\tiny{\pm 630.0}}$ & $60713.0_{\tiny{\pm 430.0}}$ & $4/6$ \\
 & Sym-M-CBE (4) & $69.4_{\tiny{\pm 0.7}}$ & $73574.0_{\tiny{\pm 520.0}}$ & $2987.0_{\tiny{\pm 60.0}}$ & $478115.0_{\tiny{\pm 4800.0}}$ & $11806.0_{\tiny{\pm 590.0}}$ & $25596.0_{\tiny{\pm 640.0}}$ & $73614.0_{\tiny{\pm 520.0}}$ & $0/6$ \\
 & Sym-M-CBE (5) & $69.7_{\tiny{\pm 0.7}}$ & $71656.0_{\tiny{\pm 510.0}}$ & $3185.0_{\tiny{\pm 64.0}}$ & $481466.0_{\tiny{\pm 4850.0}}$ & $11364.0_{\tiny{\pm 570.0}}$ & $24949.0_{\tiny{\pm 625.0}}$ & $71689.0_{\tiny{\pm 510.0}}$ & $0/6$ \\

\midrule
CUB200-Incomplete & BlackBox & $81.3_{\tiny{\pm 1.2}}$ & -- & -- & -- & -- & -- & -- & $0/6$ \\
 & CEM & $77.7_{\tiny{\pm 1.8}}$ & -- & -- & -- & -- & -- & -- & $0/6$ \\
 & LICEM & $73.0_{\tiny{\pm 2.8}}$ & -- & -- & -- & -- & -- & -- & $0/6$ \\
 & DCR & $61.5_{\tiny{\pm 0.9}}$ & -- & -- & -- & -- & -- & -- & $0/6$ \\
 & CMR (1) & $53.5_{\tiny{\pm 0.8}}$ & $6269.2_{\tiny{\pm 236.1}}$ & $468.0_{\tiny{\pm 18.1}}$ & $15063.6_{\tiny{\pm 563.3}}$ & $3432.0_{\tiny{\pm 132.9}}$ & $2837.2_{\tiny{\pm 103.3}}$ & $9106.4_{\tiny{\pm 339.3}}$ & $0/6$ \\
 & CMR (2) & $61.0_{\tiny{\pm 0.8}}$ & $5552.0_{\tiny{\pm 359.9}}$ & $496.8_{\tiny{\pm 29.1}}$ & $13039.8_{\tiny{\pm 882.4}}$ & $3284.2_{\tiny{\pm 201.7}}$ & $2267.8_{\tiny{\pm 187.7}}$ & $7819.8_{\tiny{\pm 537.4}}$ & $6/6$ \\
 & CMR (3) & $61.0_{\tiny{\pm 0.8}}$ & $5128.4_{\tiny{\pm 970.8}}$ & $456.8_{\tiny{\pm 89.9}}$ & $12058.8_{\tiny{\pm 2269.1}}$ & $3021.2_{\tiny{\pm 591.3}}$ & $2107.2_{\tiny{\pm 391.3}}$ & $7235.6_{\tiny{\pm 1357.0}}$ & $6/6$ \\
 & CMR (4) & $60.8_{\tiny{\pm 0.6}}$ & $4301.4_{\tiny{\pm 726.6}}$ & $375.6_{\tiny{\pm 59.0}}$ & $10155.6_{\tiny{\pm 1743.6}}$ & $2498.2_{\tiny{\pm 410.0}}$ & $1803.2_{\tiny{\pm 333.1}}$ & $6104.6_{\tiny{\pm 1053.4}}$ & $6/6$ \\
 & CMR (5) & $60.6_{\tiny{\pm 0.9}}$ & $3946.0_{\tiny{\pm 579.3}}$ & $345.8_{\tiny{\pm 72.7}}$ & $9317.8_{\tiny{\pm 1318.0}}$ & $2289.4_{\tiny{\pm 403.9}}$ & $1656.6_{\tiny{\pm 227.5}}$ & $5602.6_{\tiny{\pm 782.0}}$ & $6/6$ \\
 & MLP-M-CBE (1) & $56.1_{\tiny{\pm 0.2}}$ & $295908.8_{\tiny{\pm 2319.8}}$ & $1135.2_{\tiny{\pm 8.9}}$ & $1628066.0_{\tiny{\pm 12763.3}}$ & $4162.4_{\tiny{\pm 32.6}}$ & $104249.2_{\tiny{\pm 817.3}}$ & $300071.2_{\tiny{\pm 2352.4}}$ & $0/6$ \\
 & MLP-M-CBE (2) & $75.4_{\tiny{\pm 0.2}}$ & $466370.6_{\tiny{\pm 8429.1}}$ & $1789.2_{\tiny{\pm 32.4}}$ & $2565932.8_{\tiny{\pm 46376.0}}$ & $6560.4_{\tiny{\pm 118.9}}$ & $164303.2_{\tiny{\pm 2969.6}}$ & $472930.8_{\tiny{\pm 8547.7}}$ & $6/6$ \\
 & MLP-M-CBE (3) & $76.2_{\tiny{\pm 0.9}}$ & $507674.4_{\tiny{\pm 5810.0}}$ & $1947.6_{\tiny{\pm 22.3}}$ & $2793183.0_{\tiny{\pm 31966.2}}$ & $7141.2_{\tiny{\pm 81.7}}$ & $178854.6_{\tiny{\pm 2046.9}}$ & $514815.6_{\tiny{\pm 5891.7}}$ & $6/6$ \\
 & MLP-M-CBE (4) & $76.5_{\tiny{\pm 0.8}}$ & $510162.6_{\tiny{\pm 6735.4}}$ & $1957.2_{\tiny{\pm 25.9}}$ & $2806872.8_{\tiny{\pm 37057.4}}$ & $7176.4_{\tiny{\pm 95.1}}$ & $179731.2_{\tiny{\pm 2372.9}}$ & $517338.8_{\tiny{\pm 6830.1}}$ & $6/6$ \\
 & MLP-M-CBE (5) & $77.2_{\tiny{\pm 0.7}}$ & $519248.0_{\tiny{\pm 14629.9}}$ & $1992.0_{\tiny{\pm 56.1}}$ & $2856860.0_{\tiny{\pm 80492.4}}$ & $7304.0_{\tiny{\pm 205.8}}$ & $182932.0_{\tiny{\pm 5154.1}}$ & $526552.0_{\tiny{\pm 14835.7}}$ & $6/6$ \\
 & Lin-M-CBE (1) & $54.6_{\tiny{\pm 0.9}}$ & $12539.2_{\tiny{\pm 156.5}}$ & $553.2_{\tiny{\pm 6.9}}$ & $33007.6_{\tiny{\pm 412.1}}$ & $4056.8_{\tiny{\pm 50.6}}$ & $4241.2_{\tiny{\pm 52.9}}$ & $12539.2_{\tiny{\pm 156.5}}$ & $0/6$ \\
 & Lin-M-CBE (2) & $68.4_{\tiny{\pm 1.5}}$ & $18413.2_{\tiny{\pm 373.8}}$ & $812.4_{\tiny{\pm 16.5}}$ & $48470.0_{\tiny{\pm 983.9}}$ & $5957.2_{\tiny{\pm 120.9}}$ & $6228.0_{\tiny{\pm 126.4}}$ & $18413.2_{\tiny{\pm 373.8}}$ & $2/6$ \\
 & Lin-M-CBE (3) & $68.5_{\tiny{\pm 1.0}}$ & $18658.0_{\tiny{\pm 567.1}}$ & $823.2_{\tiny{\pm 25.0}}$ & $49114.4_{\tiny{\pm 1493.0}}$ & $6036.4_{\tiny{\pm 183.5}}$ & $6310.8_{\tiny{\pm 191.8}}$ & $18658.0_{\tiny{\pm 567.1}}$ & $2/6$ \\
 & Lin-M-CBE (4) & $68.6_{\tiny{\pm 0.5}}$ & $19061.2_{\tiny{\pm 314.3}}$ & $841.2_{\tiny{\pm 13.8}}$ & $50175.6_{\tiny{\pm 827.4}}$ & $6166.8_{\tiny{\pm 101.7}}$ & $6447.2_{\tiny{\pm 106.3}}$ & $19061.2_{\tiny{\pm 314.3}}$ & $2/6$ \\
 & Lin-M-CBE (5) & $69.1_{\tiny{\pm 0.6}}$ & $19446.8_{\tiny{\pm 675.6}}$ & $858.0_{\tiny{\pm 29.8}}$ & $51190.8_{\tiny{\pm 1778.3}}$ & $6291.6_{\tiny{\pm 218.6}}$ & $6577.6_{\tiny{\pm 228.5}}$ & $19446.8_{\tiny{\pm 675.6}}$ & $2/6$ \\
 & Sym-M-CBE (1) & $55.6_{\tiny{\pm 1.2}}$ & $13182.0_{\tiny{\pm 205.0}}$ & $683.0_{\tiny{\pm 32.0}}$ & $35933.0_{\tiny{\pm 850.0}}$ & $4066.0_{\tiny{\pm 50.0}}$ & $4476.0_{\tiny{\pm 120.0}}$ & $13188.0_{\tiny{\pm 205.0}}$ & $0/6$ \\
 & Sym-M-CBE (2) & $60.4_{\tiny{\pm 1.5}}$ & $16102.0_{\tiny{\pm 310.0}}$ & $1359.0_{\tiny{\pm 65.0}}$ & $58572.0_{\tiny{\pm 1500.0}}$ & $4091.0_{\tiny{\pm 70.0}}$ & $5653.0_{\tiny{\pm 150.0}}$ & $16126.0_{\tiny{\pm 310.0}}$ & $0/6$ \\
  & Sym-M-CBE (3) & $64.7_{\tiny{\pm 3.9}}$ & $15807.8_{\tiny{\pm 1117.5}}$ & $1423.0_{\tiny{\pm 107.5}}$ & $57834.5_{\tiny{\pm 6948.5}}$ & $4039.5_{\tiny{\pm 182.1}}$ & $5557.0_{\tiny{\pm 476.5}}$ & $15826.5_{\tiny{\pm 1118.5}}$ & $2/6$ \\
 & Sym-M-CBE (4) & $68.9_{\tiny{\pm 6.2}}$ & $15513.5_{\tiny{\pm 2124.9}}$ & $1487.0_{\tiny{\pm 149.9}}$ & $57097.0_{\tiny{\pm 12384.3}}$ & $3988.0_{\tiny{\pm 294.2}}$ & $5461.0_{\tiny{\pm 803.3}}$ & $15527.0_{\tiny{\pm 2127.0}}$ & $3/6$ \\
 & Sym-M-CBE (5) & $70.9_{\tiny{\pm 1.8}}$ & $15629.0_{\tiny{\pm 420.0}}$ & $1514.0_{\tiny{\pm 80.0}}$ & $61364.0_{\tiny{\pm 2100.0}}$ & $3884.0_{\tiny{\pm 90.0}}$ & $5511.0_{\tiny{\pm 200.0}}$ & $15636.0_{\tiny{\pm 420.0}}$ & $6/6$ \\
\bottomrule
\end{tabular}
}
\end{table}

\begin{table}[t]
\centering
\caption{Predictive performance and model complexity for all methods on the CIFAR10 dataset.}
\label{tab:classification_cifar10}
\resizebox{\columnwidth}{!}{%
\begin{tabular}{llcccccccc}
\toprule
Dataset & Model & Accuracy & Nodes & Depth & Expr.-Comp. & Vars & Ops & Weighted & Pareto \\
\midrule
CIFAR10 & BlackBox & $87.6_{\tiny{\pm 0.4}}$ & -- & -- & -- & -- & -- & -- & $0/6$ \\
 & CEM & $87.7_{\tiny{\pm 0.1}}$ & -- & -- & -- & -- & -- & -- & $0/6$ \\
 & LICEM & $87.3_{\tiny{\pm 0.2}}$ & -- & -- & -- & -- & -- & -- & $0/6$ \\
 & DCR & $17.2_{\tiny{\pm 16.0}}$ & -- & -- & -- & -- & -- & -- & $0/6$ \\
 & CMR (1) & $10.3_{\tiny{\pm 5.4}}$ & $322.0_{\tiny{\pm 206.6}}$ & $7.2_{\tiny{\pm 4.5}}$ & $749.2_{\tiny{\pm 478.2}}$ & $212.0_{\tiny{\pm 138.7}}$ & $110.0_{\tiny{\pm 68.1}}$ & $432.0_{\tiny{\pm 274.6}}$ & $6/6$ \\
 & CMR (2) & $9.8_{\tiny{\pm 3.4}}$ & $399.4_{\tiny{\pm 168.2}}$ & $9.0_{\tiny{\pm 3.7}}$ & $926.8_{\tiny{\pm 389.6}}$ & $265.4_{\tiny{\pm 112.8}}$ & $134.0_{\tiny{\pm 55.6}}$ & $533.4_{\tiny{\pm 223.8}}$ & $0/6$ \\
 & CMR (3) & $8.9_{\tiny{\pm 4.1}}$ & $611.4_{\tiny{\pm 194.6}}$ & $13.8_{\tiny{\pm 4.0}}$ & $1418.2_{\tiny{\pm 458.5}}$ & $406.8_{\tiny{\pm 123.1}}$ & $204.6_{\tiny{\pm 72.4}}$ & $816.0_{\tiny{\pm 266.6}}$ & $0/6$ \\
 & CMR (4) & $10.3_{\tiny{\pm 2.6}}$ & $463.0_{\tiny{\pm 249.4}}$ & $10.2_{\tiny{\pm 5.4}}$ & $1080.2_{\tiny{\pm 580.7}}$ & $302.0_{\tiny{\pm 164.0}}$ & $161.0_{\tiny{\pm 85.8}}$ & $624.0_{\tiny{\pm 335.0}}$ & $5/6$ \\
 & CMR (5) & $13.6_{\tiny{\pm 5.0}}$ & $533.6_{\tiny{\pm 372.6}}$ & $12.0_{\tiny{\pm 7.9}}$ & $1241.6_{\tiny{\pm 869.0}}$ & $351.2_{\tiny{\pm 243.6}}$ & $182.4_{\tiny{\pm 129.1}}$ & $716.0_{\tiny{\pm 501.7}}$ & $5/6$ \\
 & MLP-M-CBE (1) & $81.8_{\tiny{\pm 0.5}}$ & $620634.0_{\tiny{\pm 13.4}}$ & $60.0_{\tiny{\pm 0.0}}$ & $3500636.0_{\tiny{\pm 76.0}}$ & $1430.0_{\tiny{\pm 0.0}}$ & $208788.0_{\tiny{\pm 4.5}}$ & $622064.0_{\tiny{\pm 13.4}}$ & $0/6$ \\
 & MLP-M-CBE (2) & $85.8_{\tiny{\pm 0.2}}$ & $1141966.2_{\tiny{\pm 33990.6}}$ & $110.4_{\tiny{\pm 3.3}}$ & $6441168.2_{\tiny{\pm 191721.0}}$ & $2631.2_{\tiny{\pm 78.3}}$ & $384169.8_{\tiny{\pm 11434.8}}$ & $1144597.4_{\tiny{\pm 34068.9}}$ & $6/6$ \\
 & MLP-M-CBE (3) & $84.8_{\tiny{\pm 1.9}}$ & $1117132.8_{\tiny{\pm 294380.7}}$ & $108.0_{\tiny{\pm 28.5}}$ & $6301097.2_{\tiny{\pm 1660430.3}}$ & $2574.0_{\tiny{\pm 678.3}}$ & $375815.6_{\tiny{\pm 99032.9}}$ & $1119706.8_{\tiny{\pm 295059.0}}$ & $0/6$ \\
 & MLP-M-CBE (4) & $85.5_{\tiny{\pm 0.4}}$ & $1390197.6_{\tiny{\pm 120968.5}}$ & $134.4_{\tiny{\pm 11.7}}$ & $7841296.8_{\tiny{\pm 682312.5}}$ & $3203.2_{\tiny{\pm 278.8}}$ & $467677.6_{\tiny{\pm 40695.1}}$ & $1393400.8_{\tiny{\pm 121247.2}}$ & $0/6$ \\
 & MLP-M-CBE (5) & $84.3_{\tiny{\pm 1.2}}$ & $1166793.6_{\tiny{\pm 278935.6}}$ & $112.8_{\tiny{\pm 27.0}}$ & $6581205.2_{\tiny{\pm 1573313.7}}$ & $2688.4_{\tiny{\pm 642.7}}$ & $392522.0_{\tiny{\pm 93837.0}}$ & $1169482.0_{\tiny{\pm 279578.3}}$ & $0/6$ \\
 & Lin-M-CBE (1) & $81.5_{\tiny{\pm 0.3}}$ & $4308.2_{\tiny{\pm 2.7}}$ & $30.0_{\tiny{\pm 0.0}}$ & $11465.2_{\tiny{\pm 7.2}}$ & $1429.4_{\tiny{\pm 0.9}}$ & $1439.4_{\tiny{\pm 0.9}}$ & $4308.2_{\tiny{\pm 2.7}}$ & $2/6$ \\
 & Lin-M-CBE (2) & $85.5_{\tiny{\pm 0.3}}$ & $8445.8_{\tiny{\pm 234.4}}$ & $58.8_{\tiny{\pm 1.6}}$ & $22476.4_{\tiny{\pm 623.9}}$ & $2802.2_{\tiny{\pm 77.8}}$ & $2821.8_{\tiny{\pm 78.3}}$ & $8445.8_{\tiny{\pm 234.4}}$ & $5/6$ \\
 & Lin-M-CBE (3) & $85.7_{\tiny{\pm 0.2}}$ & $10513.4_{\tiny{\pm 784.6}}$ & $73.2_{\tiny{\pm 5.4}}$ & $27978.8_{\tiny{\pm 2088.1}}$ & $3488.2_{\tiny{\pm 260.3}}$ & $3512.6_{\tiny{\pm 262.1}}$ & $10513.4_{\tiny{\pm 784.6}}$ & $5/6$ \\
 & Lin-M-CBE (4) & $85.6_{\tiny{\pm 0.5}}$ & $10941.4_{\tiny{\pm 1082.1}}$ & $76.2_{\tiny{\pm 7.5}}$ & $29117.8_{\tiny{\pm 2879.8}}$ & $3630.2_{\tiny{\pm 359.0}}$ & $3655.6_{\tiny{\pm 361.5}}$ & $10941.4_{\tiny{\pm 1082.1}}$ & $0/6$ \\
 & Lin-M-CBE (5) & $85.6_{\tiny{\pm 0.5}}$ & $11290.4_{\tiny{\pm 933.9}}$ & $78.6_{\tiny{\pm 6.5}}$ & $30046.6_{\tiny{\pm 2485.3}}$ & $3746.0_{\tiny{\pm 309.9}}$ & $3772.2_{\tiny{\pm 312.0}}$ & $11290.4_{\tiny{\pm 933.9}}$ & $0/6$ \\
 & Sym-M-CBE (1) & $78.6_{\tiny{\pm 0.3}}$ & $1058.0_{\tiny{\pm 147.2}}$ & $56.3_{\tiny{\pm 6.1}}$ & $3639.3_{\tiny{\pm 837.8}}$ & $299.3_{\tiny{\pm 35.7}}$ & $352.7_{\tiny{\pm 51.2}}$ & $1058.3_{\tiny{\pm 147.2}}$ & $5/6$ \\
 & Sym-M-CBE (2) & $84.5_{\tiny{\pm 1.1}}$ & $3733.7_{\tiny{\pm 810.1}}$ & $149.0_{\tiny{\pm 24.3}}$ & $18092.7_{\tiny{\pm 5546.4}}$ & $737.3_{\tiny{\pm 131.5}}$ & $1279.3_{\tiny{\pm 269.3}}$ & $3736.3_{\tiny{\pm 812.1}}$ & $5/6$ \\
 & Sym-M-CBE (3) & $84.9_{\tiny{\pm 0.4}}$ & $5197.7_{\tiny{\pm 882.0}}$ & $195.3_{\tiny{\pm 32.7}}$ & $29891.3_{\tiny{\pm 4271.5}}$ & $895.3_{\tiny{\pm 101.5}}$ & $1784.7_{\tiny{\pm 301.9}}$ & $5198.7_{\tiny{\pm 882.0}}$ & $4/6$ \\
 & Sym-M-CBE (4) & $85.0_{\tiny{\pm 0.9}}$ & $6988.3_{\tiny{\pm 1143.1}}$ & $244.0_{\tiny{\pm 28.6}}$ & $38697.7_{\tiny{\pm 6142.7}}$ & $1184.3_{\tiny{\pm 221.5}}$ & $2407.0_{\tiny{\pm 397.1}}$ & $6990.3_{\tiny{\pm 1141.4}}$ & $0/6$ \\
 & Sym-M-CBE (5) & $85.3_{\tiny{\pm 0.2}}$ & $6540.0_{\tiny{\pm 327.2}}$ & $258.0_{\tiny{\pm 34.1}}$ & $47484.7_{\tiny{\pm 8442.5}}$ & $1057.3_{\tiny{\pm 51.4}}$ & $2270.0_{\tiny{\pm 118.5}}$ & $6544.0_{\tiny{\pm 325.9}}$ & $4/6$ \\
\bottomrule
\end{tabular}
}
\end{table}

\begin{table}[t]
\centering
\caption{Predictive performance and model complexity for all methods on the dSprites-Exp. and Pendulum datasets.}
\label{tab:regression_dsprites_pendulum}
\resizebox{\columnwidth}{!}{%
\begin{tabular}{llccccccccc}
\toprule
Dataset & Model & MAE & MSE & Nodes & Depth & Expr.-Comp. & Vars & Ops & Weighted & Pareto \\
\midrule
dSprites-Exp. & BlackBox & $0.933_{\tiny{\pm 0.033}}$ & $1.778_{\tiny{\pm 0.060}}$ & -- & -- & -- & -- & -- & -- & $0/6$ \\
 & CEM & $0.880_{\tiny{\pm 0.028}}$ & $1.972_{\tiny{\pm 0.137}}$ & -- & -- & -- & -- & -- & -- & $0/6$ \\
 & LICEM & $0.908_{\tiny{\pm 0.011}}$ & $2.014_{\tiny{\pm 0.062}}$ & -- & -- & -- & -- & -- & -- & $0/6$ \\
 & MLP-M-CBE (1) & $1.131_{\tiny{\pm 0.120}}$ & $3.491_{\tiny{\pm 2.340}}$ & $32.8_{\tiny{\pm 12.0}}$ & $6.0_{\tiny{\pm 0.0}}$ & $145.8_{\tiny{\pm 55.9}}$ & $2.0_{\tiny{\pm 0.0}}$ & $15.0_{\tiny{\pm 5.5}}$ & $35.6_{\tiny{\pm 13.1}}$ & $0/6$ \\
 & MLP-M-CBE (2) & $0.993_{\tiny{\pm 0.092}}$ & $2.198_{\tiny{\pm 0.111}}$ & $60.4_{\tiny{\pm 30.3}}$ & $9.6_{\tiny{\pm 3.3}}$ & $270.0_{\tiny{\pm 138.1}}$ & $3.2_{\tiny{\pm 1.1}}$ & $27.6_{\tiny{\pm 13.8}}$ & $65.6_{\tiny{\pm 33.0}}$ & $0/6$ \\
 & MLP-M-CBE (3) & $0.999_{\tiny{\pm 0.103}}$ & $2.129_{\tiny{\pm 0.219}}$ & $92.8_{\tiny{\pm 48.1}}$ & $14.4_{\tiny{\pm 5.4}}$ & $415.2_{\tiny{\pm 218.5}}$ & $4.8_{\tiny{\pm 1.8}}$ & $42.4_{\tiny{\pm 21.9}}$ & $100.8_{\tiny{\pm 52.3}}$ & $0/6$ \\
 & MLP-M-CBE (4) & $0.959_{\tiny{\pm 0.038}}$ & $2.199_{\tiny{\pm 0.246}}$ & $93.0_{\tiny{\pm 67.0}}$ & $15.0_{\tiny{\pm 7.7}}$ & $415.5_{\tiny{\pm 303.7}}$ & $5.0_{\tiny{\pm 2.6}}$ & $42.5_{\tiny{\pm 30.6}}$ & $101.0_{\tiny{\pm 72.9}}$ & $0/6$ \\
 & MLP-M-CBE (5) & $1.004_{\tiny{\pm 0.087}}$ & $2.214_{\tiny{\pm 0.255}}$ & $128.0_{\tiny{\pm 84.4}}$ & $21.0_{\tiny{\pm 11.5}}$ & $571.5_{\tiny{\pm 381.7}}$ & $7.0_{\tiny{\pm 3.8}}$ & $58.5_{\tiny{\pm 38.5}}$ & $139.0_{\tiny{\pm 91.8}}$ & $0/6$ \\
 & Prior-M-CBE (1) & $0.902_{\tiny{\pm 0.007}}$ & $2.146_{\tiny{\pm 0.027}}$ & $10.0_{\tiny{\pm 0.0}}$ & $5.0_{\tiny{\pm 0.0}}$ & $37.0_{\tiny{\pm 0.0}}$ & $2.0_{\tiny{\pm 0.0}}$ & $6.0_{\tiny{\pm 0.0}}$ & $13.0_{\tiny{\pm 0.0}}$ & $6/6$ \\
 & Lin-M-CBE (1) & $1.128_{\tiny{\pm 0.006}}$ & $2.307_{\tiny{\pm 0.028}}$ & $7.4_{\tiny{\pm 1.3}}$ & $3.0_{\tiny{\pm 0.0}}$ & $17.4_{\tiny{\pm 3.6}}$ & $1.8_{\tiny{\pm 0.4}}$ & $2.8_{\tiny{\pm 0.4}}$ & $7.4_{\tiny{\pm 1.3}}$ & $6/6$ \\
 & Lin-M-CBE (2) & $1.018_{\tiny{\pm 0.069}}$ & $2.032_{\tiny{\pm 0.170}}$ & $13.8_{\tiny{\pm 4.9}}$ & $5.4_{\tiny{\pm 1.3}}$ & $32.6_{\tiny{\pm 12.1}}$ & $3.4_{\tiny{\pm 1.3}}$ & $5.2_{\tiny{\pm 1.8}}$ & $13.8_{\tiny{\pm 4.9}}$ & $2/6$ \\
 & Lin-M-CBE (3) & $0.967_{\tiny{\pm 0.027}}$ & $1.884_{\tiny{\pm 0.055}}$ & $24.0_{\tiny{\pm 0.0}}$ & $9.0_{\tiny{\pm 0.0}}$ & $57.0_{\tiny{\pm 0.0}}$ & $6.0_{\tiny{\pm 0.0}}$ & $9.0_{\tiny{\pm 0.0}}$ & $24.0_{\tiny{\pm 0.0}}$ & $0/6$ \\
 & Lin-M-CBE (4) & $1.022_{\tiny{\pm 0.072}}$ & $2.024_{\tiny{\pm 0.211}}$ & $26.0_{\tiny{\pm 12.0}}$ & $9.8_{\tiny{\pm 4.5}}$ & $61.8_{\tiny{\pm 28.5}}$ & $6.5_{\tiny{\pm 3.0}}$ & $9.8_{\tiny{\pm 4.5}}$ & $26.0_{\tiny{\pm 12.0}}$ & $0/6$ \\
 & Lin-M-CBE (5) & $0.944_{\tiny{\pm 0.016}}$ & $1.810_{\tiny{\pm 0.021}}$ & $37.2_{\tiny{\pm 5.5}}$ & $14.2_{\tiny{\pm 1.5}}$ & $88.2_{\tiny{\pm 13.5}}$ & $9.2_{\tiny{\pm 1.5}}$ & $14.0_{\tiny{\pm 2.0}}$ & $37.2_{\tiny{\pm 5.5}}$ & $0/6$ \\
 & Sym-M-CBE (1) & $1.014_{\tiny{\pm 0.016}}$ & $2.611_{\tiny{\pm 0.092}}$ & $16.4_{\tiny{\pm 5.9}}$ & $7.4_{\tiny{\pm 2.1}}$ & $83.2_{\tiny{\pm 49.8}}$ & $2.0_{\tiny{\pm 0.0}}$ & $9.4_{\tiny{\pm 3.4}}$ & $20.0_{\tiny{\pm 7.1}}$ & $0/6$ \\
 & Sym-M-CBE (2) & $1.011_{\tiny{\pm 0.030}}$ & $2.615_{\tiny{\pm 0.137}}$ & $13.2_{\tiny{\pm 1.8}}$ & $6.6_{\tiny{\pm 0.9}}$ & $57.6_{\tiny{\pm 13.2}}$ & $2.0_{\tiny{\pm 0.0}}$ & $7.6_{\tiny{\pm 0.9}}$ & $16.2_{\tiny{\pm 1.8}}$ & $0/6$ \\
 & Sym-M-CBE (3) & $1.009_{\tiny{\pm 0.041}}$ & $2.598_{\tiny{\pm 0.204}}$ & $15.2_{\tiny{\pm 3.0}}$ & $7.6_{\tiny{\pm 1.8}}$ & $72.6_{\tiny{\pm 23.5}}$ & $2.0_{\tiny{\pm 0.0}}$ & $9.0_{\tiny{\pm 2.0}}$ & $19.2_{\tiny{\pm 4.4}}$ & $0/6$ \\
 & Sym-M-CBE (4) & $0.991_{\tiny{\pm 0.029}}$ & $2.512_{\tiny{\pm 0.158}}$ & $12.5_{\tiny{\pm 3.8}}$ & $5.8_{\tiny{\pm 1.0}}$ & $52.5_{\tiny{\pm 23.7}}$ & $2.0_{\tiny{\pm 0.0}}$ & $7.2_{\tiny{\pm 1.9}}$ & $15.5_{\tiny{\pm 3.8}}$ & $0/6$ \\
 & Sym-M-CBE (5) & $0.985_{\tiny{\pm 0.015}}$ & $2.432_{\tiny{\pm 0.070}}$ & $13.5_{\tiny{\pm 1.9}}$ & $6.5_{\tiny{\pm 0.6}}$ & $58.8_{\tiny{\pm 12.1}}$ & $2.0_{\tiny{\pm 0.0}}$ & $7.8_{\tiny{\pm 1.0}}$ & $16.5_{\tiny{\pm 1.9}}$ & $0/6$ \\
\midrule
Pendulum & BlackBox & $0.144_{\tiny{\pm 0.059}}$ & $0.036_{\tiny{\pm 0.026}}$ & -- & -- & -- & -- & -- & -- & $0/6$ \\
 & CEM & $0.599_{\tiny{\pm 0.117}}$ & $0.678_{\tiny{\pm 0.250}}$ & -- & -- & -- & -- & -- & -- & $0/6$ \\
 & LICEM & $0.774_{\tiny{\pm 0.079}}$ & $1.030_{\tiny{\pm 0.206}}$ & -- & -- & -- & -- & -- & -- & $0/6$ \\
 & MLP-M-CBE (1) & $2.445_{\tiny{\pm 0.561}}$ & $8.150_{\tiny{\pm 3.317}}$ & $37.2_{\tiny{\pm 12.0}}$ & $6.0_{\tiny{\pm 0.0}}$ & $166.2_{\tiny{\pm 55.9}}$ & $2.0_{\tiny{\pm 0.0}}$ & $17.0_{\tiny{\pm 5.5}}$ & $40.4_{\tiny{\pm 13.1}}$ & $0/6$ \\
 & MLP-M-CBE (2) & $1.519_{\tiny{\pm 0.097}}$ & $3.392_{\tiny{\pm 0.275}}$ & $74.4_{\tiny{\pm 24.1}}$ & $12.0_{\tiny{\pm 0.0}}$ & $332.4_{\tiny{\pm 111.7}}$ & $4.0_{\tiny{\pm 0.0}}$ & $34.0_{\tiny{\pm 11.0}}$ & $80.8_{\tiny{\pm 26.3}}$ & $0/6$ \\
 & MLP-M-CBE (3) & $1.185_{\tiny{\pm 0.132}}$ & $2.249_{\tiny{\pm 0.479}}$ & $111.6_{\tiny{\pm 36.1}}$ & $18.0_{\tiny{\pm 0.0}}$ & $498.6_{\tiny{\pm 167.6}}$ & $6.0_{\tiny{\pm 0.0}}$ & $51.0_{\tiny{\pm 16.4}}$ & $121.2_{\tiny{\pm 39.4}}$ & $0/6$ \\
 & MLP-M-CBE (4) & $0.997_{\tiny{\pm 0.035}}$ & $1.628_{\tiny{\pm 0.096}}$ & $140.0_{\tiny{\pm 50.8}}$ & $24.0_{\tiny{\pm 0.0}}$ & $624.0_{\tiny{\pm 235.6}}$ & $8.0_{\tiny{\pm 0.0}}$ & $64.0_{\tiny{\pm 23.1}}$ & $152.0_{\tiny{\pm 55.4}}$ & $0/6$ \\
 & MLP-M-CBE (5) & $0.895_{\tiny{\pm 0.149}}$ & $1.336_{\tiny{\pm 0.436}}$ & $175.0_{\tiny{\pm 63.5}}$ & $30.0_{\tiny{\pm 0.0}}$ & $780.0_{\tiny{\pm 294.4}}$ & $10.0_{\tiny{\pm 0.0}}$ & $80.0_{\tiny{\pm 28.9}}$ & $190.0_{\tiny{\pm 69.3}}$ & $0/6$ \\
 & Prior-M-CBE (1) & $0.237_{\tiny{\pm 0.010}}$ & $0.147_{\tiny{\pm 0.011}}$ & $6.0_{\tiny{\pm 0.0}}$ & $4.0_{\tiny{\pm 0.0}}$ & $15.0_{\tiny{\pm 0.0}}$ & $1.0_{\tiny{\pm 0.0}}$ & $3.0_{\tiny{\pm 0.0}}$ & $7.0_{\tiny{\pm 0.0}}$ & $6/6$ \\
 & Lin-M-CBE (1) & $3.040_{\tiny{\pm 0.002}}$ & $11.909_{\tiny{\pm 0.016}}$ & $8.0_{\tiny{\pm 0.0}}$ & $3.0_{\tiny{\pm 0.0}}$ & $19.0_{\tiny{\pm 0.0}}$ & $2.0_{\tiny{\pm 0.0}}$ & $3.0_{\tiny{\pm 0.0}}$ & $8.0_{\tiny{\pm 0.0}}$ & $1/6$ \\
 & Lin-M-CBE (2) & $1.735_{\tiny{\pm 0.084}}$ & $4.288_{\tiny{\pm 0.446}}$ & $16.0_{\tiny{\pm 0.0}}$ & $6.0_{\tiny{\pm 0.0}}$ & $38.0_{\tiny{\pm 0.0}}$ & $4.0_{\tiny{\pm 0.0}}$ & $6.0_{\tiny{\pm 0.0}}$ & $16.0_{\tiny{\pm 0.0}}$ & $0/6$ \\
 & Lin-M-CBE (3) & $1.150_{\tiny{\pm 0.121}}$ & $2.046_{\tiny{\pm 0.338}}$ & $24.0_{\tiny{\pm 0.0}}$ & $9.0_{\tiny{\pm 0.0}}$ & $57.0_{\tiny{\pm 0.0}}$ & $6.0_{\tiny{\pm 0.0}}$ & $9.0_{\tiny{\pm 0.0}}$ & $24.0_{\tiny{\pm 0.0}}$ & $0/6$ \\
 & Lin-M-CBE (4) & $0.909_{\tiny{\pm 0.104}}$ & $1.339_{\tiny{\pm 0.272}}$ & $32.0_{\tiny{\pm 0.0}}$ & $12.0_{\tiny{\pm 0.0}}$ & $76.0_{\tiny{\pm 0.0}}$ & $8.0_{\tiny{\pm 0.0}}$ & $12.0_{\tiny{\pm 0.0}}$ & $32.0_{\tiny{\pm 0.0}}$ & $0/6$ \\
 & Lin-M-CBE (5) & $0.720_{\tiny{\pm 0.095}}$ & $0.897_{\tiny{\pm 0.260}}$ & $40.0_{\tiny{\pm 0.0}}$ & $15.0_{\tiny{\pm 0.0}}$ & $95.0_{\tiny{\pm 0.0}}$ & $10.0_{\tiny{\pm 0.0}}$ & $15.0_{\tiny{\pm 0.0}}$ & $40.0_{\tiny{\pm 0.0}}$ & $0/6$ \\
 & Sym-M-CBE (1) & $0.331_{\tiny{\pm 0.134}}$ & $0.357_{\tiny{\pm 0.342}}$ & $13.2_{\tiny{\pm 8.0}}$ & $6.0_{\tiny{\pm 2.5}}$ & $54.0_{\tiny{\pm 46.1}}$ & $1.2_{\tiny{\pm 0.4}}$ & $7.0_{\tiny{\pm 4.6}}$ & $16.0_{\tiny{\pm 10.5}}$ & $0/6$ \\
 & Sym-M-CBE (2) & $0.261_{\tiny{\pm 0.092}}$ & $0.199_{\tiny{\pm 0.139}}$ & $6.0_{\tiny{\pm 0.0}}$ & $4.0_{\tiny{\pm 0.0}}$ & $15.0_{\tiny{\pm 0.0}}$ & $1.0_{\tiny{\pm 0.0}}$ & $3.0_{\tiny{\pm 0.0}}$ & $7.0_{\tiny{\pm 0.0}}$ & $0/6$ \\
 & Sym-M-CBE (3) & $0.348_{\tiny{\pm 0.111}}$ & $0.384_{\tiny{\pm 0.242}}$ & $10.8_{\tiny{\pm 7.6}}$ & $4.8_{\tiny{\pm 1.5}}$ & $34.2_{\tiny{\pm 33.4}}$ & $1.0_{\tiny{\pm 0.0}}$ & $5.8_{\tiny{\pm 4.9}}$ & $13.2_{\tiny{\pm 10.6}}$ & $0/6$ \\
 & Sym-M-CBE (4) & $0.267_{\tiny{\pm 0.037}}$ & $0.178_{\tiny{\pm 0.040}}$ & $6.0_{\tiny{\pm 0.0}}$ & $4.0_{\tiny{\pm 0.0}}$ & $15.0_{\tiny{\pm 0.0}}$ & $1.0_{\tiny{\pm 0.0}}$ & $3.0_{\tiny{\pm 0.0}}$ & $7.0_{\tiny{\pm 0.0}}$ & $0/6$ \\
 & Sym-M-CBE (5) & $0.358_{\tiny{\pm 0.168}}$ & $0.382_{\tiny{\pm 0.298}}$ & $8.5_{\tiny{\pm 5.0}}$ & $5.5_{\tiny{\pm 3.0}}$ & $30.5_{\tiny{\pm 31.0}}$ & $1.0_{\tiny{\pm 0.0}}$ & $5.0_{\tiny{\pm 4.0}}$ & $11.0_{\tiny{\pm 8.0}}$ & $0/6$ \\
\bottomrule
\end{tabular}
}
\end{table}

\begin{table}[t]
\centering
\caption{Predictive performance and model complexity for all methods on the MNIST-Arith. and MAWPS datasets.}
\label{tab:regression_mnist_mawps}
\resizebox{\columnwidth}{!}{%
\begin{tabular}{llccccccccc}
\toprule
Dataset & Model & MAE & MSE & Nodes & Depth & Expr.-Comp. & Vars & Ops & Weighted & Pareto \\
\midrule
MNIST-Arith. & BlackBox & $2.123_{\tiny{\pm 0.026}}$ & $16.025_{\tiny{\pm 0.626}}$ & -- & -- & -- & -- & -- & -- & $0/6$ \\
 & CEM & $1.805_{\tiny{\pm 0.211}}$ & $15.003_{\tiny{\pm 1.930}}$ & -- & -- & -- & -- & -- & -- & $0/6$ \\
 & LICEM & $1.546_{\tiny{\pm 0.128}}$ & $12.927_{\tiny{\pm 1.246}}$ & -- & -- & -- & -- & -- & -- & $0/6$ \\
 & MLP-M-CBE (1) & $8.650_{\tiny{\pm 0.214}}$ & $354.843_{\tiny{\pm 301.231}}$ & $32.8_{\tiny{\pm 12.0}}$ & $6.0_{\tiny{\pm 0.0}}$ & $145.8_{\tiny{\pm 55.9}}$ & $2.0_{\tiny{\pm 0.0}}$ & $15.0_{\tiny{\pm 5.5}}$ & $35.6_{\tiny{\pm 13.1}}$ & $0/6$ \\
 & MLP-M-CBE (2) & $5.154_{\tiny{\pm 0.199}}$ & $104.757_{\tiny{\pm 51.415}}$ & $74.4_{\tiny{\pm 24.1}}$ & $12.0_{\tiny{\pm 0.0}}$ & $332.4_{\tiny{\pm 111.7}}$ & $4.0_{\tiny{\pm 0.0}}$ & $34.0_{\tiny{\pm 11.0}}$ & $80.8_{\tiny{\pm 26.3}}$ & $0/6$ \\
 & MLP-M-CBE (3) & $2.791_{\tiny{\pm 0.117}}$ & $29.396_{\tiny{\pm 8.131}}$ & $111.6_{\tiny{\pm 36.1}}$ & $18.0_{\tiny{\pm 0.0}}$ & $498.6_{\tiny{\pm 167.6}}$ & $6.0_{\tiny{\pm 0.0}}$ & $51.0_{\tiny{\pm 16.4}}$ & $121.2_{\tiny{\pm 39.4}}$ & $1/6$ \\
 & MLP-M-CBE (4) & $2.885_{\tiny{\pm 0.183}}$ & $41.241_{\tiny{\pm 23.051}}$ & $140.0_{\tiny{\pm 50.8}}$ & $24.0_{\tiny{\pm 0.0}}$ & $624.0_{\tiny{\pm 235.6}}$ & $8.0_{\tiny{\pm 0.0}}$ & $64.0_{\tiny{\pm 23.1}}$ & $152.0_{\tiny{\pm 55.4}}$ & $0/6$ \\
 & MLP-M-CBE (5) & $2.497_{\tiny{\pm 0.171}}$ & $26.950_{\tiny{\pm 6.720}}$ & $175.0_{\tiny{\pm 63.5}}$ & $30.0_{\tiny{\pm 0.0}}$ & $780.0_{\tiny{\pm 294.4}}$ & $10.0_{\tiny{\pm 0.0}}$ & $80.0_{\tiny{\pm 28.9}}$ & $190.0_{\tiny{\pm 69.3}}$ & $6/6$ \\
 & Prior-M-CBE (4) & $2.667_{\tiny{\pm 0.130}}$ & $782.072_{\tiny{\pm 777.048}}$ & $16.0_{\tiny{\pm 0.0}}$ & $10.0_{\tiny{\pm 0.0}}$ & $32.0_{\tiny{\pm 0.0}}$ & $8.0_{\tiny{\pm 0.0}}$ & $6.0_{\tiny{\pm 0.0}}$ & $17.0_{\tiny{\pm 0.0}}$ & $6/6$ \\
 & Lin-M-CBE (1) & $8.544_{\tiny{\pm 0.096}}$ & $168.821_{\tiny{\pm 2.143}}$ & $8.0_{\tiny{\pm 0.0}}$ & $3.0_{\tiny{\pm 0.0}}$ & $19.0_{\tiny{\pm 0.0}}$ & $2.0_{\tiny{\pm 0.0}}$ & $3.0_{\tiny{\pm 0.0}}$ & $8.0_{\tiny{\pm 0.0}}$ & $2/6$ \\
 & Lin-M-CBE (2) & $5.940_{\tiny{\pm 0.058}}$ & $69.892_{\tiny{\pm 1.141}}$ & $16.0_{\tiny{\pm 0.0}}$ & $6.0_{\tiny{\pm 0.0}}$ & $38.0_{\tiny{\pm 0.0}}$ & $4.0_{\tiny{\pm 0.0}}$ & $6.0_{\tiny{\pm 0.0}}$ & $16.0_{\tiny{\pm 0.0}}$ & $0/6$ \\
 & Lin-M-CBE (3) & $3.499_{\tiny{\pm 0.074}}$ & $34.529_{\tiny{\pm 1.378}}$ & $24.0_{\tiny{\pm 0.0}}$ & $9.0_{\tiny{\pm 0.0}}$ & $57.0_{\tiny{\pm 0.0}}$ & $6.0_{\tiny{\pm 0.0}}$ & $9.0_{\tiny{\pm 0.0}}$ & $24.0_{\tiny{\pm 0.0}}$ & $0/6$ \\
 & Lin-M-CBE (4) & $2.739_{\tiny{\pm 0.066}}$ & $23.332_{\tiny{\pm 1.277}}$ & $32.0_{\tiny{\pm 0.0}}$ & $12.0_{\tiny{\pm 0.0}}$ & $76.0_{\tiny{\pm 0.0}}$ & $8.0_{\tiny{\pm 0.0}}$ & $12.0_{\tiny{\pm 0.0}}$ & $32.0_{\tiny{\pm 0.0}}$ & $0/6$ \\
 & Lin-M-CBE (5) & $2.524_{\tiny{\pm 0.050}}$ & $19.624_{\tiny{\pm 0.736}}$ & $40.0_{\tiny{\pm 0.0}}$ & $15.0_{\tiny{\pm 0.0}}$ & $95.0_{\tiny{\pm 0.0}}$ & $10.0_{\tiny{\pm 0.0}}$ & $15.0_{\tiny{\pm 0.0}}$ & $40.0_{\tiny{\pm 0.0}}$ & $5/6$ \\
 & Sym-M-CBE (1) & $8.575_{\tiny{\pm 0.092}}$ & $173.141_{\tiny{\pm 2.915}}$ & $3.0_{\tiny{\pm 0.0}}$ & $2.0_{\tiny{\pm 0.0}}$ & $5.0_{\tiny{\pm 0.0}}$ & $2.0_{\tiny{\pm 0.0}}$ & $1.0_{\tiny{\pm 0.0}}$ & $3.0_{\tiny{\pm 0.0}}$ & $5/6$ \\
 & Sym-M-CBE (2) & $5.132_{\tiny{\pm 0.183}}$ & $53.198_{\tiny{\pm 6.903}}$ & $6.0_{\tiny{\pm 0.0}}$ & $4.0_{\tiny{\pm 0.0}}$ & $10.0_{\tiny{\pm 0.0}}$ & $3.0_{\tiny{\pm 0.0}}$ & $2.0_{\tiny{\pm 0.0}}$ & $6.0_{\tiny{\pm 0.0}}$ & $6/6$ \\
 & Sym-M-CBE (3) & $2.948_{\tiny{\pm 0.147}}$ & $24.491_{\tiny{\pm 2.150}}$ & $11.0_{\tiny{\pm 0.0}}$ & $7.0_{\tiny{\pm 0.0}}$ & $21.0_{\tiny{\pm 0.0}}$ & $6.0_{\tiny{\pm 0.0}}$ & $4.0_{\tiny{\pm 0.0}}$ & $11.0_{\tiny{\pm 0.0}}$ & $5/6$ \\
 & Sym-M-CBE (4) & $2.847_{\tiny{\pm 0.108}}$ & $63.271_{\tiny{\pm 78.301}}$ & $12.2_{\tiny{\pm 2.5}}$ & $7.8_{\tiny{\pm 1.5}}$ & $23.8_{\tiny{\pm 5.5}}$ & $6.5_{\tiny{\pm 1.0}}$ & $4.5_{\tiny{\pm 1.0}}$ & $12.5_{\tiny{\pm 3.0}}$ & $5/6$ \\
 & Sym-M-CBE (5) & $3.218_{\tiny{\pm 0.576}}$ & $7304.442_{\tiny{\pm 8578.163}}$ & $16.0_{\tiny{\pm 0.0}}$ & $10.0_{\tiny{\pm 0.0}}$ & $32.0_{\tiny{\pm 0.0}}$ & $8.0_{\tiny{\pm 0.0}}$ & $6.0_{\tiny{\pm 0.0}}$ & $17.0_{\tiny{\pm 0.0}}$ & $0/6$ \\
\midrule
MAWPS & BlackBox & $0.787_{\tiny{\pm 0.044}}$ & $1.292_{\tiny{\pm 0.142}}$ & -- & -- & -- & -- & -- & -- & $0/6$ \\
 & CEM & $0.756_{\tiny{\pm 0.017}}$ & $1.149_{\tiny{\pm 0.057}}$ & -- & -- & -- & -- & -- & -- & $0/6$ \\
 & LICEM & $0.768_{\tiny{\pm 0.031}}$ & $1.195_{\tiny{\pm 0.091}}$ & -- & -- & -- & -- & -- & -- & $0/6$ \\
 & MLP-M-CBE (1) & $4.968_{\tiny{\pm 0.119}}$ & $46.966_{\tiny{\pm 1.458}}$ & $60.8_{\tiny{\pm 23.0}}$ & $6.0_{\tiny{\pm 0.0}}$ & $288.6_{\tiny{\pm 111.7}}$ & $3.0_{\tiny{\pm 0.0}}$ & $26.2_{\tiny{\pm 9.9}}$ & $65.0_{\tiny{\pm 24.6}}$ & $1/6$ \\
 & MLP-M-CBE (2) & $3.451_{\tiny{\pm 0.222}}$ & $22.871_{\tiny{\pm 2.562}}$ & $138.4_{\tiny{\pm 46.0}}$ & $12.0_{\tiny{\pm 0.0}}$ & $658.8_{\tiny{\pm 223.5}}$ & $6.0_{\tiny{\pm 0.0}}$ & $59.6_{\tiny{\pm 19.7}}$ & $148.0_{\tiny{\pm 49.3}}$ & $1/6$ \\
 & MLP-M-CBE (3) & $2.460_{\tiny{\pm 0.298}}$ & $12.382_{\tiny{\pm 3.186}}$ & $207.6_{\tiny{\pm 69.0}}$ & $18.0_{\tiny{\pm 0.0}}$ & $988.2_{\tiny{\pm 335.2}}$ & $9.0_{\tiny{\pm 0.0}}$ & $89.4_{\tiny{\pm 29.6}}$ & $222.0_{\tiny{\pm 73.9}}$ & $0/6$ \\
 & MLP-M-CBE (4) & $2.164_{\tiny{\pm 0.250}}$ & $9.330_{\tiny{\pm 1.999}}$ & $260.0_{\tiny{\pm 97.0}}$ & $24.0_{\tiny{\pm 0.0}}$ & $1236.0_{\tiny{\pm 471.1}}$ & $12.0_{\tiny{\pm 0.0}}$ & $112.0_{\tiny{\pm 41.6}}$ & $278.0_{\tiny{\pm 103.9}}$ & $0/6$ \\
 & MLP-M-CBE (5) & $1.925_{\tiny{\pm 0.352}}$ & $7.607_{\tiny{\pm 2.785}}$ & $325.0_{\tiny{\pm 121.2}}$ & $30.0_{\tiny{\pm 0.0}}$ & $1545.0_{\tiny{\pm 588.9}}$ & $15.0_{\tiny{\pm 0.0}}$ & $140.0_{\tiny{\pm 52.0}}$ & $347.5_{\tiny{\pm 129.9}}$ & $0/6$ \\
 & Prior-M-CBE (4) & $0.996_{\tiny{\pm 0.010}}$ & $2.121_{\tiny{\pm 0.064}}$ & $24.0_{\tiny{\pm 0.0}}$ & $14.0_{\tiny{\pm 0.0}}$ & $60.0_{\tiny{\pm 0.0}}$ & $12.0_{\tiny{\pm 0.0}}$ & $10.0_{\tiny{\pm 0.0}}$ & $24.0_{\tiny{\pm 0.0}}$ & $6/6$ \\
 & Lin-M-CBE (1) & $5.433_{\tiny{\pm 0.090}}$ & $58.836_{\tiny{\pm 1.768}}$ & $11.0_{\tiny{\pm 0.0}}$ & $3.0_{\tiny{\pm 0.0}}$ & $27.0_{\tiny{\pm 0.0}}$ & $3.0_{\tiny{\pm 0.0}}$ & $4.0_{\tiny{\pm 0.0}}$ & $11.0_{\tiny{\pm 0.0}}$ & $0/6$ \\
 & Lin-M-CBE (2) & $4.154_{\tiny{\pm 0.040}}$ & $30.719_{\tiny{\pm 0.703}}$ & $22.0_{\tiny{\pm 0.0}}$ & $6.0_{\tiny{\pm 0.0}}$ & $54.0_{\tiny{\pm 0.0}}$ & $6.0_{\tiny{\pm 0.0}}$ & $8.0_{\tiny{\pm 0.0}}$ & $22.0_{\tiny{\pm 0.0}}$ & $0/6$ \\
 & Lin-M-CBE (3) & $2.981_{\tiny{\pm 0.045}}$ & $17.304_{\tiny{\pm 0.532}}$ & $33.0_{\tiny{\pm 0.0}}$ & $9.0_{\tiny{\pm 0.0}}$ & $81.0_{\tiny{\pm 0.0}}$ & $9.0_{\tiny{\pm 0.0}}$ & $12.0_{\tiny{\pm 0.0}}$ & $33.0_{\tiny{\pm 0.0}}$ & $1/6$ \\
 & Lin-M-CBE (4) & $2.635_{\tiny{\pm 0.155}}$ & $13.376_{\tiny{\pm 1.387}}$ & $44.0_{\tiny{\pm 0.0}}$ & $12.0_{\tiny{\pm 0.0}}$ & $108.0_{\tiny{\pm 0.0}}$ & $12.0_{\tiny{\pm 0.0}}$ & $16.0_{\tiny{\pm 0.0}}$ & $44.0_{\tiny{\pm 0.0}}$ & $0/6$ \\
 & Lin-M-CBE (5) & $2.491_{\tiny{\pm 0.278}}$ & $12.036_{\tiny{\pm 2.217}}$ & $55.0_{\tiny{\pm 0.0}}$ & $15.0_{\tiny{\pm 0.0}}$ & $135.0_{\tiny{\pm 0.0}}$ & $15.0_{\tiny{\pm 0.0}}$ & $20.0_{\tiny{\pm 0.0}}$ & $55.0_{\tiny{\pm 0.0}}$ & $0/6$ \\
 & Sym-M-CBE (1) & $5.123_{\tiny{\pm 0.202}}$ & $51.517_{\tiny{\pm 3.897}}$ & $4.4_{\tiny{\pm 0.9}}$ & $3.0_{\tiny{\pm 0.0}}$ & $9.2_{\tiny{\pm 2.7}}$ & $2.2_{\tiny{\pm 0.4}}$ & $2.2_{\tiny{\pm 0.4}}$ & $5.4_{\tiny{\pm 0.9}}$ & $6/6$ \\
 & Sym-M-CBE (2) & $3.572_{\tiny{\pm 0.166}}$ & $24.606_{\tiny{\pm 1.502}}$ & $9.8_{\tiny{\pm 2.2}}$ & $5.6_{\tiny{\pm 0.9}}$ & $22.6_{\tiny{\pm 7.1}}$ & $4.8_{\tiny{\pm 0.4}}$ & $3.6_{\tiny{\pm 0.9}}$ & $9.8_{\tiny{\pm 2.2}}$ & $6/6$ \\
 & Sym-M-CBE (3) & $2.025_{\tiny{\pm 0.253}}$ & $9.277_{\tiny{\pm 2.933}}$ & $18.6_{\tiny{\pm 0.9}}$ & $10.8_{\tiny{\pm 0.4}}$ & $47.8_{\tiny{\pm 2.7}}$ & $8.8_{\tiny{\pm 0.4}}$ & $7.8_{\tiny{\pm 0.4}}$ & $18.6_{\tiny{\pm 0.9}}$ & $6/6$ \\
 & Sym-M-CBE (4) & $1.397_{\tiny{\pm 0.154}}$ & $4.173_{\tiny{\pm 0.959}}$ & $24.0_{\tiny{\pm 0.0}}$ & $14.0_{\tiny{\pm 0.0}}$ & $60.0_{\tiny{\pm 0.0}}$ & $12.0_{\tiny{\pm 0.0}}$ & $10.0_{\tiny{\pm 0.0}}$ & $24.0_{\tiny{\pm 0.0}}$ & $0/6$ \\
 & Sym-M-CBE (5) & $1.528_{\tiny{\pm 0.366}}$ & $5.107_{\tiny{\pm 2.640}}$ & $25.2_{\tiny{\pm 2.5}}$ & $14.2_{\tiny{\pm 0.5}}$ & $65.8_{\tiny{\pm 11.5}}$ & $12.0_{\tiny{\pm 0.0}}$ & $10.5_{\tiny{\pm 1.0}}$ & $25.2_{\tiny{\pm 2.5}}$ & $0/6$ \\
\bottomrule
\end{tabular}
}
\end{table}

\subsection{Concept accuracy}

In this subsection, we report the concept prediction performance for all methods across the different datasets (Table~\ref{tab:concept_metrics}). Specifically, we show the concept accuracy for datasets having binary concepts, and MAE and MSE for datasets having continuous concepts.

\begin{table*}[t]
\centering
\caption{Concept accuracy across methods and datasets.}
\label{tab:concept_metrics}
\resizebox{\columnwidth}{!}{%
\begin{tabular}{lccccccccc}
\toprule
Model & AWA2 & AWA2-Incomplete & CUB200 & CUB200-Incomplete & CIFAR10 & dSprites-Exp. & Pendulum & MNIST-Arith. & MAWPS \\
 & (Accuracy) & (Accuracy) & (Accuracy) & (Accuracy) & (Accuracy) & (MAE) & (MAE) & (MAE) & (MAE) \\
\midrule
CEM & $99.54 \scriptstyle{\pm 0.01}$ & $99.22 \scriptstyle{\pm 0.02}$ & $95.70 \scriptstyle{\pm 0.38}$ & $95.50 \scriptstyle{\pm 0.24}$ & $79.29 \scriptstyle{\pm 0.09}$ & $0.1235 \scriptstyle{\pm 0.0042}$ & $0.1568 \scriptstyle{\pm 0.0193}$ & $0.6370 \scriptstyle{\pm 0.0499}$ & $0.1953 \scriptstyle{\pm 0.0039}$ \\
LICEM & $99.59 \scriptstyle{\pm 0.01}$ & $99.30 \scriptstyle{\pm 0.03}$ & $94.26 \scriptstyle{\pm 0.51}$ & $95.31 \scriptstyle{\pm 0.55}$ & $79.07 \scriptstyle{\pm 0.12}$ & $0.1264 \scriptstyle{\pm 0.0008}$ & $0.1343 \scriptstyle{\pm 0.0302}$ & $0.5789 \scriptstyle{\pm 0.0358}$ & $0.2005 \scriptstyle{\pm 0.0084}$ \\
DCR & $99.52 \scriptstyle{\pm 0.06}$ & $99.25 \scriptstyle{\pm 0.01}$ & $96.49 \scriptstyle{\pm 0.11}$ & $95.91 \scriptstyle{\pm 0.09}$ & $79.33 \scriptstyle{\pm 0.31}$ & -- & -- & -- & -- \\
CMR & $99.54 \scriptstyle{\pm 0.00}$ & $99.25 \scriptstyle{\pm 0.01}$ & $95.14 \scriptstyle{\pm 0.02}$ & $95.80 \scriptstyle{\pm 0.02}$ & $78.53 \scriptstyle{\pm 0.11}$ & -- & -- & -- & -- \\
MLP-M-CBE & $99.54 \scriptstyle{\pm 0.00}$ & $99.26 \scriptstyle{\pm 0.01}$ & $95.25 \scriptstyle{\pm 0.02}$ & $95.97 \scriptstyle{\pm 0.03}$ & $79.06 \scriptstyle{\pm 0.02}$ & $0.1230 \scriptstyle{\pm 0.0021}$ & $0.0733 \scriptstyle{\pm 0.0022}$ & $0.9157 \scriptstyle{\pm 0.0071}$ & $0.2554 \scriptstyle{\pm 0.0041}$ \\
Prior-M-CBE & -- & -- & -- & -- & -- & $0.1231 \scriptstyle{\pm 0.0013}$ & $0.0652 \scriptstyle{\pm 0.0007}$ & $0.9150 \scriptstyle{\pm 0.0116}$ & $0.2523 \scriptstyle{\pm 0.0028}$ \\
Lin-M-CBE & $99.54 \scriptstyle{\pm 0.00}$ & $99.26 \scriptstyle{\pm 0.01}$ & $95.24 \scriptstyle{\pm 0.02}$ & $95.87 \scriptstyle{\pm 0.02}$ & $79.09 \scriptstyle{\pm 0.03}$ & $0.1235 \scriptstyle{\pm 0.0020}$ & $0.0753 \scriptstyle{\pm 0.0033}$ & $0.9266 \scriptstyle{\pm 0.0083}$ & $0.2558 \scriptstyle{\pm 0.0045}$ \\
Sym-M-CBE & $99.52 \scriptstyle{\pm 0.00}$ & $99.18 \scriptstyle{\pm 0.02}$ & $95.30 \scriptstyle{\pm 0.03}$ & $95.51 \scriptstyle{\pm 0.27}$ & $78.84 \scriptstyle{\pm 0.03}$ & $0.1456 \scriptstyle{\pm 0.0034}$ & $0.0728 \scriptstyle{\pm 0.0057}$ & $1.0192 \scriptstyle{\pm 0.0342}$ & $0.3528 \scriptstyle{\pm 0.0238}$ \\
\bottomrule
\end{tabular}
}
\end{table*}

\end{document}